\documentclass[letterpaper]{article} 
\usepackage{aaai2026}  
\usepackage{times}  
\usepackage{helvet}  
\usepackage{courier}  
\usepackage[hyphens]{url}  
\usepackage{graphicx} 
\urlstyle{rm} 
\usepackage{natbib}  
\usepackage{caption} 
\frenchspacing  
\setlength{\pdfpagewidth}{8.5in}  
\setlength{\pdfpageheight}{11in}  

\usepackage[utf8]{inputenc}
\usepackage{microtype}
\usepackage{siunitx}

\usepackage{booktabs}
\usepackage{multirow}
\usepackage{subcaption}
\usepackage{makecell}
\usepackage{adjustbox}
\usepackage{arydshln}

\usepackage{amsmath}   
\usepackage{amssymb}   
\usepackage{amsfonts}  
\usepackage{amsthm}    
\usepackage{nicefrac}  

\usepackage{algorithm}
\usepackage{algpseudocode}
\usepackage{tabularx}


 
\theoremstyle{plain}
\newtheorem{theorem}{Theorem}[section]
\newtheorem{proposition}[theorem]{Proposition}

\usepackage{amsmath}   
\usepackage{amssymb}   
\usepackage{amsfonts}  
\usepackage{amsthm}    
\usepackage{nicefrac}  

\usepackage{algorithm}
\usepackage{algpseudocode}
\usepackage{tabularx}

\DeclareMathOperator*{\argmin}{arg\,min}

\DeclareMathOperator{\sigmoid}{sigmoid}

\DeclareMathOperator{\RevIN}{RevIN}
\DeclareMathOperator{\InverseRevIN}{InverseRevIN}
\DeclareMathOperator{\EMADecomposition}{EMADecomposition}
\DeclareMathOperator{\TrendMLP}{TrendMLP}
\DeclareMathOperator{\EmbedOp}{Embed}
\DeclareMathOperator{\rfftOp}{rfft}
\DeclareMathOperator{\FreMLP}{FreMLP}
\DeclareMathOperator{\irfftOp}{irfft}
\DeclareMathOperator{\StrongSeasonalMLP}{StrongSeasonalMLP}
\DeclareMathOperator{\WeakSeasonalMLP}{WeakSeasonalMLP}
\DeclareMathOperator{\DimPermute}{DimPermute}
\DeclareMathOperator{\Concat}{Concat}
\DeclareMathOperator{\GeLUOp}{GeLU}
\DeclareMathOperator{\SigmoidOp}{Sigmoid}
\DeclareMathOperator{\DropoutOp}{Dropout}
\DeclareMathOperator{\BroadcastOp}{Broadcast}

%
\usepackage{newfloat}
\usepackage{listings}
\DeclareCaptionStyle{ruled}{labelfont=normalfont,labelsep=colon,strut=off} 
\lstset{%
	basicstyle={\footnotesize\ttfamily},
	numbers=left,numberstyle=\footnotesize,xleftmargin=2em,
	aboveskip=0pt,belowskip=0pt,%
	showstringspaces=false,tabsize=2,breaklines=true}
\floatstyle{ruled}
\newfloat{listing}{tb}{lst}{}
\floatname{listing}{Listing}
%
\pdfinfo{
/TemplateVersion (2026.1)
}

\setcounter{secnumdepth}{0} 

%


\title{MDMLP-EIA: Multi-domain Dynamic MLPs with Energy Invariant Attention for Time Series Forecasting}
\author{
    Hu Zhang \textsuperscript{\rm 1,\rm 2},
    Zhien Dai \textsuperscript{\rm 3}\thanks{Corresponding author},
    Zhaohui Tang \textsuperscript{\rm 3},
    Yongfang Xie \textsuperscript{\rm 3}
}
\affiliations{
    \textsuperscript{\rm 1}College of Computer Science and Engineering, Changsha University, China \\ 
    \textsuperscript{\rm 2}School of Mathematics and Statistics, Central South University, China\\ 
    \textsuperscript{\rm 3}School of Automation, Central South University, China\\ 
    zhanghu@csu.edu.cn,  zhiendai@csu.edu.cn, zhtang@csu.edu.cn, yfxie@csu.edu.cn

}

\usepackage{bibentry}

\begin{document}

\maketitle

\begin{abstract}
Time series forecasting is essential across diverse domains. While MLP-based methods have gained attention for achieving Transformer-comparable performance with fewer parameters and better robustness, they face critical limitations including loss of weak seasonal signals, capacity constraints in weight-sharing MLPs, and insufficient channel fusion in channel-independent strategies.
To address these challenges, we propose MDMLP-EIA (Multi-domain Dynamic MLPs with Energy Invariant Attention) with three key innovations.
First, we develop an adaptive fused dual-domain seasonal MLP that categorizes seasonal signals into strong and weak components. It employs an adaptive zero-initialized channel fusion strategy to minimize noise interference while effectively integrating predictions.
Second, we introduce an energy invariant attention mechanism that adaptively focuses on different feature channels within trend and seasonal predictions across time steps. This mechanism maintains constant total signal energy to align with the decomposition-prediction-reconstruction framework and enhance robustness against disturbances.
Third, we propose a dynamic capacity adjustment mechanism for channel-independent MLPs. This mechanism scales neuron count with the square root of channel count, ensuring sufficient capacity as channels increase.
Extensive experiments across nine benchmark datasets demonstrate that MDMLP-EIA achieves state-of-the-art performance in both prediction accuracy and computational efficiency.
\end{abstract}

\begin{links}
    \link{Code}{https://github.com/zh1985csuccsu/MDMLP-EIA}
\end{links}

\section{Introduction}

Time series forecasting (TSF) has extensive applications in air quality monitoring \cite{zheng2015forecasting}, traffic flow prediction \cite{yin2021deep}, energy management \cite{amasyali2018review}, economic analysis \cite{zhang2017stock}, and industrial indicator forecasting \cite{zhang2025multihorizon}. In recent years, deep learning methods for time series prediction have received widespread research attention \cite{shao2024exploring}. These methods include RNN-based approaches (e.g., LSTNet\cite{lai2018modeling}, Witran \cite{jia2023witran}, DAN \cite{li2024learning}), CNN-based methods (e.g., SciNet \cite{liu2022scinet}, TimesNet \cite{wu2022timesnet}, Moderntcn \cite{luo2024moderntcn}), GNN-based methods (e.g., StemGNN \cite{cao2020spectral}, FourierGNN \cite{yi2023fouriergnn}, Msgnet \cite{cai2024msgnet}), and Transformer-based methods (e.g., Informer \cite{zhou2021informer}, PatchTST \cite{nie2022time}, Crossformer \cite{zhang2023crossformer}, Ada-MSHyper \cite{shang2024ada}).

\begin{figure*}[!htb]
    \centering
    \includegraphics[width=0.70\linewidth]{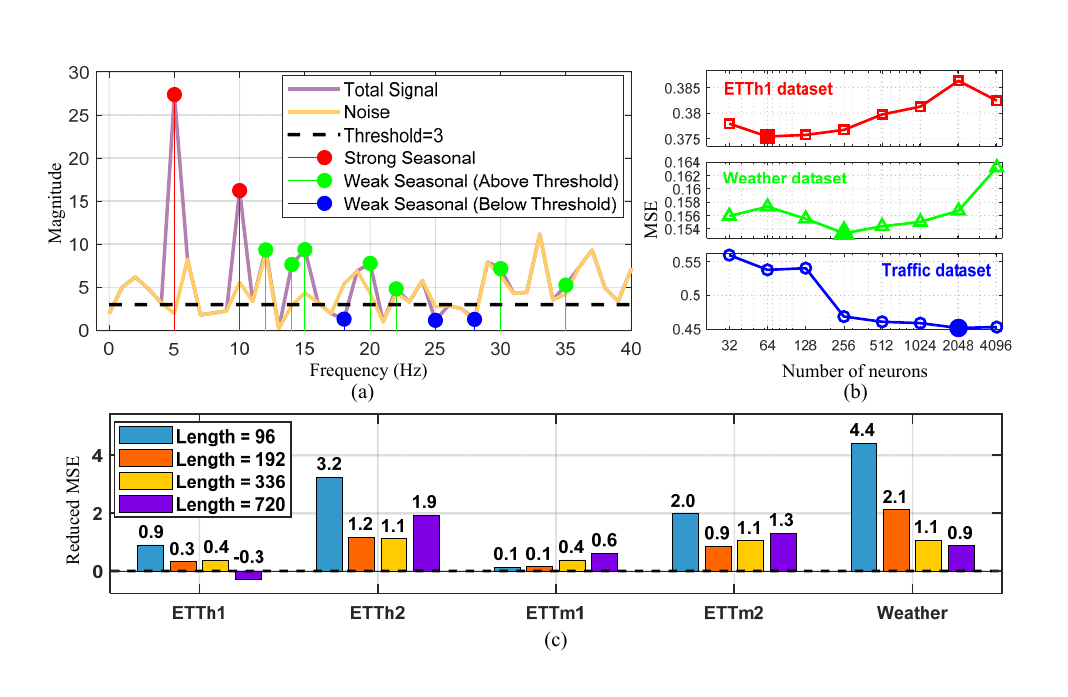}
    \caption{Limitations of current MLP-based methods.
    (a) Loss of weak seasonal signals: Weak seasonal signals closely resemble noise signals and are difficult to distinguish from them. When applying frequency domain amplitude restrictions to reduce noise, some weak seasonal signals are inevitably lost.
    (b) Different capacity requirements of channel-independent MLPs for prediction tasks with varying numbers of channels: For prediction tasks with larger numbers of channels (such as Traffic), MLPs require greater capacity to meet the predictive demands of each channel. Conversely, for prediction tasks with fewer channels (such as ETTh1), MLPs require smaller capacity to prevent model overfitting.
    (c) Reduced Mean Squared Errors (MSEs) achieved by the proposed energy invariant attention at varied prediction lengths in some datasets.
    }
    \label{fig:Limitation}
\end{figure*}

These deep learning models achieve good prediction results in certain scenarios, but typically suffer from complex structures, slow training/inference speeds, and reduced robustness due to numerous parameters, particularly with limited training data \cite{zhou2021informer}. DLinear \cite{zeng2023transformers} addresses these issues by introducing a simple single-layer linear model that achieves Transformer-comparable prediction accuracy with fewer parameters. This success has sparked extensive research into MLP-based time series prediction methods, including FreTS \cite{yi2023frequency}, CycleNet \cite{lin2024cyclenet}, FilterNet \cite{yi2024filternet}, and Amplifier \cite{fei2025amplifier}. These methods enhance temporal feature representation and improve MLP performance through trend-seasonal decomposition and frequency domain signal processing.
However, several limitations still remain:
\textbf{\textit{1) Loss of weak seasonal signal}}: 
Softshrink is commonly employed to reduce noise in the frequency domain \cite{yi2023frequency}. However, this approach may inadvertently remove weak seasonal signals that have low energy but contain valuable cyclical information for time series prediction (see Figure \ref{fig:Limitation}(a)). Although Amplifier \cite{fei2025amplifier} attempts to address the loss of weak seasonal signals by transferring high-energy components from low-frequency to high-frequency regions through spectrum flipping, it simultaneously amplifies unwanted noise along with useful weak seasonal signals.
\textbf{\textit{2) Capacity limitation of weight-sharing MLPs and insufficient channel fusion for channel-independent strategy}}: 
The channel-independent strategy employs weight sharing to merge values from different time steps across each channel dimension, and it is considered as a robust prediction strategy with fewer model parameters \cite{han2024capacity}. As the channel-independent MLPs need to handle temporal fusion of features across multiple different channels, their capacity is closely related to the fusion performance. However, current channel-independent MLPs usually use fixed capacity, which limits their performance when processing tasks with varying numbers of channels, as shown in Figure \ref{fig:Limitation}(b). 
Furthermore, the channel-independent strategy may result in insufficient channel fusion for TSF, particularly for popular forecasting methods via seasonal-trend decomposition \cite{fei2025amplifier}\cite{wang2024timemixer}\cite{stitsyuk2025xpatch}\cite{wu2021autoformer}\cite{zhou2022fedformer}. 
These methods directly sum trend and seasonal predictions to generate final outputs, potentially limiting the model's ability to represent different channels across time steps, as shown in Figure \ref{fig:Limitation}(c).

To address these issues, we propose MDMLP-EIA with the following main contributions: 

1) To effectively capture weak seasonal signals while minimizing noise interference, we design an \textit{\textbf{adaptive fused dual-domain seasonal MLP}}. We partition seasonal signals into strong and weak components, processing them through frequency-domain MLP and standard MLP respectively. Especially, since both noise and weak seasonal signals exhibit similar low energy characteristics in the frequency domain, we develop an \textit{\textbf{Adaptive Zero-initialized Channel Fusion (AZCF) strategy}} that selectively integrates strong and weak seasonal predictions while suppressing noise contamination. 
The idea is different with the spectrum flipping in Amplifier \cite{fei2025amplifier}, which transfers high-energy components from low-frequency to high-frequency regions and unavoidably amplifies unwanted noise along with useful weak seasonal signals. 

2) We propose a \textbf{\textit{dynamic capacity adjustment (DCA) mechanism}} for channel-independent MLPs. 
This mechanism adjusts neuron count based on the square root of channel numbers, ensuring adequate MLP capacity scaling. For small channel counts, minimal inter-channel redundancy requires rapid capacity growth; for large channel counts, increased redundancy permits slower growth, preventing parameter explosion and overfitting.

3) To address insufficient channel fusion, we introduce an \textit{\textbf{energy invariant attention mechanism (EIA)}} that adaptively weights trend and seasonal predictions while maintaining theoretical consistency with decomposition frameworks. Unlike conventional weighted fusion that may suffer from energy inconsistency and signal distortion, EIA preserves total signal energy, ensuring stable fusion and enhanced robustness.

4) Nine benchmark experiments achieve state-of-the-art accuracy and efficiency.

\section{Related works}
\paragraph{MLP-based forecasting methods} N-BEATS \cite{oreshkin2019n} stacks multiple fully connected layers with forward and backward residuals for univariate TSF. 
LightTS \cite{zhang2207less} applies an MLP-based structure on top of two delicate down-sampling strategies for fast TSF. 
DEPTS \cite{fan2022depts} employs local and seasonal MLP modules to capture local trends and global periodicity in time series, respectively.
NHITS \cite{challu2023nhits} introduces multi-rate data sampling and hierarchical interpolation to focus on time series components of varying frequencies.
TimeMixer \cite{chen2023tsmixer} uses feature-dimension and time-dimension weight-sharing MLPs for efficient time series prediction with minimal parameters.
DLinear \cite{zeng2023transformers} designs a simple single-layer linear model to establish dynamic relationships between input and output time series.
SOFTS \cite{han2024softs} employs a centralized structure to extract global key representations distributed to individual channels, effectively capturing channel correlations.

\paragraph{Forecasting in the frequency domain} FITS \cite{xu2023fits} introduces a low-pass filter and implements MLP interpolation in the frequency domain for efficient time series prediction. 
FilterNet \cite{yi2024filternet} designs plain and contextual shaping filters to capture key temporal patterns in time series.
CycleNet \cite{lin2024cyclenet} proposes a residual cycle forecasting technique that iteratively extracts periodic patterns from time series.
Peri-midFormer \cite{wu2024peri} decomposes time series by cycle and utilizes a pyramid structure to merge and predict signals of different periodicities.
SparseTSF \cite{lin2024sparsetsf} simplifies time series prediction to cross-period trend prediction by decomposing series into sub-sequences through fixed-period downsampling.
FreTS \cite{yi2023frequency} uses a frequency-domain MLP with dual frequency learners across channel and temporal dimensions for time series representation and prediction.

\paragraph{Forecasting via seasonal-trend decomposition} 
Autoformer \cite{wu2021autoformer} replaces self-attention with auto-correlation for trend-seasonal decomposition.
FEDformer \cite{zhou2022fedformer} utilizes frequency enhanced blocks and attention to improve time series representation in the frequency domain.
TimeMixer \cite{wang2024timemixer} applies multiscale downsampling to time series signals followed by trend-seasonal decomposition and prediction for each downsampled series.
Amplifier \cite{fei2025amplifier} enhances low-energy components through spectrum flipping during frequency-domain processing. 
xPatch \cite{stitsyuk2025xpatch} utilizes patching and channel-independence techniques with Exponential Moving Average (EMA) for seasonal-trend decomposition in time series prediction.

\section{MDMLP-EIA}
\label{headings}
\begin{figure*}[!htb]
    \centering
    \includegraphics[width=1.0\linewidth]{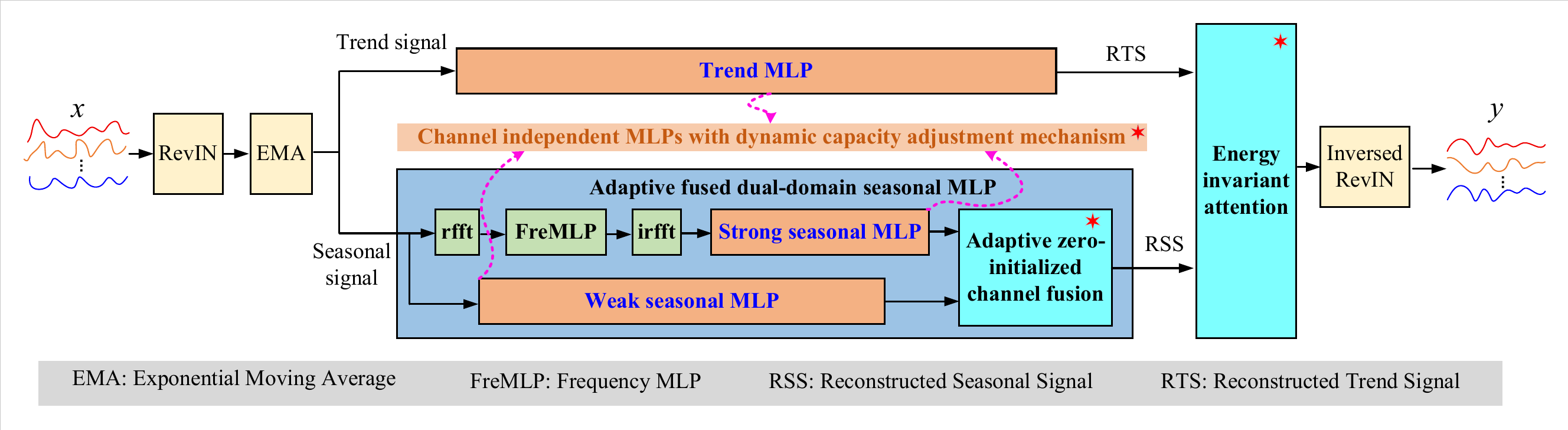}
    \caption{MDMLP-EIA overall architecture. 
    (i) RevIN normalization and EMA decompose input series into trend and seasonal components;   
    (ii) Trend component feeds into trend MLP; seasonal component processes through our \textbf{adaptive fused dual-domain seasonal MLP} with \textbf{adaptive zero-initialized channel fusion});
    (iii) Our \textbf{energy invariant attention module} merges trend and seasonal predictions; 
    (iv) Our \textbf{Dynamic capacity adjustment mechanism} optimizes multi-domain MLPs.
    }
    \label{fig:MDMLP-EIA}
\end{figure*}

Figure \ref{fig:MDMLP-EIA} illustrates our proposed MDMLP-EIA framework (\textit{Appendix A shows the overall pseudocode}).
Let \textit{L} denote input historical timesteps, \textit{Q} future timesteps, and \textit{C} channels per timestep. The input series $x \in \mathbb{R}^{L \times C}$ undergoes RevIN normalization, and EMA decomposes \textit{x} into trend component \textit{$x_1$} and seasonal component \textit{$x_2$}. 
Then, a trend MLP generates trend prediction \textit{$y_1$} from \textit{$x_1$} (\textit{See Appendix B for details}).
Meanwhile, our proposed  \textbf{\textit{1) adaptive fused dual-domain seasonal MLP}} produces seasonal prediction \textit{$y_2$} from \textit{$x_2$}. 
After that, an \textbf{\textit{2) energy-invariant attention}} module is designed to integrate \textit{$y_1$} and \textit{$y_2$}. 
Finally, the output series $y \in \mathbb{R}^{Q \times C}$ is obtained by inverse RevIN normalization. 
All the above multi-domain MLPs are channel-independent and adopt our proposed \textbf{\textit{3) DCA mechanism}}.

\subsection{Adaptive fused dual-domain seasonal MLP}
The adaptive fused dual-domain seasonal MLP uses two parallel networks: one processes strong seasonal signals, the other extracts weak seasonal patterns.

\subsubsection{Frequency MLP and strong seasonal MLP} 
Unlike xPatch \cite{stitsyuk2025xpatch} which uses CNN, we employ a frequency-temporal learner \cite{yi2023frequency} with embedding, real FFT, frequency MLP, and inverse real FFT to reconstruct features with strong seasonal patterns \textit{$f_{s1}\in \mathbb{R}^{C \times L\times E}$}. A strong seasonal MLP then learns predictions  $y_{21} \in \mathbb{R}^{Q \times C}$ from $f_{s1}$. \textit{See Appendix C for details}.

\subsubsection{Weak seasonal MLP} 
The frequency-temporal learner \cite{yi2023frequency} uses softshrink processing to reduce noise but inevitably loses weak seasonal signals with low amplitude. Therefore, we construct a weak seasonal MLP (\textit{detailed in Appendix D}) to extract weak seasonal predictions $y_{22} \in \mathbb{R}^{Q \times C}$ from seasonal signal $x_{2} \in \mathbb{R}^{C \times L}$.

\subsubsection{AZCF mechanism}
\label{sec:azicf}
Existing time-series signal fusion methods typically use attention mechanisms to weight signals across channel and temporal dimensions. Given strong seasonal prediction $y_{21}$ and weak seasonal prediction $y_{22}$, the complete seasonal prediction $y_2$ is calculated as
\begin{equation}
y_2 = \alpha_1 \odot y_{21} + \alpha_2 \odot y_{22}  ,
\label{eq:attn strong and weak}
\end{equation}
where $\alpha_1$, $\alpha_2$ $\in \mathbb{R}^{Q \times C}$ are obtained indirectly through Query, Key, and Value transformations. However, this indirect approach introduces unnecessary complexity and neglects the significant signal-to-noise ratio differences between strong and weak seasonal predictions.
Thus, we propose AZCF mechanism with three key characteristics:

\textbf{Single-parameter fusion.} We introduce an adaptive weight coefficient $\alpha$ to modulate the weak seasonal prediction's contribution:
\begin{equation}
y_2 = y_{21} + \alpha \odot y_{22},
\label{eq:alpha strong and weak}
\end{equation}
where $\alpha$ $\in \mathbb{R}^{Q \times C}$. 
This single-parameter fusion first ensures strong signal quality, then selectively enhances weak signals while reducing model parameters.

\textbf{Channel-dimension fusion.} Since both $y_{21}$ and $y_{22}$ are obtained through channel-independent methods, the fusion strategy should respect this separation by making independent decisions along the channel dimension, learning which features have more reliable weak seasonal signals. Fusion along the time dimension would disrupt seasonal patterns' temporal coherence and may propagate noise across channels. Therefore, we define:
\begin{equation}
\alpha=\{\alpha_1, \alpha_2,..., \alpha_C\}\in\mathbb{R}^{1\times C}
\end{equation}
and expand it to $\alpha \in \mathbb{R}^{Q \times C}$ by broadcasting across the output time step dimension:
\begin{equation}
\alpha[q,c] = \alpha[c], \quad \forall q \in \{1,2,...,Q\}.
\end{equation}

\textbf{Zero initialization.} We initialize $\alpha$ as a zero vector, starting from the most reliable strong seasonal prediction:
\begin{equation}
\alpha_{\text{init}} = \mathbf{0} \in \mathbb{R}^{1\times C}.
\end{equation}
This gives us the initial prediction:
\begin{equation}
y_2^{\text{init}} = y_{21} + \alpha_{\text{init}} \odot y_{22} = y_{21}.
\label{eq:zero init}
\end{equation}
Equations \eqref{eq:alpha strong and weak} and \eqref{eq:zero init} together essentially constitute a \textit{\textbf{progressive learning strategy}}. This strategy starts from a reliable foundation and allows $\alpha$ to gradually increase only when weak seasonal predictions genuinely improve results, preventing over-reliance on potentially noisy signals during early training.
Compared to conventional attention mechanisms, our method requires only $C$ parameters versus significantly more  (typically $O(C^2)$ for Query, Key, Value transformations). With $C$ ranging from tens to hundreds, this represents substantial reduction, making our model more lightweight and reducing overfitting risk.
\textit{See Appendix E for theoretical proof. }

\subsection{EIA mechanism}
Conventional decomposition-based methods typically generate the final prediction $y_{3}$ by directly summing the trend prediction $y_{1}$ and seasonal prediction $y_{2}$ ($y_3 = y_1 + y_2$). However, they may limit the model's adaptive capacity to handle varying temporal dynamics and feature importance across channels and time steps.

\subsubsection{Normalized attention fusion}
A possible solution is to introduce normalized attention for adaptive component fusion using weighted combination:
\begin{equation}
y_3 = \beta \odot y_1 + (1-\beta) \odot y_2, 
\label{eq:standard_weighted_fusion}
\end{equation}
where $\beta \in \mathbb{R}^{Q \times C}$ represents the learnable attention weight for $y_{1}$, and $\odot$ denotes element-wise multiplication.
Since $\beta + (1-\beta) = 1$ for any $\beta \in [0,1]$, $\beta \odot y_1 + (1-\beta) \odot y_2$ represents a convex combination providing enhanced expressiveness by adaptively balancing trend and seasonal contributions. However, this creates \textbf{energy inconsistency challenges}. While time series decomposition theory reconstructs signals through $x = x_{1} + x_{2}$ (preserving total magnitude), learned weights make signal magnitude variable and $\beta$-dependent. This will cause: 1) \textit{\textbf{amplification of reconstruction errors}} with extreme $\beta$ values, 2) \textit{\textbf{inconsistent signal scaling}} across time steps and channels, and \textit{\textbf{3) noise amplification}} from uncontrolled magnitude variations that violate decomposition framework guarantees.

\subsubsection{EIA}
To address these limitations while preserving normalized attention fusion capabilities, we propose \textit{EIA} that generates the fused prediction $y_3 \in \mathbb{R}^{Q \times C}$ as:
\begin{equation}
y_3 = 2 \times (\beta \odot y_1 + (1-\beta) \odot y_2),
\label{eq:energy_invariant_attention}
\end{equation}
where \textbf{$2\times$}  ensures energy invariance by compensating for implicit normalization in the convex combination, maintaining consistent signal magnitude regardless of $\beta$ values. 
The proposed EIA mechanism provides: 1) \textit{\textbf{Perfect reconstruction preservation}}: When $\beta = 0.5$, it reduces to $y_3 = y_1 + y_2$, matching traditional decomposition methods; 2) \textit{\textbf{Stable adaptive fusion}}: When $\beta \neq 0.5$, the model adaptively emphasizes components while maintaining constant energy, avoiding signal amplification/attenuation issues in standard weighted fusion. This \textit{\textbf{energy conservation property}} inherently provides noise suppression and acts as a natural regularizer: while allowing adaptive emphasis on different components, it prevents excessive amplification that could propagate noise. 
Meanwhile, the complementary weights ensure that information from both components is always preserved, creating robust fusion that is less sensitive to individual component errors. 
\textit{See Appendix F for detailed $\beta$ calculation and  theoretical proof. }

\subsection{DCA mechanism}
As shown in the middle of Figure \ref{fig:MDMLP-EIA}, the trend MLP, strong seasonal MLP, and weak seasonal MLP all employ a DCA mechanism that links the number of neurons \textit{n} in the MLP to the number of channels \textit{C} through a dynamic adjustment coefficient \textit{cof}:
\begin{equation}
\textit{cof} = \lfloor \sqrt{C}/\tau \rfloor,
\label{eq:Dynamic capacity adjustment mechanism}
\end{equation}
where $\lfloor \cdot \rfloor$ represents the ceiling operation, and $\tau$ is an adjustable coefficient (set to 5 in this paper). \textit{See Appendix G for detailed DCA mechanism.}

\subsection{Model loss and learning rate adjustment scheme}
We implement the arctangent loss function and sigmoid learning rate adjustment for time series prediction. Please refer details in xPatch \cite{stitsyuk2025xpatch}.

\begin{table*}[ht]
\scriptsize
\centering
\setlength\tabcolsep{1.5pt}
\begin{tabular}{c|cc|cc|cc|cc|cc|cc|cc|cc|cc|cc|cc|cc}
\hline
\multicolumn{1}{c}{Method} &
\multicolumn{2}{|c}{\shortstack{MDMLP-EIA \\ (\textbf{Ours})}} &
\multicolumn{2}{|c}{\shortstack{xPatch \\ (2025)}} &
\multicolumn{2}{|c}{\shortstack{Amplifier \\ (2025)}} &
\multicolumn{2}{|c}{\shortstack{TimeMixer \\ (2024)}} &
\multicolumn{2}{|c}{\shortstack{CycleNet \\ (2024)}} &
\multicolumn{2}{|c}{\shortstack{SOFTS \\ (2024)}} &
\multicolumn{2}{|c}{\shortstack{iTransformer \\ (2024)}} &
\multicolumn{2}{|c}{\shortstack{FilterNet \\ (2024)}} &
\multicolumn{2}{|c}{\shortstack{FITS \\ (2024)}} &
\multicolumn{2}{|c}{\shortstack{SparseTSF \\ (2024)}} &
\multicolumn{2}{|c}{\shortstack{FreTS \\ (2023)}} &
\multicolumn{2}{|c}{\shortstack{Dlinear \\ (2023)}} \\
\hline
\multicolumn{1}{c|}{Metric}
& MSE & MAE & MSE & MAE & MSE & MAE & MSE & MAE & MSE & MAE & MSE & MAE & MSE & MAE & MSE & MAE & MSE & MAE & MSE & MAE & MSE & MAE & MSE & MAE \\
\hline
ETTh1     &0.436 & \textbf{0.424} & 0.442 & 0.429 & 0.446 & \underline{0.426} & 0.449 & 0.428 & \textbf{0.432} & 0.427 & \underline{0.435} & 0.428 & 0.452 & 0.441 & 0.450 & 0.430 & 0.483 & 0.463 & 0.450 & \underline{0.426} & 0.471 & 0.455 & 0.447 & 0.438 \\
ETTh2     & \textbf{0.361} & \textbf{0.386} & 0.373 & 0.394 & 0.370 & \underline{0.391} & \underline{0.369} & \underline{0.391} & 0.383 & 0.404 & 0.371 & 0.395 & 0.380 & 0.401 & \underline{0.369} & \underline{0.391} & 0.385 & 0.405 & 0.385 & 0.398 & 0.462 & 0.462 & 0.426 & 0.429 \\
ETTm1     & \textbf{0.378} & \textbf{0.376} & \underline{0.385} & 0.402 & \underline{0.385} & \underline{0.380} & 0.386 & \underline{0.380} & 0.386 & 0.394 & 0.386 & 0.383 & 0.402 & 0.398 & 0.386 & 0.382 & 0.406 & 0.391 & 0.408 & 0.393 & 0.405 & 0.401 & 0.395 & 0.388 \\
ETTm2     & \textbf{0.271} & \textbf{0.314} & 0.279 & 0.319 & 0.275 & 0.317 & \underline{0.272} & \textbf{0.314} & \textbf{0.271} & \textbf{0.314} & 0.278 & 0.317 & 0.286 & 0.324 & 0.273 & \underline{0.315} & 0.287 & 0.331 & 0.281 & 0.320 & 0.290 & 0.341 & 0.287 & 0.322 \\
electricity & \textbf{0.167} & \textbf{0.257} & 0.185 & 0.267 & 0.173 & \underline{0.260} & 0.183 & 0.265 & \underline{0.169} & \underline{0.260} & 0.870 & 0.763 & 0.178 & 0.264 & 0.205 & 0.281 & 0.226 & 0.304 & 0.216 & 0.281 & 0.197 & 0.277 & 0.213 & 0.288 \\
Exchange  & \underline{0.361} & 0.403 & 0.371 & 0.407 & 0.382 & 0.414 & 0.363 & \underline{0.402} & 0.398 & 0.430 & \textbf{0.358} & \underline{0.402} & 0.377 & 0.412 & 0.410 & 0.437 & 0.451 & 0.470 & 0.433 & 0.467 & 0.455 & 0.424 & 0.362 & \textbf{0.394} \\
Solar     & 0.238 & \textbf{0.237} & 0.244 & 0.240 & 0.281 & 0.262 & \textbf{0.225} & 0.240 & \underline{0.235} & 0.269 & 0.264 & 0.250 & 0.236 & \underline{0.239} & 0.280 & 0.266 & 0.427 & 0.360 & 0.303 & 0.282 & 0.277 & 0.271 & 0.335 & 0.318 \\
Traffic   & 0.481 & 0.287 & 0.500 & 0.283 & 0.504 & 0.300 & 0.509 & 0.280 & 0.485 & 0.312 & \underline{0.428} & \underline{0.252} & \textbf{0.414} & \textbf{0.248} & 0.459 & 0.288 & 0.691 & 0.427 & 0.589 & 0.322 & 0.579 & 0.329 & 0.654 & 0.353 \\
Weather   & \textbf{0.239} & \textbf{0.262} & 0.247 & 0.266 & 0.246 & 0.267 & \underline{0.243} & \underline{0.263} & 0.254 & 0.279 & 0.250 & 0.267 & 0.256 & 0.272 & 0.244 & 0.265 & 0.258 & 0.277 & 0.261 & 0.278 & 0.247 & 0.276 & 0.271 & 0.292 \\
\hline
\multicolumn{1}{c|}{count of $1^{st}$} 
& 5 & 7 & 0 & 0 & 0& 0 & 1 & 1 & 2 & 1 & 1 & 0 & 1 & 1 & 0 & 0 & 0 & 0 & 0 & 0 & 0 & 0 & 0 & 1 \\
\hline
\end{tabular}
\caption{
Average long-term forecasting results with unified settings using a lookback window of $L = 96$. 
All results are averaged across four different prediction lengths: $ T = \{96,192,336,720\}$ for all datasets. 
The best-performing model is shown in \textbf{boldface}, and the second-best model is \underline{underlined}.
\textit{ See Table 1 in Appendix H for the full results.}}
\label{tab:experiments}
\end{table*}

\begin{table*}[ht]
  \scriptsize
  \centering
  \setlength\tabcolsep{1.5pt}
  \begin{tabular}{c|cc|cc|cc|cc|cc|cc|cc|cc|cc|cc|cc|cc}
    \hline
    \multicolumn{1}{c|}{\multirow{2}{*}{Dataset}} &
    \multicolumn{2}{c|}{\shortstack{MDMLP-EIA \\ \textbf{Ours}}}  & \multicolumn{2}{c|}{\shortstack{xPatch \\ (2025)}} & 
    \multicolumn{2}{c|}{\shortstack{Amplifier \\ (2025)}} & 
    \multicolumn{2}{c|}{\shortstack{TimeMixer \\ (2024)}} &
    \multicolumn{2}{c|}{\shortstack{CycleNet \\ (2024)}} & 
    \multicolumn{2}{c|}{\shortstack{SOFTS \\ (2024)}} & \multicolumn{2}{c|}{\shortstack{iTransformer \\ (2024)}} & 
    \multicolumn{2}{c|}{\shortstack{FilterNet \\ (2024)}}  & \multicolumn{2}{c|}{\shortstack{FITS \\ (2024)}} & 
    \multicolumn{2}{c|}{\shortstack{SparseTSF \\ (2024)}}& \multicolumn{2}{c|}{\shortstack{FreTS \\ (2023)}} & 
    \multicolumn{2}{c}{\shortstack{Dlinear \\ (2023)}} \\
    \cline{2-25}
    \multicolumn{1}{c|}{} 
    & MSE & MAE   & MSE & MAE & MSE & MAE & MSE & MAE & MSE & MAE & MSE & MAE & MSE & MAE & MSE & MAE & MSE & MAE & MSE & MAE & MSE & MAE & MSE & MAE\\
    \hline
    ETTh1       & \textbf{0.410} & \textbf{0.420} & 0.421 & 0.426 & 0.421 & \underline{0.425} & 0.423 & \underline{0.425} & 0.452 & 0.443 & \underline{0.418} & 0.430 & 0.447 & 0.453 & 0.428 & 0.430 & 0.884 & 0.642 & 0.438 & 0.434 & 0.480 & 0.471 & 0.425 & 0.433 \\
    ETTh2       & \textbf{0.335} & \textbf{0.378} & 0.348 & 0.387 & 0.344 & \underline{0.385} & 0.345 & 0.387 & 0.358 & 0.395 & 0.373 & 0.403 & 0.364 & 0.402 & 0.365 & 0.399 & \underline{0.339} & 0.386 & 0.369 & 0.401 & 0.912 & 0.649 & 0.488 & 0.470 \\
    ETTm1       & \underline{0.351} & \textbf{0.367} & 0.354 & 0.372 & 0.359 & \underline{0.371} & 0.358 & 0.374 & 0.356 & 0.373 & \textbf{0.349} & 0.374 & 0.366 & 0.389 & 0.370 & 0.381 & 0.358 & 0.372 & 0.359 & 0.375 & 0.373 & 0.384 & 0.355 & 0.372 \\
    ETTm2       & \textbf{0.253} & \textbf{0.305} & \underline{0.255} & 0.308 & 0.260 & 0.312 & \textbf{0.253} & \underline{0.306} & \underline{0.255} & 0.307 & 0.260 & 0.314 & 0.267 & 0.321 & 0.259 & 0.309 & 0.303 & 0.351 & 0.256 & 0.310 & 0.274 & 0.333 & 0.260 & 0.316 \\
    Electricity & \underline{0.157} & \underline{0.249} & 0.160 & 0.251 & 0.162 & 0.254 & 0.160 & 0.251 & \textbf{0.156} & \textbf{0.246} & 0.703 & 0.650 & 0.160 & 0.250 & 0.172 & 0.267 & 0.434 & 0.491 & 0.171 & 0.259 & 0.167 & 0.261 & 0.162 & 0.256 \\
    Exchange    & \underline{0.362} & 0.404 & 0.371 & 0.407 & 0.383 & 0.414 & 0.363 & \underline{0.403} & 0.398 & 0.431 & \textbf{0.358} & \underline{0.403} & 0.377 & 0.412 & 0.410 & 0.437 & 0.452 & 0.471 & 0.431 & 0.459 & 0.456 & 0.424 & \underline{0.362} & \textbf{0.395} \\
    Solar       & \textbf{0.190} & \textbf{0.213} & 0.201 & \underline{0.222} & 0.207 & \textbf{0.213} & 0.228 & 0.244 & 0.199 & 0.223 & 0.235 & 0.261 & \underline{0.197} & 0.225 & 0.217 & 0.234 & 0.263 & 0.272 & 0.223 & \textbf{0.213} & 0.226 & 0.228 & 0.239 & 0.234 \\
    Traffic     & \textbf{0.390} & 0.255 & \underline{0.392} & \textbf{0.249} & 0.406 & 0.264 & 0.397 & \underline{0.250} & 0.405 & 0.266 & 0.428 & 0.253 & 0.396 & 0.251 & 0.401 & 0.269 & 0.446 & 0.308 & 0.432 & 0.261 & 0.418 & 0.287 & 0.421 & 0.283 \\
    Weather     & \textbf{0.217} & \textbf{0.248} & 0.222 & \underline{0.251} & \underline{0.221} & \underline{0.251} & 0.227 & 0.255 & 0.223 & 0.252 & 0.226 & 0.256 & 0.233 & 0.265 & 0.231 & 0.261 & 0.238 & 0.273 & 0.227 & 0.259 & 0.229 & 0.263 & 0.240 & 0.276 \\
    \hline
    1st         & 6 & 6 & 0 & 1 & 0 & 1 & 1 & 0 & 1 & 1 &  2& 0& 0&0 &0 &0 &0 &0 &0 &1 &0 &0 &0 &1 \\
    \hline
  \end{tabular}
  \caption{
  Average long-term forecasting results with hyperparameter optimization. All results are averaged across four different prediction lengths: $ T = \{96,192,336,720\}$ for all datasets. The best-performing model is shown in \textbf{boldface}, and the second-best model is \underline{underlined}.
  \textit{See Table 2 in Appendix H for the full results.}}
  \label{tab:experiments2}
\end{table*}

\section{Experiments}
\subsection{Experimental setup}
\paragraph{Datasets} 
We conduct extensive experiments on nine real-world multivariate time series datasets: the ETT benchmark (specifically ETTh1, ETTh2, ETTm1, and ETTm2) \cite{zhou2021informer}, Weather \cite{autoformer21}, Traffic \cite{Sen2019}, Electricity \cite{autoformer21}, Exchange-Rate \cite{lai2018modeling}, and Solar-Energy \cite{lai2018modeling}. All datasets are preprocessed and normalized as described in \cite{nie2022time} and \cite{liu2023itransformer}, and split into training, validation, and test sets with a 7:2:1 ratio.

\paragraph{Baselines} 
Our experimental baselines include recent state-of-the-art TSF models: xPatch (2025) \cite{stitsyuk2025xpatch}, Amplifier (2025) \cite{fei2025amplifier}, TimeMixer (2024) \cite{wang2024timemixer}, CycleNet (2024) \cite{lin2024cyclenet}, SOFTS (2024) \cite{han2024softs}, iTransformer (2024) \cite{liu2023itransformer}, FilterNet (2024) \cite{yi2024filternet}, FITS (2024) \cite{xu2023fits}, SparseTSF (2024) \cite{lin2024sparsetsf}, DLinear (2023) \cite{zeng2023transformers}, and FreTS (2023) \cite{yi2023frequency}.

\paragraph{Implementation details}
All the experiments are implemented using PyTorch \cite{paszke2019pytorch}, and conducted on 4 A100 GPUs. We evaluate performance using MSE and Mean Absolute Error (MAE). For fair comparisons, several proven effective strategies \cite{stitsyuk2025xpatch}, including RevIN, arctangent loss, and sigmoid learning rate adjustment, are applied to all baseline models.

\subsection{Main results}

\begin{figure}[ht]
  \centering
  \begin{subfigure}[b]{0.42\textwidth}
    \centering
    \includegraphics[width=\linewidth]{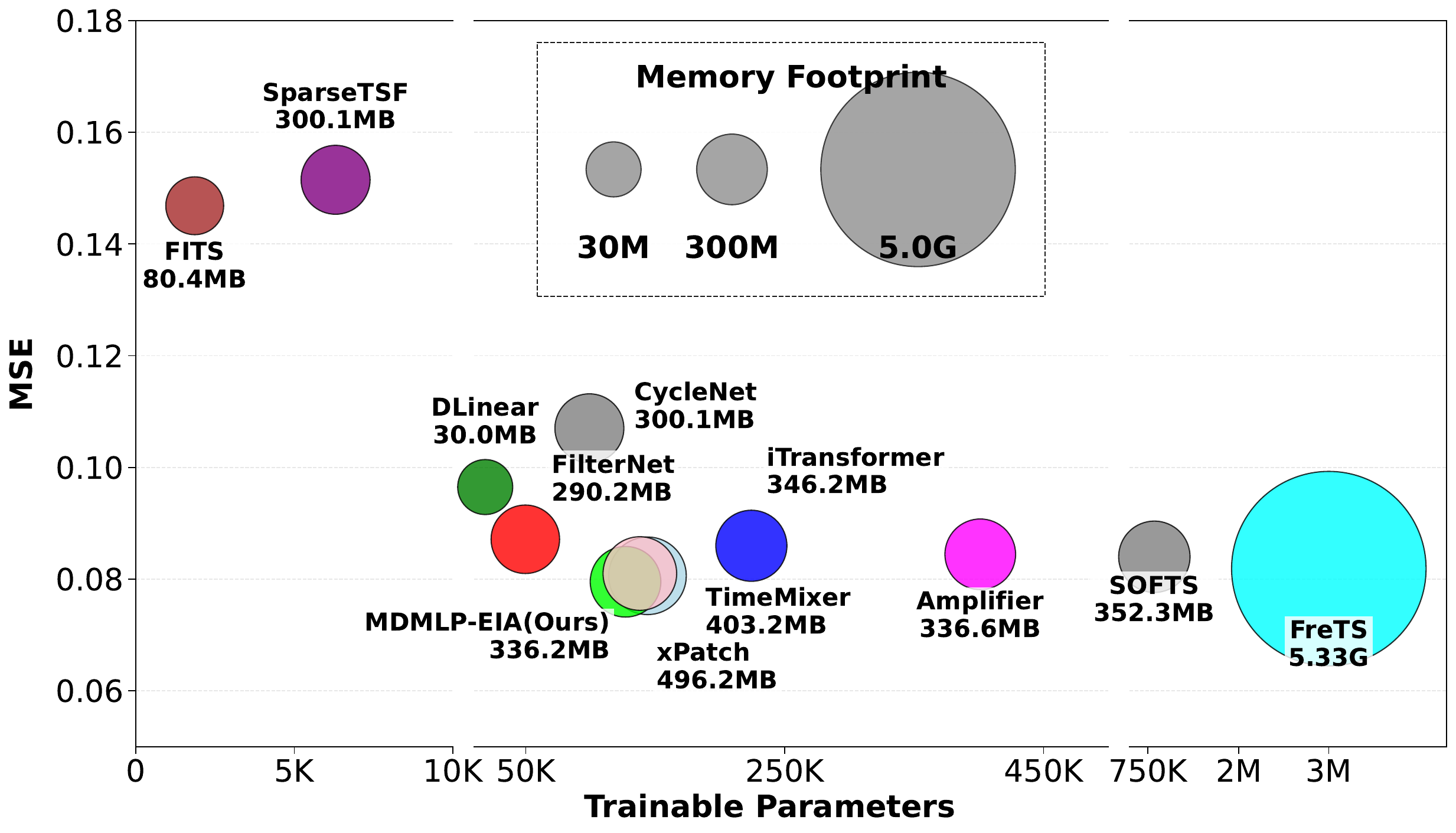}  
    \caption{Exchange (8 Variables, with training batch size is 256)}
    \label{fig:exchange_model_comparison} 
  \end{subfigure}
  \hfill  
  \begin{subfigure}[b]{0.42\textwidth}
    \centering
    
    \includegraphics[width=\linewidth]{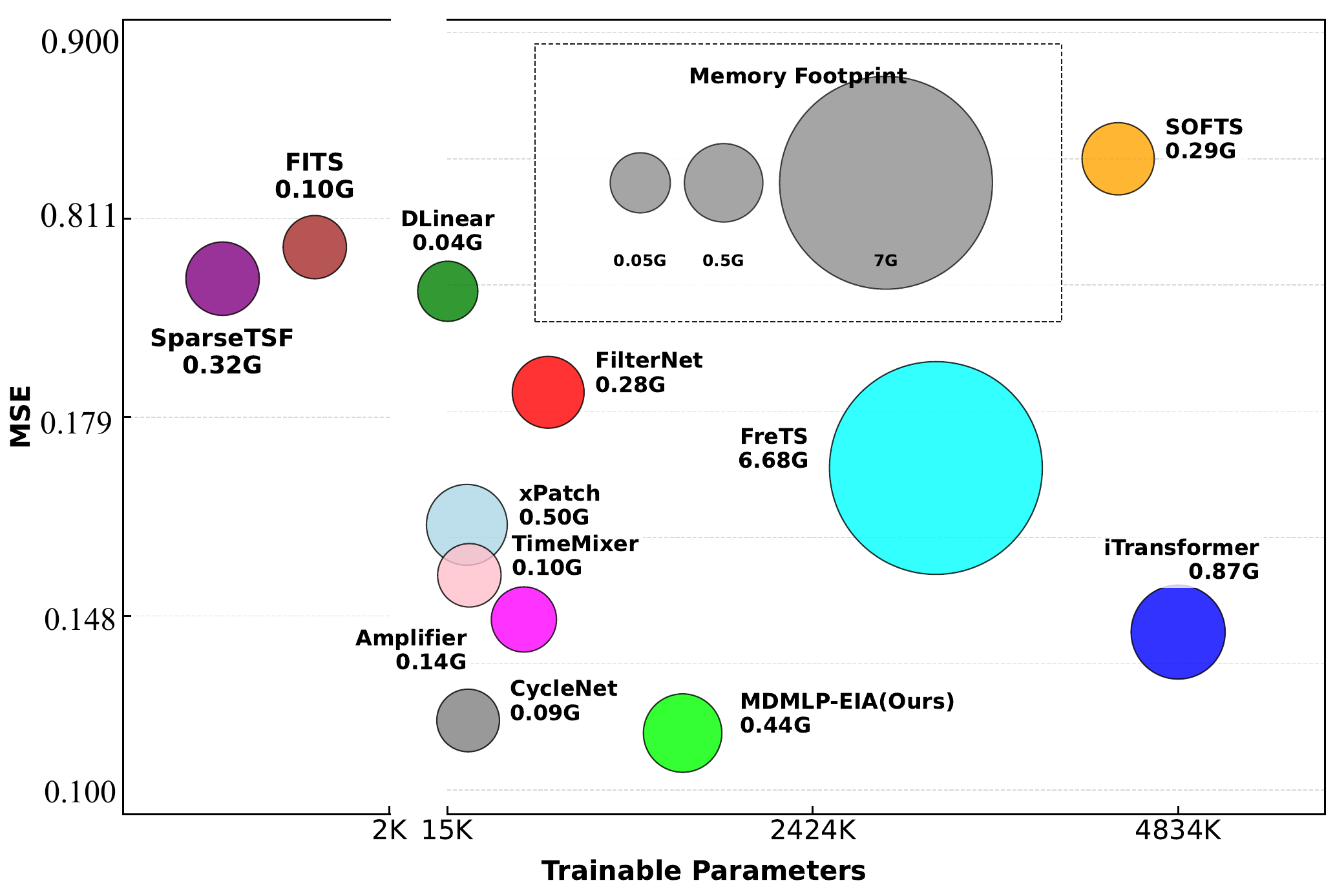} 
    \caption{Electricity (321 Variables, with training batch size 8)}
    \label{fig:electricity_model_comparison} 
  \end{subfigure}
 \caption{Model effectiveness and efficiency comparison on the Exchange and Electricity datasets.}
  \label{fig:effectiveness} 
\end{figure}


\paragraph{Unified experimental settings} 
We use unified settings based on the forecasting protocol proposed by TimesNet \cite{wu2023timesnet}: a lookback window of $L=96$ and prediction horizons of $T=\{ 96, 192, 336, 720 \}$ for all datasets.
Table \ref{tab:experiments} details the long-term forecasting results with the unified settings. MDMLP-EIA (Ours) achieves the best MSE on 5 out of 9 datasets and the best MAE on 7 out of 9 datasets. 
In particular, compared to the two most recent methods, Amplifier (2025) \cite{fei2025amplifier} and xPatch (2025) \cite{stitsyuk2025xpatch}, MDMLP-EIA (ours) surpasses them by approximately 4.14\% and 2.91\% in MSE, and 2.29\% and 1.37\% in MAE, respectively.
Furthermore, compared to CycleNet (2024) \cite{lin2024cyclenet}, the method with the most comparable performance, MDMLP-EIA (ours) shows an advantage of approximately 2.68\% in MSE and 4.66\% in MAE.
\textit{Full results under unified experimental settings are provided in Appendix H.}

\paragraph{Hyperparameter search} 
We assess how lookback length affects model performance to identify optimal values for each baseline. Models benefiting from longer sequences undergo additional hyperparameter optimization following xPatch \cite{stitsyuk2025xpatch}. We use official implementations with original configurations as baselines, then tune lookback length and other hyperparameters accordingly. The results are shown in Table \ref{tab:experiments2}. 
Our model achieves the most first-place results, with best MSE on 6/9 datasets and best MAE on 6/9 datasets.
Notably, MDMLP-EIA (ours) surpasses the two most recent methods, Amplifier (2025) \cite{fei2025amplifier} and xPatch (2025) \cite{stitsyuk2025xpatch}, by approximately 3.48\% and 2.20\% in MSE, and 1.73\% and 1.15\% in MAE, respectively.
Furthermore, compared to CycleNet (2024) \cite{lin2024cyclenet} and TimeMixer (2024) \cite{wang2024timemixer}, the methods with the most comparable performance, MDMLP-EIA (ours) shows advantages of approximately 4.82\% and 3.27\% in MSE, and 3.37\% and 2.17\% in MAE, respectively.
\textit{Full results with hyperparameter search are provided in Appendix H.}

\textit{More visualization results are provided in Appendix I.}

\subsection{Model analysis}
\paragraph{Validation of adaptive weak signal fusion}

\begin{table}[th]
\centering
\tiny
\setlength{\tabcolsep}{2pt}
\begin{tabularx}{\linewidth}{@{}lXX|XX|XX|XX|XX|XX|XX@{}} 
\toprule
\multirow{2}{*}{Dataset} & \multicolumn{2}{c|}{w/o WS} & \multicolumn{2}{c|}{MLP-F} & \multicolumn{2}{c|}{DWL-F} & \multicolumn{2}{c|}{CWA-F} & \multicolumn{2}{c|}{RCF} & \multicolumn{2}{c|}{CTF} & \multicolumn{2}{c}{AZCF} \\ 
\cmidrule(lr){2-3} \cmidrule(lr){4-5} \cmidrule(lr){6-7} \cmidrule(lr){8-9} \cmidrule(lr){10-11} \cmidrule(lr){12-13} \cmidrule(lr){14-15}
                         & MSE         & MAE         & MSE         & MAE         & MSE            & MAE         & MSE            & MAE         & MSE            & MAE         & MSE            & MAE            & MSE            & MAE           \\
\midrule
ETTh1                    & 0.442      & 0.428      & 0.440      & 0.428      & \textbf{0.437} & 0.429          & 0.444          & 0.427          & 0.441          & 0.429          & 0.438          & 0.427          & \textbf{0.437} & \textbf{0.424} \\
ETTh2                    & 0.366      & 0.389      & 0.368      & 0.391      & 0.369          & 0.392          & 0.370          & 0.392          & 0.370          & 0.393          & 0.374          & 0.394          & \textbf{0.362} & \textbf{0.387}  \\
ETTm1                    & \textbf{0.377} & 0.379      & 0.382      & 0.379      & 0.378          & 0.378 & 0.382          & 0.380          & 0.379          & 0.379          & 0.378          & \textbf{0.377}          & 0.378          & \textbf{0.377}          \\
ETTm2                    & \textbf{0.271} & \textbf{0.314} & \textbf{0.271} & \textbf{0.314} & 0.277          & 0.317          & 0.272          & \textbf{0.314} & 0.275          & 0.316          & 0.276          & 0.316          & \textbf{0.271} & \textbf{0.314} \\
Weather                  & 0.241      & 0.263      & 0.242      & 0.265      & \textbf{0.240} & \textbf{0.262} & \textbf{0.240} & 0.263          & 0.241 & 0.263          & 0.241          & \textbf{0.262}          & \textbf{0.240} & \textbf{0.262}           \\
\midrule
Average                  & 0.339      & 0.355      & 0.341      & 0.355      & 0.340 & 0.356 & 0.342 & 0.355 & 0.341 & 0.356 & 0.341 & 0.355 & \textbf{0.338} & \textbf{0.353} \\
\bottomrule
\end{tabularx}
\caption{Validation results for AZCF, averaged across prediction lengths.
Best results for each dataset are in \textbf{boldface}. The final row ('Average') shows the mean performance across all five datasets. \textit{Full per-prediction length results are available in Table 3, Appendix H.}}
\label{tab:ablation_x2_avg}
\end{table}

To validate the effectiveness of introducing the weak seasonal MLP in the MDMLP-EIA model and the proposed AZCF mechanism for fusing strong and weak seasonal predictions, we compare AZCF against the following six ablation cases: 
\begin{enumerate}
\item \textbf{w/o WS}: Without the weak seasonal MLP;
\item \textbf{MLP-F}: Uses Multi-Layer Perceptron(MLP) to concatenate strong and weak seasonal predictions for fusion;
\item \textbf{CWA-F}: Employs channel-wise attention to weight and fuse strong and weak seasonal predictions;
\item \textbf{DWL-F} \textbf{\textit{(w/o single-parameter fusion)}}: Differs from AZCF only in using two weight coefficients for fusion as shown in Equation \eqref{eq:attn strong and weak};
\item \textbf{RCF} \textbf{\textit{(w/o zero initialization)}}: Differs from AZCF only in using random initialization instead of zero initialization;
\item \textbf{CTF} \textbf{\textit{(w/o channel-dimension fusion)}}: Identical to AZCF in Equation \eqref{eq:alpha strong and weak} but without channel-dimension fusion.
\end{enumerate}

Comparing the results of w/o WS and AZCF demonstrates the effectiveness of introducing the weak seasonal MLP. Our AZCF strategy also outperforms MLP-F and CWA-F in most cases, demonstrating its effectiveness in fusing strong and weak seasonal predictions. Specifically, ablating the three key mechanisms in AZCF (DWL-F, RCF, CTF) shows that when single-parameter fusion, zero initialization, or channel-dimension fusion are respectively removed, the model's prediction performance degrades. This validates the effectiveness of each key mechanism in AZCF.
\textit{For full per-prediction length and visualization results, please refer to Appendix H  and Appendix I, respectively.}

\paragraph{Validation of EIA mechanism} 

\begin{table*}[th]
\centering
\scriptsize
\setlength{\tabcolsep}{2pt}
\begin{tabular}{@{}lcc|cc|cc|cc|cc|cc|cc|cc|cc|cc|cc|cc@{}} 
\toprule
\multirow{3}{*}{Dataset} & \multicolumn{8}{c}{MDMLP-EIA} & \multicolumn{8}{c}{Amplifier} & \multicolumn{8}{c}{xPatch} \\
\cmidrule(lr){2-9} \cmidrule(lr){10-17} \cmidrule(lr){18-25} 
& \multicolumn{2}{c}{ADD} & \multicolumn{2}{c}{MLP} & \multicolumn{2}{c}{AGM} &\multicolumn{2}{c|}{EIA} 
& \multicolumn{2}{c}{ADD} & \multicolumn{2}{c}{MLP} & \multicolumn{2}{c}{AGM} &\multicolumn{2}{c|}{EIA} 
& \multicolumn{2}{c}{ADD} & \multicolumn{2}{c}{MLP} & \multicolumn{2}{c}{AGM} &\multicolumn{2}{c}{EIA} \\
\cmidrule(lr){2-3} \cmidrule(lr){4-5} \cmidrule(lr){6-7} 
\cmidrule(lr){8-9} \cmidrule(lr){10-11} \cmidrule(lr){12-13} 
\cmidrule(lr){14-15} \cmidrule(lr){16-17} \cmidrule(lr){18-19} 
\cmidrule(lr){20-21} \cmidrule(lr){22-23} \cmidrule(lr){24-25} 
 & MSE & MAE & MSE & MAE & MSE & MAE & MSE & MAE & MSE & MAE & MSE & MAE & MSE & MAE & MSE & MAE & MSE & MAE & MSE & MAE & MSE & MAE & MSE & MAE \\
\midrule
ETTh1 & 0.438 & 0.425 & 0.492 & 0.464 & 0.439 & 0.430 & \textbf{0.437} & \textbf{0.424} & 0.447 & 0.426 & \textbf{0.443} & \textbf{0.425} &0.462 & 0.432 & 0.454 & 0.430 & {0.446} & {0.431} & 0.484 & 0.470 & \textbf{0.444} &  \textbf{0.428} & 0.448 & {0.431} \\
ETTh2 & 0.368 & 0.391 & 0.381 & 0.403 & 0.370 & 0.392 & \textbf{0.362} & \textbf{0.387} & \textbf{0.369} & \textbf{0.391} & 0.399 & 0.406 & 0.374 &  0.394 & 0.371 & \textbf{0.391} & 0.370 & \textbf{0.393} & 0.401 & 0.419 & 0.370 & \textbf{0.393} & \textbf{0.369} & \textbf{0.393} \\
ETTm1 & 0.380 & 0.378 & 0.392 & 0.394  & \textbf{0.377} & \textbf{0.376} & {0.378} & {0.377} & 0.385 & 0.380 & 0.388 & 0.381  & 0.386 & 0.379 & \textbf{0.383} & \textbf{0.378} & {0.385} & \textbf{0.383} & 0.417 & 0.412  & \textbf{0.384} & 0.384 & {0.385} & \textbf{0.383} \\
ETTm2 & 0.275 & 0.316 & 0.280 & 0.324  & 0.275 & 0.315 & \textbf{0.271} & \textbf{0.314} & 0.276 & 0.318 & 0.285 & 0.323  & \textbf{0.273} & \textbf{0.315} & \textbf{0.273} & {0.316} & \textbf{0.279} & \textbf{0.320} & 0.295 & 0.337 & 0.280 & \textbf{0.320} & \textbf{0.279} & \textbf{0.320} \\
Weather & 0.244 & 0.264 & 0.247 & 0.269  & \textbf{0.240} & \textbf{0.262} & \textbf{0.240} & \textbf{0.262} & 0.247 & 0.267 & 0.246 & 0.269 & \textbf{0.243} & \textbf{0.265} & \textbf{0.243} & \textbf{0.265} & 0.247 & 0.266 & 0.250 & 0.272  & 0.242 & 0.262 & \textbf{0.241} & \textbf{0.262} \\
Solar & 0.265 & 0.258 & 0.256 & 0.255  & 0.258 & 0.243 & \textbf{0.239} & \textbf{0.237} & 0.281 & 0.263 & \textbf{0.250} & 0.255  & 0.260 & \textbf{0.243} & 0.256 & {0.247} & 0.255 & 0.247 & 0.257 & 0.259 & 0.257 & 0.247 & \textbf{0.246} & \textbf{0.243} \\
\midrule
Average & 0.328 & 0.339 & 0.341 & 0.352 & 0.327 & 0.336 & \textbf{0.321} & \textbf{0.334} & 0.334 & 0.341 & 0.335 & 0.343 & 0.333 & \textbf{0.338} & \textbf{0.330} & \textbf{0.338} & 0.330 & 0.340 & 0.351 & 0.362 & 0.330 & \textbf{0.339} & \textbf{0.328} & \textbf{0.339} \\
\bottomrule
\end{tabular}
\caption{Validation results for EIA under MDMLP-EIA, Amplifier, and xPatch frameworks, averaged across prediction lengths. The best performance within each model group per dataset is highlighted in \textbf{boldface}. The final row ('Average') shows the mean performance across all six datasets.
\textit{Full per-prediction length results are available in Appendix Table 4.}
}
\label{tab:ablation_attn_avg} 
\end{table*}

In our proposed MDMLP-EIA, the EIA has been designed for effective integration of trend and seasonal signals. To validate the effectiveness of EIA fusion, we compare three methods:
\begin{enumerate}
\item \textbf{ADD (w/o EIA)}: Directly adds trend and seasonal signals without EIA;
\item \textbf{MLP}: Uses MLP-based attention to weight and fuse trend and seasonal signals;
\item \textbf{AGM}: Applies Adaptive Gating Mechanism for fusion (Equation \eqref{eq:standard_weighted_fusion}).
\end{enumerate}
This comparative analysis is conducted for our MDMLP-EIA model and two leading models (Amplifier \cite{fei2025amplifier} and xPatch \cite{stitsyuk2025xpatch}) to evaluate broader applicability. The averaged results across six datasets are presented in Table \ref{tab:ablation_attn_avg}.
For our MDMLP-EIA model, using EIA for fusion delivers substantial performance improvements. On average across six datasets, it reduces MSE by approximately 5.91\% and MAE by 5.12\% compared to MLP fusion, and by 2.1\% (MSE) and 1.53\% (MAE) compared to ADD. While AGM shows competitive results on some datasets, our EIA achieves best or joint-best performance on 5 out of 6 datasets. Compared to AGM, EIA reduces average MSE and MAE by approximately 1.83\% and 0.60\%, respectively. Notably, on the Solar dataset, EIA demonstrates substantial advantage over AGM, reducing MSE by approximately 7.36\% and MAE by approximately 2.51\%. When incorporated into xPatch \cite{stitsyuk2025xpatch} and Amplifier \cite{fei2025amplifier}, EIA shows similar positive trends, validating the effectiveness and transferability of our EIA mechanism.
\textit{For full per-prediction length and visualization results, please refer to Appendices H and I.}

\paragraph{Validation of DCA mechanism}

\begin{table}
\centering
\begin{adjustbox}{max width=\linewidth}
\begin{tabular}{@{}l|cc|cc|cc|cc|cc@{}} 
\toprule
\multirow{2}{*}{HS} & \multicolumn{2}{c|}{ETTh1} & \multicolumn{2}{c|}{Solar} & \multicolumn{2}{c|}{Traffic} & \multicolumn{2}{c|}{Weather} & \multicolumn{2}{c}{Average} \\
\cmidrule(lr){2-3} \cmidrule(lr){4-5} \cmidrule(lr){6-7} \cmidrule(lr){8-9} \cmidrule(lr){10-11}
 & MSE & MAE & MSE & MAE & MSE & MAE & MSE & MAE & MSE & MAE \\
\midrule
32   & 0.448 & 0.438 & 0.261 & 0.262 & 0.610 & 0.331 & 0.240 & 0.263 & 0.389 & 0.323 \\
64   & 0.439 & 0.430 & 0.259 & 0.258 & 0.570 & 0.314 & 0.240 & 0.263 & 0.376 & 0.316 \\
128  & 0.438 & 0.427 & 0.257 & 0.254 & 0.550 & 0.302 & \textbf{0.240} & \textbf{0.260} & 0.372 & 0.311 \\
256  & 0.439 & 0.425 & 0.249 & 0.250 & 0.500 & 0.294 & \textbf{0.240} & 0.263 & 0.356 & 0.308 \\
512  & 0.440 & 0.425 & 0.249 & 0.249 & 0.490 & 0.292 & 0.240 & 0.263 & 0.354 & 0.307 \\
1024 & 0.441 & 0.425 & 0.248 & 0.240 & 0.490 & 0.289 & 0.240 & 0.264 & 0.354 & 0.304 \\
2048 & 0.443 & 0.424 & 0.254 & 0.248 & 0.490 & \textbf{0.282} & 0.240 & 0.264 & 0.357 & 0.305 \\ \hline
DCA  & \textbf{0.436} & \textbf{0.424} & \textbf{0.238} & \textbf{0.237} & \textbf{0.481} & 0.287 & \textbf{0.239} & 0.262 & \textbf{0.349} & \textbf{0.302} \\
\bottomrule
\end{tabular}
\end{adjustbox}
\caption{Validation results of DCA mechanism. MSE and MAE are averaged across four prediction lengths ($T \in \{96, 192, 336, 720\}$) on datasets with different channel counts: ETTH1 ($C=7$), Weather ($C=21$), Solar ($C=137$), and Traffic ($C=862$). Best results are in bold.}
\label{tab:ablation_hidden_size_overall_avg}
\end{table}

We compared the DCA mechanism with fixed capacity (neuron count) strategies, where the number of neurons varies across {32, 64, 128, 256, 512, 1024, 2048}. We selected four datasets with significantly different channel counts (ETTH1 ($C=7$), Weather ($C=21$), Solar ($C=137$), and Traffic ($C=862$)) for time series prediction tasks with $L=96$ and prediction lengths of {96, 192, 336, 720}. 
Table \ref{tab:ablation_hidden_size_overall_avg} presents the average prediction results across all prediction lengths. The DCA strategy achieves the lowest MSE and MAE across all four datasets, with average reductions of 1.41\% in MSE and 0.07\% in MAE compared to the second-best case (HS=1024).
\textit{For more detailed results across different datasets and prediction lengths, please refer to Appendix H.}

\paragraph{Efficiency analysis} 
We comprehensively evaluate MDMLP-EIA's efficiency in terms of trainable parameters, memory footprint, and predictive accuracy (MSE), following \cite{yi2024filternet}. Figure~\ref{fig:effectiveness} compares our method against representative baselines on datasets with markedly different scales: the Exchange dataset (8 variables, the batch size of training stage is 256) and the Electricity dataset (321 variables, the batch size of training stage is 8) with $L=96$ and $Q=96$.
MDMLP-EIA demonstrates superior efficiency-accuracy trade-offs across both datasets. On Exchange, it achieves competitive accuracy (MSE $\approx$ 0.082) with approximately 128K parameters and 336.2MB memory, significantly outperforming parameter-heavy models like iTransformer (224k parameters, 346.2MB memory, MSE $\approx$ 0.086) while maintaining better accuracy than lightweight models like DLinear (18K parameters, 30.0MB memory, MSE $\approx$ 0.096). On Electricity, MDMLP-EIA maintains strong performance (MSE $\approx$ 0.140) with 1907K parameters and 0.44G memory, substantially more efficient than FreTS (3236K parameters, 6.68G memory).  The results establish MDMLP-EIA's effective balance between model complexity and predictive accuracy across different dataset scales.

\section{Conclusions}
\label{Conclusions and Discussions}
We propose MDMLP-EIA, a novel time series forecasting model that addresses critical limitations in existing approaches through three key innovations: a weak seasonal MLP with AZCF fusion mechanism, an EIA mechanism, and a DCA mechanism. These components effectively tackle three prevalent issues: loss of weak seasonal signals, capacity constraints in weight-sharing MLPs, and insufficient channel fusion in channel-independent strategies, enabling efficient and accurate forecasting.
Moreover, the proposed AZCF, EIA, and DCA mechanisms demonstrate broad applicability and can enhance other seasonal-trend decomposition-based methods. Future work includes designing unified cross-horizon architectures and enabling domain transfer.

\section{Acknowledgments}
We would like to thank all anonymous reviewers for their valuable comments.

This work was supported in part by the National Natural Science Foundation of China (NSFC) under Grant 62376040 and 62506047, the Key projects of NSFC under Grant 62233018, the outstanding youth project of Hunan Provincial Department of Education under Grant 24B0797, and the China Postdoctoral Science Foundation under Grant Number 2024T171058 and 2024M753672.



\bibliography{aaai2026}

\appendix
\clearpage  
\section{A. Overall pseudocode}
\label{appendix: Overall pseudocode}

Algorithm~\ref{alg:mdmlp_eia} outlines the workflow of our proposed MDMLP-EIA.
Initially, we normalize the input time series $x$ using RevIN and decompose the normalized series into its trend component $x_1$ and seasonal component $x_2$ via EMA.
We then forecast the trend component $y_1$ using a dedicated TrendMLP.
For the seasonal component, we employ an adaptive fused dual-domain approach: a frequency-domain based MLP processes the strong seasonal signals to produce $y_{21}$, while a time-domain based MLP handles weak seasonal signals to yield $y_{22}$.
These seasonal predictions are subsequently combined into $y_2$ using an adaptive zero-initialized channel fusion strategy.
Following this, we integrate the trend prediction $y_1$ and the combined seasonal prediction $y_2$ using our energy invariant attention mechanism to obtain a fused prediction $y_3$.
Finally, we apply inverse RevIN to $y_3$ to derive the ultimate forecast $y$.
Throughout this process, the capacities of our MLPs ($n_1, n_2, n_3$) are dynamically adjusted based on the number of input channels.

\begin{algorithm}[htbp] 
\small
\caption{MDMLP-EIA Algorithm}
\label{alg:mdmlp_eia}
\begin{algorithmic}[1] 
    \Require Input time series $x \in \mathbb{R}^{L \times C}$, input length $L$, prediction length $Q$, number of channels $C$.
    \Ensure Predicted time series $y \in \mathbb{R}^{Q \times C}$.

    \Statex \textit{// Initialize learnable parameters}
    \State Initialize learnable channel fusion weights $\alpha_{ch} \in \mathbb{R}^{1 \times C}$ to zeros.

    \Statex \textit{// Hyperparameters}
    \State Initialize hyperparameters $\tau, E, n_h$ \Comment{Values specified in experimental setup}
    
    \Statex \textit{// Data Preprocessing \& Decomposition}
    \State $x_{norm}, \text{stats} \leftarrow \RevIN(x)$ \Comment{Instance Normalization}
    \State $x_{trend\_orig}, x_{seasonal\_orig} \leftarrow \EMADecomposition(x_{norm})$ \Comment{$x_{trend\_orig}$: trend, $x_{seasonal\_orig}$: seasonal components from $x_{norm}$}

    \Statex \textit{// Prepare inputs for channel-independent MLPs}
    \State $x_1 \leftarrow \DimPermute(x_{trend\_orig})$ \Comment{Trend component for MLP, permuted to $\mathbb{R}^{C \times L}$}
    \State $x_2 \leftarrow \DimPermute(x_{seasonal\_orig})$ \Comment{Seasonal component for MLP, permuted to $\mathbb{R}^{C \times L}$}

    \Statex \textit{// Dynamic Capacity Adjustment for MLPs}
    \State $\text{cof} \leftarrow \lceil \sqrt{C}/\tau \rceil$ \Comment{Capacity adjustment coefficient}
    \State $n_1 \leftarrow L \times \text{cof}$ \Comment{Neuron configuration for TrendMLP}
    \State $n_2 \leftarrow n_h \times \text{cof}$ \Comment{Neuron configuration for StrongSeasonalMLP}
    \State $n_3 \leftarrow 2L \times \text{cof}$ \Comment{Neuron configuration for WeakSeasonalMLP}

    \Statex \textit{// 1. Trend Component Forecasting}
    \State $y_1 \leftarrow \TrendMLP(x_1, n_1)$ \Comment{Trend prediction $y_1 \in \mathbb{R}^{Q \times C}$}

    \Statex \textit{// 2. Seasonal Component Forecasting (Adaptive Fused Dual-Domain)}
    \Statex \textit{// \quad 2.a Strong Seasonal Signal Path (Frequency Domain based)}
    \State $x_{21} \leftarrow \EmbedOp(x_2)$ \Comment{Embedding of seasonal component $x_2$}
    \State $x_{22} \leftarrow \rfftOp(x_{21})$ \Comment{FFT of embedded seasonal component}
    \State $x_{23} \leftarrow \FreMLP(x_{22})$ \Comment{Frequency MLP processing; see FreTS}
    \State $f_{s1} \leftarrow \irfftOp(x_{23})$ \Comment{Reconstructed strong seasonal features $f_{s1}$}
    \State $y_{21} \leftarrow \StrongSeasonalMLP(f_{s1}, n_2)$ \Comment{Strong seasonal prediction $y_{21} \in \mathbb{R}^{Q \times C}$}
    
    \Statex \textit{// \quad 2.b Weak Seasonal Signal Path (Time Domain based)}
    \State $y_{22} \leftarrow \WeakSeasonalMLP(x_2, n_3)$ \Comment{Weak seasonal prediction $y_{22} \in \mathbb{R}^{Q \times C}$}

    \Statex \textit{// \quad 2.c Adaptive Fusion of Seasonal Predictions}
    \State $\alpha_{broadcast} \leftarrow \BroadcastOp(\alpha_{ch}, \text{dim}=0, \text{length}=Q)$ \Comment{Expand learnable weights $\alpha_{ch}$}
    \State $y_2 \leftarrow y_{21} + \alpha_{broadcast} \odot y_{22}$ \Comment{Combined seasonal prediction $y_2 \in \mathbb{R}^{Q \times C}$}

    \Statex \textit{// 3. Energy Invariant Attention for Trend-Seasonal Fusion}
    \State $y_{C} \leftarrow \Concat(y_1, y_2, \text{dim}=1)$ \Comment{Concatenate $y_1, y_2$ for attention input}
    \State $y_{C1} \leftarrow \text{Linear}^1_{A}(y_C)$
    \State $y_{C2} \leftarrow \text{Linear}^2_{A}(\DropoutOp(\GeLUOp(y_{C1})))$
    \State $\beta \leftarrow \SigmoidOp(y_{C2})$ \Comment{Attention weights $\beta \in \mathbb{R}^{Q \times C}$ for $y_1$}
    \State $y_3 \leftarrow 2 \times (\beta \odot y_1 + (1-\beta) \odot y_2)$ \Comment{Fused prediction $y_3 \in \mathbb{R}^{Q \times C}$}

    \Statex \textit{// 4. Final Output Generation}
    \State $y \leftarrow \InverseRevIN(y_3, \text{stats})$ \Comment{Inverse Normalization to get final prediction $y$}
    \State \Return $y$
\end{algorithmic}
\end{algorithm}

\paragraph{Note:}
\begin{itemize}
    \item We design $\TrendMLP$, $\StrongSeasonalMLP$, and $\WeakSeasonalMLP$ as multi-layer perceptrons that employ channel-independent strategies. Their capacities ($n_1, n_2, n_3$) are dynamically adjusted. The detailed architectures of these MLPs are provided in Appendices A, B, and C.
    \item $\FreMLP$ follows the design in FreTS.
    \item $\DimPermute(X)$ is used to adjust tensor dimensions as described in the paper, e.g., from $L \times C$ to $C \times L$ for channel-independent processing, and from $C \times Q$ back to $Q \times C$ for outputs of MLPs.
    \item $\alpha_{ch}$ are learnable parameters, initialized to zero and updated during training, used to fuse strong and weak seasonal predictions. The resulting $\alpha_{broadcast}$ is the expanded version of these weights.
    \item The calculation of $\beta$ (attention weights) involves learnable linear layers $\text{Linear}^1_A, \text{Linear}^2_A$, as detailed in Appendix D, to balance trend ($y_1$) and combined seasonal ($y_2$) components.
\end{itemize}

\section{B. Details of the trend MLP}
\label{appendix: trend MLP}
Let \textit{$n_{1}$} represent the number of neurons in the trend MLP. As illustrated in the bottom section of Fig. 2 of the paper, the trend MLP comprises three linear layers:
$\text{Linear}^1_{T} : [C, L] \to [C, 4n_1]$, 
$\text{Linear}^2_{T} : [C, 4n_1] \to [C, 2n_1]$, 
and $\text{Linear}^3_{T} : [C, 2n_1] \to [C, Q]$,
interconnected by two hyperbolic tangent (Tanh) activation functions and two dropout layers $\text{Dropout}(\cdot)$ for regularization. 
The trend MLP extracts the trend prediction $y_1 \in \mathbb{R}^{Q \times C}$ from the trend component $x_1 \in \mathbb{R}^{C \times L}$ through the following transformation:
\begin{equation}
\begin{cases}
x_{11} = \text{Linear}_T^1(x_1) \\
x_{12} = \text{Linear}_T^2(\text{Dropout}(\text{Tanh}(x_{11}))) \\
x'_1 = \text{Linear}_T^3(\text{Dropout}(\text{Tanh}(x_{12}))) \\
y_1 = \text{DimPermute}(x'_1)
\end{cases}
\end{equation}
where \textit{$x_{11}\in \mathbb{R}^{C \times 4n_1}$}, \textit{$x_{12}\in \mathbb{R}^{C \times 2n_1}$}, and \textit{$x'_{1}\in \mathbb{R}^{C \times Q}$} represent intermediate feature representations, and $\text{DimPermute}(\cdot)$ is a dimension permutation operation that transfers $\mathbb{R}^{C \times Q}$ to $\mathbb{R}^{Q \times C}$. The trend MLP implements a channel independence strategy, which essentially fuses information across time steps using shared weights along each channel dimension. This approach allows the trend MLP to process temporal patterns independently for each feature channel.

\section{C. Details of Frequency MLP and strong seasonal MLP}
\label{appendix: strong seasonal MLP}

According to the frequency-temporal learner in FreTS, the seasonal component $x_2$ undergoes a series of sequential transformations: embedding($\text{Embed}(\cdot)$), real-input fast Fourier transform ($\text{rfft}(\cdot)$), frequency MLP ($\text{FreMLP}(\cdot)$) processing, and inverse real fast Fourier transform ($\text{irfft}(\cdot)$). 
This sequence of operations generates the reconstructed feature with strong seasonal patterns\textit{$f_{s1}$}, formulated as:
\begin{equation}
\begin{cases}
x_{21} = \text{Embed}(x_2) \\
x_{22} = \text{rfft}(x_{21}) \\
x_{23} = \text{FreMLP}(x_{22}) \\
f_{s1} = \text{irfft}(x_{23})
\end{cases}
\label{eq:freMlp}
\end{equation}
where \textit{$x_{21}$}, \textit{$x_{22}$}, and \textit{$x_{23}$} represent intermediate feature representations in the transformation process. 
For details about the $\text{FreMLP}$, please refer to FreTS.
Then, we employ a strong seasonal MLP to learn the strong seasonal prediction $y_{21} \in \mathbb{R}^{Q \times C}$ from the reconstructed feature $f_{s1} \in \mathbb{R}^{C \times L\times E}$:
\begin{equation}
y_{21} = \text{MLP}_S(f_{s1})
\label{eq:y21}
\end{equation}
where $\text{MLP}_S$
represents the strong seasonal MLP. Similar to the trend MLP, the strong seasonal MLP also implements a channel independence strategy to process each feature dimension separately. 

Let \textit{$n_{2}$} represent the number of neurons in the strong seasonal MLP. As illustrated in the bottom section of Fig. 2 in the paper, the strong seasonal MLP includes one reshape transformation 
(Reshape: $f_{s1} \in \mathbb{R}^{C \times L\times E} \to f_{s2} \in \mathbb{R}^{C \times (L\times E)} $,
where \textit{E} is the embedding dimension in FreMLP, set to 8), two linear layers (
$\text{Linear}^1_{S} : [C, L\times E] \to [C, n_2]$
and 
$\text{Linear}^2_{S} : [C, n_2] \to [C,Q]$
), 
one LeakyReLU activation function, and a dropout layer $\text{Dropout}(\cdot)$ for regularization.
This network extracts the strong seasonal prediction $y_{21} \in \mathbb{R}^{Q \times C}$ from the reconstructed feature $f_{s1} \in \mathbb{R}^{C \times L\times E}$ through the following transformation:
\begin{equation}
\begin{cases}
f_{s2} = \text{Reshape}(f_{s1}) \\
f_{s3} = \text{Linear}_S^1(f_{s2}) \\
f_s = \text{Linear}_S^2(\text{Dropout}(\text{LeakyReLU}(f_{s3}))) \\
y_{21} = \text{DimPermute}(f_s)
\end{cases}
\end{equation}
where \textit{$f_{s2}\in \mathbb{R}^{C \times (L\times E)}$}, \textit{$f_{s3}\in \mathbb{R}^{C \times n_2}$}, and \textit{$f_s\in \mathbb{R}^{C \times Q}$} represent intermediate features, and $\text{DimPermute}(\cdot)$ refers to the dimension permutation operation that transfers $\mathbb{R}^{C \times Q}$ to $\mathbb{R}^{Q \times C}$. The strong seasonal MLP implements a channel independence strategy, which allows the network to process temporal patterns independently for each feature channel.

\section{D. Details of the weak seasonal MLP}
\label{appendix: weak seasonal MLP}
Let \textit{$n_{3}$} represent the number of neurons in the weak seasonal MLP. As illustrated in the bottom section of Fig. 2 in the paper, the weak seasonal MLP consists of two linear layers (
$\text{Linear}^1_{w} : [C, L] \to [C,n_3]$
and 
$\text{Linear}^2_{w} : [C, n_3] \to [C,Q]$
), one Tanh activation function, and a dropout layer $\text{Dropout}(\cdot)$ to prevent overfitting.
This network extracts the weak seasonal prediction $y_{22} \in \mathbb{R}^{Q \times C}$ from the weak seasonal component $x_{2} \in \mathbb{R}^{C \times L}$:
\begin{equation}
\begin{cases}
x_3 = \text{Linear}_w^1(x_2) \\
f_w = \text{Linear}_w^2(\text{Dropout}(\text{Tanh}(x_3))) \\
y_{22} = \text{DimPermute}(f_w)
\end{cases}
\end{equation}
where \textit{$x_3\in \mathbb{R}^{C \times n_3}$} and \textit{$f_w\in \mathbb{R}^{C \times Q}$} function as intermediate features, and $\text{DimPermute}(\cdot)$ represents the dimension permutation operation that transfers $\mathbb{R}^{C \times Q}$ to $\mathbb{R}^{Q \times C}$. The weak seasonal MLP implements a channel independence strategy, which allows the network to process temporal patterns independently for each feature channel.

\section{E. Proof of adaptive zero-initialized channel fusion}
\label{app:weak_seasonal_fusion}

In this section, we demonstrate that enhancing the seasonal prediction with a low-amplitude ("weak") seasonal branch and combining it through adaptive zero-initialized channel fusion reduces the mean-square error, compared to utilizing only the "strong" seasonal branch.

\medskip\noindent\textbf{Setup.} After EMA-decomposition we
obtain two seasonality estimates for each channel $c$ and time
$t$:
\begin{equation}
y_{21}(t,c)\approx s_{1}(t,c),
\label{eq:y21_approx}
\end{equation}
\begin{equation}
y_{22}(t,c)\approx \tilde s_{2}(t,c) = s_{2}(t,c)\;+\;n(t,c),
\label{eq:y22_approx}
\end{equation}
where
$s_{1}$ is the high‐energy (strong) seasonal component,
$s_{2}$ is the low‐energy (weak) seasonal component, and
$n\sim\mathcal N(0,\sigma_n^2)$ is additive noise, independent
of $s_{1},s_{2}$. We fuse them via
\begin{equation}
y_{2}(t,c) = y_{21}(t,c) + \alpha_{c}\,y_{22}(t,c),
\label{eq:fusion_rule_weak}
\end{equation}
with $\alpha_{c}$ learned (initialized at zero). Define the
per‐channel mean‐square error
\begin{equation}
\mathrm{MSE}(\alpha_c) = \mathbb{E}\bigl[\bigl(y_{2}-s_{1}-s_{2}\bigr)^{2}\bigr].
\label{eq:mse_alpha_c_weak_def}
\end{equation}

\medskip\noindent\textbf{Notation.} To simplify the mathematical expressions in the following analysis, we occasionally omit the explicit dependencies on time $t$ and channel $c$ in our notation. For instance, we may write $\operatorname{Var}[s_2]$ instead of $\operatorname{Var}[s_2(t,c)]$. All expressions should be understood to maintain these dependencies unless explicitly stated otherwise. The variance operators are applied with respect to the appropriate probability distributions for each random variable, with expectations taken over the relevant time and channel dimensions.

\begin{proposition}[Strict error reduction and optimal initialization]
\label{prop:error_reduction}
Let
\begin{equation}
\alpha_c^* =\frac{\operatorname{Var}\bigl[s_{2}(t,c)\bigr]}{\operatorname{Var}\bigl[s_{2}(t,c)\bigr]+\sigma_n^2}.
\label{eq:alpha_c_star_weak}
\end{equation}
Then:
\begin{itemize}
\item[(i)] MSE reduction: $\mathrm{MSE}(0)-\mathrm{MSE}(\alpha_c^*) =\frac{\operatorname{Var}[s_{2}]^{2}}{\operatorname{Var}[s_{2}]+\sigma_n^2} >0 \quad\iff\quad \operatorname{Var}[s_{2}] >0$.
\label{eq:mse_difference_result}

\item[(ii)] Zero initialization ($\alpha_c = 0$) provides a theoretically sound minimum-risk starting point when signal and noise variances are unknown.
\end{itemize}
In particular, whenever a non‐zero weak component exists,
optimally fusing it strictly lowers the MSE compared to
ignoring it ($\alpha_c=0$), while zero initialization ensures stability during early training stages.
\end{proposition}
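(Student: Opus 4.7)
The plan is to reduce $\mathrm{MSE}(\alpha_c)$ to an explicit quadratic in the scalar $\alpha_c$, minimize it, and then read off both the claimed optimum and the MSE reduction by direct substitution. First, I would use \eqref{eq:y21_approx}--\eqref{eq:y22_approx} in \eqref{eq:mse_alpha_c_weak_def} to obtain the residual $y_2-s_1-s_2 = (\alpha_c-1)\,s_2 + \alpha_c\,n$. Squaring, taking expectations, and invoking the independence of $n$ from $(s_1,s_2)$ together with $\mathbb{E}[n]=0$ and the standard zero-mean convention $\mathbb{E}[s_2]=0$ for a seasonal component, the cross term vanishes and I arrive at
\begin{equation*}
\mathrm{MSE}(\alpha_c) = (\alpha_c-1)^2\operatorname{Var}[s_2] + \alpha_c^2\,\sigma_n^2.
\end{equation*}
Without the zero-mean assumption the same identity holds with $\mathbb{E}[s_2^2]$ replacing $\operatorname{Var}[s_2]$; either way the risk is a strictly convex quadratic in $\alpha_c$.

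Second, I would differentiate the quadratic in $\alpha_c$ and set the derivative to zero, obtaining $\alpha_c^\star = \operatorname{Var}[s_2]/(\operatorname{Var}[s_2]+\sigma_n^2)$ and matching \eqref{eq:alpha_c_star_weak}. Substituting $\alpha_c^\star$ back, using $\alpha_c^\star-1 = -\sigma_n^2/(\operatorname{Var}[s_2]+\sigma_n^2)$, yields $\mathrm{MSE}(\alpha_c^\star) = \operatorname{Var}[s_2]\,\sigma_n^2/(\operatorname{Var}[s_2]+\sigma_n^2)$. Since $\mathrm{MSE}(0)=\operatorname{Var}[s_2]$ is immediate, subtracting over the common denominator $\operatorname{Var}[s_2]+\sigma_n^2$ produces $\operatorname{Var}[s_2]^2/(\operatorname{Var}[s_2]+\sigma_n^2)$, which is the identity in (i); its strict positivity is plainly equivalent to $\operatorname{Var}[s_2]>0$.

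For part (ii), I would recast ``theoretically sound minimum-risk starting point'' as a minimax statement over the unknown pair $(\operatorname{Var}[s_2],\sigma_n^2)$. Using the factorization $\mathrm{MSE}(\alpha_c)-\mathrm{MSE}(0) = \alpha_c\bigl[\alpha_c(\operatorname{Var}[s_2]+\sigma_n^2)-2\operatorname{Var}[s_2]\bigr]$, any fixed non-zero $\alpha_c$ can be made to exceed $\mathrm{MSE}(0)$ by driving $\operatorname{Var}[s_2]\to 0$ with $\sigma_n^2>0$, whereas $\alpha_c=0$ keeps this difference identically zero across all admissible variances. This identifies $\alpha_c=0$ as the unique constant starting point whose worst-case risk (relative to ignoring $y_{22}$) is zero; combining with (i), once data-driven updates move $\alpha_c$ toward $\alpha_c^\star$, the baseline risk is strictly improved whenever a genuine weak signal exists.

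The main obstacle I expect is part (ii), because ``minimum-risk'' is not precise until one chooses a framework (minimax over the unknown variances, Bayesian with a prior, or uniform worst-case). I would adopt the minimax formulation above since it reuses exactly the quadratic derived for (i) and requires no additional hypotheses; a Bayesian formulation would be stronger but is not supported by the stated model. The remainder of the proof reduces to a short and standard quadratic minimization, so I do not anticipate any technical difficulty there.
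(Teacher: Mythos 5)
Your proposal is correct and, for part (i), follows essentially the same route as the paper: reduce the risk to a quadratic in $\alpha_c$, minimize by differentiation to obtain $\alpha_c^\star = \operatorname{Var}[s_2]/(\operatorname{Var}[s_2]+\sigma_n^2)$, and subtract over the common denominator to get $\operatorname{Var}[s_2]^2/(\operatorname{Var}[s_2]+\sigma_n^2)$. Two small differences are worth flagging. First, the paper retains the strong-branch error $e_1 = y_{21}-s_1$ and carries the constant $\mathbb{E}[e_1^2]$ through every expression, whereas you treat $y_{21}=s_1$ as exact; since $e_1$ is independent of the other terms and contributes only an $\alpha_c$-independent offset, this changes nothing in the optimum or the difference, but the paper's version is the more faithful reading of the stated setup. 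Second, for part (ii) the paper argues informally that any $\alpha_c>0$ incurs a term $\alpha_c^2\sigma_n^2$ that can be made arbitrarily large by unknown noise variance, while you give a sharper minimax formulation via the factorization $\mathrm{MSE}(\alpha_c)-\mathrm{MSE}(0)=\alpha_c\bigl[\alpha_c(\operatorname{Var}[s_2]+\sigma_n^2)-2\operatorname{Var}[s_2]\bigr]$, showing $\alpha_c=0$ is the unique constant with identically zero worst-case excess risk over the baseline. Your version of (ii) is actually the cleaner argument, since it makes "minimum-risk" precise without adding hypotheses; the paper's version conveys the same intuition less formally. No gaps.
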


\paragraph{Proof}
Let $e_{1}=y_{21}-s_{1}$ denote the strong‐branch prediction error.
Then the fusion error term is
\begin{equation}
y_{2}-s_{1}-s_{2} = e_{1} \;+\; \alpha_{c}(s_{2}+n) \;-\; s_{2} = e_{1} + (\alpha_c-1)\,s_{2} + \alpha_c\,n.
\label{eq:fusion_error_expanded}
\end{equation}
Independence of $e_{1},s_{2},n$ gives
\begin{equation}
\mathrm{MSE}(\alpha_c) =\mathbb{E}[e_{1}^{2}] +(\alpha_c-1)^{2}\operatorname{Var}[s_{2}] +\alpha_c^{2}\,\sigma_n^2.
\label{eq:mse_alpha_c_expanded_weak}
\end{equation}

Part (i): To find the optimal $\alpha_c$, we differentiate $\mathrm{MSE}(\alpha_c)$ with respect to $\alpha_c$ and set it to zero:

\begin{align}
\frac{d}{d\alpha_c}\mathrm{MSE}(\alpha_c) = 2(\alpha_c-1)\operatorname{Var}[s_{2}] + 2\alpha_c\sigma_n^2 = 0\\
(\alpha_c-1)\operatorname{Var}[s_{2}] + \alpha_c\sigma_n^2 = 0\\
\alpha_c\operatorname{Var}[s_{2}] - \operatorname{Var}[s_{2}] + \alpha_c\sigma_n^2 = 0\\
\alpha_c(\operatorname{Var}[s_{2}] + \sigma_n^2) = \operatorname{Var}[s_{2}]\\
\alpha_c^* = \frac{\operatorname{Var}[s_{2}]}{\operatorname{Var}[s_{2}] + \sigma_n^2}
\end{align}

At $\alpha_c=0$,
\begin{equation}
\mathrm{MSE}(0) =\mathbb{E}[e_{1}^{2}]+\operatorname{Var}[s_{2}],
\label{eq:mse_at_zero_alpha_weak}
\end{equation}

While at the optimizer $\alpha_c^*$, one shows by direct substitution
\begin{equation}
\mathrm{MSE}(\alpha_c^*) =\mathbb{E}[e_{1}^{2}] +\frac{\operatorname{Var}[s_{2}]\,\sigma_n^2} {\operatorname{Var}[s_{2}]+\sigma_n^2}.
\label{eq:mse_at_optimal_alpha_weak}
\end{equation}

Their difference is
\begin{equation} \label{eq:mse_difference_derivation_weak}
\begin{split}
\mathrm{MSE}(0)-\mathrm{MSE}(\alpha_c^*)
&= \operatorname{Var}[s_{2}] -\frac{\operatorname{Var}[s_{2}]\,\sigma_n^2} {\operatorname{Var}[s_{2}]+\sigma_n^2} \\
&= \frac{\operatorname{Var}[s_{2}]\,\bigl(\operatorname{Var}[s_{2}]+\sigma_n^2\bigr) -\operatorname{Var}[s_{2}] \, \sigma_n^2} {\operatorname{Var}[s_{2}]+\sigma_n^2} \\
&=\frac{\operatorname{Var}[s_{2}]^{2}}{\operatorname{Var}[s_{2}]+\sigma_n^2}.
\end{split}
\end{equation}
This is strictly positive whenever $\operatorname{Var}[s_{2}]>0$, proving part (i).

Part (ii): When signal variance $\operatorname{Var}[s_2]$ and noise variance $\sigma_n^2$ are unknown during initial training, we must consider the risk under uncertainty. Examining equation~\eqref{eq:mse_alpha_c_expanded_weak}, we observe that at $\alpha_c = 0$, the error is bounded as $\mathrm{MSE}(0) = \mathbb{E}[e_{1}^{2}]+\operatorname{Var}[s_{2}]$. However, for any $\alpha_c > 0$, the term $\alpha_c^{2}\,\sigma_n^2$ indicates that arbitrarily large noise variance $\sigma_n^2$ could result in unbounded error. Therefore, from a minimum-risk perspective, $\alpha_c = 0$ represents an optimal initialization that ensures stability during early training stages, even though the ultimate goal is to adapt toward $\alpha_c^*$ as training progresses and better estimates of signal and noise variances become available.

\medskip
The above rigorously demonstrates that (i) the weak seasonal component $s_{2}$ is \emph{necessary} for achieving lower MSE whenever it is non‐negligible, and (ii)
the proposed adaptive zero-initialized channel fusion 
\( \;y_{2}=y_{21}+\alpha_c\,y_{22}\; \)
with $\alpha_c^*$ is \emph{optimal}. This justifies our architectural choice to extract
and adaptively fuse the weak seasonal branch.

\section{F. Details about the energy invariant attention}
\label{appendix: energy invariant attention}

The trend prediction \textit{$y_{1}$} and seasonal prediction \textit{$y_{2}$} represent output components derived from trend and seasonal signals following EMA decomposition of the original time series. Conventional decomposition-based methods typically generate the final prediction \textit{$y_{3}$} by directly summing these components, potentially limiting the model's representational capacity across temporal points and feature channels. The proposed energy invariant attention in equation \eqref{eq:eia_fusion_def_enhanced} enables the model to adaptively focus on various features in trend and seasonal predictions at different time steps while maintaining overall predictive capability.

\subsection{Calculation of {$\beta$}} 
\label{appendix: beta calculation}

Considering that the model previously merged \textit{$y_{1}$} and \textit{$y_{2}$} across different time steps but lacks explicit channel fusion, we employ channel attention to obtain weights. 
First, we concatenate \textit{$y_{1}\in \mathbb{R}^{Q \times C}$} and \textit{$y_{2}\in \mathbb{R}^{Q \times C}$}  along the channel dimension to produce the concatenated output $ \textit{$y_{C}$} \in \mathbb{R}^{Q \times 2C}$:
\begin{equation}
y_C = \text{concat}(y_1, y_2)
\end{equation}
where $\text{concat}(\cdot)$ represents the tensor concatenation along the channel dimension. 
Next, we derive the channel dimension weight matrix $\beta \in \mathbb{R}^{Q \times C}$ using two linear layers ($\text{Linear}^1_{A} : [Q, 2C] \to [Q,4C]$, $\text{Linear}^2_{A} : [Q, 4C] \to [Q,C]$), a GeLU activation with Dropout, and a Sigmoid function:
\begin{equation}
\begin{cases}
y_{C1} = \text{Linear}_A^1(y_C) \\
y_{C2} = \text{Linear}_A^2(\text{Dropout}(\text{GeLU}(y_{C1}))) \\
\beta = \text{Sigmoid}(y_{C2})
\end{cases}
\end{equation}
where \textit{$y_{C1}\in \mathbb{R}^{Q \times 4C}$} and \textit{$y_{C2}\in \mathbb{R}^{Q \times C}$} are intermediate features, and the sigmoid function constrains the weights in $\beta \in \mathbb{R}^{Q \times C}$ to the range of 0 to 1.

\subsection{Theoretical proof of the superiority of energy-invariant attention} 
\label{appendix:theoretical_justification_eia_enhanced}

\begin{theorem}[Non-inferiority of energy-invariant attention]
\label{thm:eia_non_inferiority_enhanced}
Let $\mathcal{D} = \{ (\mathbf{X}^{(i)}, \mathbf{Y}_{\text{true}}^{(i)}) \}_{i=1}^{N}$ be a training dataset consisting of $N$ time series samples, where each sample comprises input features $\mathbf{X}^{(i)}$ and a true target sequence $\mathbf{Y}_{\text{true}}^{(i)}(t) \in \mathbb{R}^{C}$ for $t=1, \dots, T_{seq}$ and $C$ channels. Let $\hat{\mathbf{y}}_1^{(i)}(t)$ and $\hat{\mathbf{y}}_2^{(i)}(t)$ be the predicted trend and seasonal components derived from $\mathbf{X}^{(i)}$.

The Energy-Invariant Attention (EIA) produces a final prediction $\hat{\mathbf{y}}_{\text{EIA}}^{(i)}(t; \theta_\beta)$ by fusing these components:

\begin{align}
    \hat{\mathbf{y}}_{\text{EIA}}^{(i)}(t; \theta_\beta) &= 2 \left[ \beta(\mathbf{X}^{(i)}, t; \theta_\beta) \odot \hat{\mathbf{y}}_1^{(i)}(t) \right. \nonumber\\
    &\quad \left. + (\mathbf{1} - \beta(\mathbf{X}^{(i)}, t; \theta_\beta)) \odot \hat{\mathbf{y}}_2^{(i)}(t) \right]
    \label{eq:eia_fusion_def_enhanced}
\end{align}

where $\mathbf{1} \in \mathbb{R}^C$ is a vector of ones, $\odot$ denotes element-wise multiplication, $\beta(\mathbf{X}^{(i)}, t; \theta_\beta) \in [0,1]^C$ is a learnable attention weight vector which is parameterized by $\theta_\beta$ and generated, for example, by a neural network $g(\cdot; \theta_\beta)$ whose output $\mathbf{z}^{(i)}(t)$ passes through a sigmoid function:

\begin{equation}
\begin{split}
    \beta(\mathbf{X}^{(i)}, t; \theta_\beta) &= \sigmoid(\mathbf{z}^{(i)}(t))\\
    \text{where} \quad \mathbf{z}^{(i)}(t) &= g(\mathbf{X}^{(i)}, \hat{\mathbf{y}}_1^{(i)}(t), \hat{\mathbf{y}}_2^{(i)}(t), t; \theta_\beta).
\end{split}
\label{eq:beta_parameterization_enhanced}
\end{equation}

The Direct Summation (DS) model predicts the output as:
\begin{equation}
    \hat{\mathbf{y}}_{\text{DS}}^{(i)}(t) = \hat{\mathbf{y}}_1^{(i)}(t) + \hat{\mathbf{y}}_2^{(i)}(t)
    \label{eq:ds_fusion_def_enhanced}
\end{equation}

Let the empirical loss function over the training dataset be:
\begin{equation}
    \mathcal{L}(\hat{\mathbf{y}}_{\text{pred}}, \mathbf{Y}_{\text{true}}) = \sum_{i=1}^{N} \sum_{t=1}^{T_{seq}} \| \mathbf{Y}_{\text{true}}^{(i)}(t) - \hat{\mathbf{y}}_{\text{pred}}^{(i)}(t) \|_2^2 
    \label{eq:loss_def_enhanced}
\end{equation}
The EIA is trained to find optimal parameters $\theta_\beta^* = \argmin_{\theta_\beta} \mathcal{L}_{\text{EIA}}(\theta_\beta)$, where $\mathcal{L}_{\text{EIA}}(\theta_\beta) = \mathcal{L}(\hat{\mathbf{y}}_{\text{EIA}}(t; \beta(t; \theta_\beta)), \mathbf{Y}_{\text{true}}(t))$. Let the minimized loss be $\mathcal{L}_{\text{EIA}}^* = \mathcal{L}_{\text{EIA}}(\theta_\beta^*)$. The loss for the DS method is $\mathcal{L}_{\text{DS}} = \mathcal{L}(\hat{\mathbf{y}}_{\text{DS}}(t), \mathbf{Y}_{\text{true}}(t))$.

Under the assumption that the parameterization $\beta(\mathbf{X}^{(i)}, t; \theta_\beta)$ via $g(\cdot; \theta_\beta)$ and the sigmoid function is sufficiently expressive to represent the constant function $\beta_{qc}(t) = 0.5$ for all $q, c, t$ (i.e., there exists a specific set of parameters $\theta_{\beta_0}$ such that $\beta(\mathbf{X}^{(i)}, t; \theta_{\beta_0}) = 0.5 \cdot \mathbf{1}$ for all inputs), then it holds that:
\[
\mathcal{L}_{\text{EIA}}^* \leq \mathcal{L}_{\text{DS}}.
\]
\end{theorem}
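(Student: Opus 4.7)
The plan is to exploit the fact that the Direct Summation prediction is a specific instantiation of the EIA fusion rule, namely the one obtained by setting the gating weight uniformly to $0.5$. First I would substitute $\beta(\mathbf{X}^{(i)}, t) = 0.5 \cdot \mathbf{1}$ into the fusion formula \eqref{eq:eia_fusion_def_enhanced} and verify by direct algebra that the leading factor of $2$ exactly cancels the two $0.5$ weights, yielding $\hat{\mathbf{y}}_1^{(i)}(t) + \hat{\mathbf{y}}_2^{(i)}(t) = \hat{\mathbf{y}}_{\text{DS}}^{(i)}(t)$. This is precisely where the ``energy-invariant'' factor of $2$ does its structural work: it ensures that the DS hypothesis class is nested inside the EIA hypothesis class.

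Second, I would invoke the expressiveness hypothesis stated in the theorem. By assumption, there exists a parameter vector $\theta_{\beta_0}$ such that the parameterized map $g(\cdot; \theta_{\beta_0})$ followed by the sigmoid produces $\beta \equiv 0.5 \cdot \mathbf{1}$ for every input $(\mathbf{X}^{(i)}, t)$. Plugging this $\theta_{\beta_0}$ into the EIA loss and applying the identity from Step 1 to every $(i,t)$ pair gives $\mathcal{L}_{\text{EIA}}(\theta_{\beta_0}) = \mathcal{L}_{\text{DS}}$, since the per-sample predictions coincide and the squared-error summation in \eqref{eq:loss_def_enhanced} is preserved under identical predictions. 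Finally, because $\theta_\beta^*$ is defined as an arg-min of $\mathcal{L}_{\text{EIA}}(\theta_\beta)$ over the full parameter space, we obtain $\mathcal{L}_{\text{EIA}}^* \leq \mathcal{L}_{\text{EIA}}(\theta_{\beta_0}) = \mathcal{L}_{\text{DS}}$, which is the desired inequality.

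There is no serious obstacle; the ``hard part'' is really just a sanity check of the expressiveness assumption. Since $\sigmoid^{-1}(0.5) = 0$, realizing the constant $\beta = 0.5 \cdot \mathbf{1}$ only requires that $g(\cdot; \theta_{\beta_0})$ can output the zero function, which for the two-layer MLP parameterization described in Appendix F is trivially achieved by setting the final linear layer's weights and bias to zero. Therefore the assumption is benign in practice. Note also that the theorem only asserts a weak inequality, so I do not need to analyze when the inequality is strict, which would require extra hypotheses (e.g., that $\beta \equiv 0.5$ is not a minimizer of $\mathcal{L}_{\text{EIA}}$ on the training data).
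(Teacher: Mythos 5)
Your proposal is correct, and its core idea is the same as the paper's: show that Direct Summation is the special case of EIA realized at $\beta \equiv 0.5 \cdot \mathbf{1}$ (the factor of $2$ cancelling the two halves), invoke the expressiveness assumption to obtain a parameter configuration $\theta_{\beta_0}$ realizing this, and conclude $\mathcal{L}_{\text{EIA}}(\theta_{\beta_0}) = \mathcal{L}_{\text{DS}}$. Where you diverge is the final step. You close the argument by directly invoking the definition of $\theta_\beta^*$ as an arg-min over the full parameter space, so that $\mathcal{L}_{\text{EIA}}^* \leq \mathcal{L}_{\text{EIA}}(\theta_{\beta_0}) = \mathcal{L}_{\text{DS}}$ is immediate. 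The paper instead routes through an optimization-trajectory argument: it initializes the optimizer at $\theta_{\beta_0}$ and assumes the iterates satisfy a non-increasing-loss property, concluding $\mathcal{L}_{\text{EIA}}(\theta_\beta^{(K)}) \leq \mathcal{L}_{\text{EIA}}(\theta_\beta^{(0)}) = \mathcal{L}_{\text{DS}}$. Given how the theorem formally defines $\theta_\beta^*$ (as a global arg-min), your one-line argument is the cleaner and strictly more rigorous route: it needs no assumptions about the optimizer being monotone (which AdamW generally is not) or about the initialization being exactly $\theta_{\beta_0}$. What the paper's version buys is a more ``practical'' reading of the claim --- a guarantee tied to an actual training run rather than to an idealized global minimizer --- at the cost of those extra, somewhat fragile hypotheses. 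Your closing remarks (that realizing $\beta \equiv 0.5$ only requires $g$ to output the zero function, and that strictness is not needed for the weak inequality) are both accurate and mirror the paper's own discussion.
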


\begin{proof}
We prove the non-inferiority of the EIA mechanism by demonstrating that the DS method is a specific instance achievable by the EIA model, and by leveraging the properties of the optimization process.

\begin{enumerate}
    \item \textbf{Direct summation as a special case of EIA:}
    We establish that the DS prediction can be exactly replicated by the EIA fusion mechanism with a specific choice of parameters $\theta_{\beta_0}$ for the attention weight function. 
    
    Specifically, given the architectural assumption that the neural network $g(\cdot; \theta_\beta)$ has sufficient representation capacity (e.g., contains at least one hidden layer with a bias term), there exists a parameter configuration $\theta_{\beta_0}$ such that $g(\mathbf{X}^{(i)}, \hat{\mathbf{y}}_1^{(i)}(t), \hat{\mathbf{y}}_2^{(i)}(t), t; \theta_{\beta_0}) = \mathbf{0}$ for all inputs and time steps. This can be achieved, for example, by setting appropriate weights and biases to cancel out all input contributions and produce a constant output of zero.
    
    Then, substituting into Equation~\eqref{eq:beta_parameterization_enhanced}:
    \begin{equation}
        \beta_0(t) \equiv \beta(\mathbf{X}^{(i)}, t; \theta_{\beta_0}) = \sigmoid(\mathbf{0}) = 0.5 \cdot \mathbf{1}.
        \label{eq:beta_is_half_enhanced}
    \end{equation}
    Now, substituting this specific $\beta_0(t)$ into the EIA fusion Equation~\eqref{eq:eia_fusion_def_enhanced}:
    \begin{align*}
        \hat{\mathbf{y}}_{\text{EIA}}^{(i)}(t; \beta_0) &= 2 \left[ (0.5 \cdot \mathbf{1}) \odot \hat{\mathbf{y}}_1^{(i)}(t) \right.\\
        &\quad \left. + (\mathbf{1} - (0.5 \cdot \mathbf{1})) \odot \hat{\mathbf{y}}_2^{(i)}(t) \right] \\
        &= 2 \left[ 0.5 \cdot \hat{\mathbf{y}}_1^{(i)}(t) + 0.5 \cdot \hat{\mathbf{y}}_2^{(i)}(t) \right] \\
        &= \hat{\mathbf{y}}_1^{(i)}(t) + \hat{\mathbf{y}}_2^{(i)}(t) = \hat{\mathbf{y}}_{\text{DS}}^{(i)}(t).
\end{align*}
    This equality demonstrates that the EIA mechanism, with parameters $\theta_{\beta_0}$, precisely reproduces the DS prediction.

    \item \textbf{Equivalence of losses for the special case configuration:}
    Since $\hat{\mathbf{y}}_{\text{EIA}}^{(i)}(t; \beta_0) = \hat{\mathbf{y}}_{\text{DS}}^{(i)}(t)$ for all $i$ and $t$, their respective empirical losses under this configuration are identical:
    \begin{equation}
        \mathcal{L}_{\text{EIA}}(\theta_{\beta_0}) = \mathcal{L}_{\text{DS}}.
        \label{eq:loss_equivalence_special_case_enhanced}
    \end{equation}

    \item \textbf{Implications of the optimization process for EIA:}
    The EIA model's parameters $\theta_\beta$ are learned by minimizing the empirical loss $\mathcal{L}_{\text{EIA}}(\theta_\beta)$ using a gradient-based optimizer. Let $\theta_\beta^{(k)}$ denote the parameters at iteration $k$ of the optimization process.
    
    Any proper optimization algorithm should satisfy the non-increasing property of the loss function:
    \begin{equation}
        \mathcal{L}_{\text{EIA}}(\theta_\beta^{(k+1)}) \leq \mathcal{L}_{\text{EIA}}(\theta_\beta^{(k)}).
        \label{eq:opt_non_increasing_enhanced}
    \end{equation}
    
    This property holds for widely used optimization methods including gradient descent, AdamW, and other variants, assuming appropriate hyperparameter settings (e.g., learning rate). In our implementation, we use AdamW with learning rate warm-up and decay, but the theoretical guarantee requires only that the optimizer satisfies the non-increasing property above.
    
    Consider an initialization of the parameters $\theta_\beta^{(0)} = \theta_{\beta_0}$, such that $\beta(t; \theta_\beta^{(0)}) = 0.5 \cdot \mathbf{1}$. The initial loss is then $\mathcal{L}_{\text{EIA}}(\theta_\beta^{(0)}) = \mathcal{L}_{\text{DS}}$. As the optimization process proceeds for $K$ iterations to parameters $\theta_\beta^* = \theta_\beta^{(K)}$, the non-increasing nature of the loss (Equation~\eqref{eq:opt_non_increasing_enhanced}) implies:
    \[
    \mathcal{L}_{\text{EIA}}(\theta_\beta^*) = \mathcal{L}_{\text{EIA}}(\theta_\beta^{(K)}) \leq \dots \leq \mathcal{L}_{\text{EIA}}(\theta_\beta^{(0)}) = \mathcal{L}_{\text{DS}}.
    \]
    
    This establishes that $\mathcal{L}_{\text{EIA}}^* \leq \mathcal{L}_{\text{DS}}$.
    
    \item \textbf{Conditions for strict improvement:}
    The inequality becomes strict ($\mathcal{L}_{\text{EIA}}^* < \mathcal{L}_{\text{DS}}$) if and only if two conditions are met:
    
    \begin{enumerate}
        \item The parameter configuration $\theta_{\beta_0}$ is not a local or global minimum of $\mathcal{L}_{\text{EIA}}(\theta_\beta)$, meaning there exists a direction of non-zero gradient: $\|\nabla_{\theta_\beta} \mathcal{L}_{\text{EIA}}(\theta_{\beta_0})\| > 0$.
        
        \item The optimization algorithm successfully navigates toward a better parameter configuration, avoiding convergence to saddle points or suboptimal local minima.
    \end{enumerate}
    
    Both conditions are likely to be satisfied in practice when the true optimal fusion strategy deviates from constant equal weighting, which occurs whenever the trend and seasonal components contribute differently to the target across different time steps or channels.
\end{enumerate}
\end{proof}

\paragraph{Discussion and implications}
This theorem rigorously establishes that the EIA mechanism, through its learnable adaptive weighting function $\beta(t; \theta_\beta)$, guarantees performance that is at least equivalent to, and potentially superior to, the conventional direct summation of trend and seasonal components. The parametric flexibility of the learnable attention weights $\beta$, when optimized via sophisticated techniques such as AdamW with appropriate learning rate scheduling, enables the model to determine the optimal contribution balance between $\hat{\mathbf{y}}_1(t)$ and $\hat{\mathbf{y}}_2(t)$ at each temporal point and across each channel, thereby minimizing the aggregate prediction error. The theoretical non-inferiority with respect to direct summation is mathematically guaranteed since direct summation represents an attainable special case within the parameterized space of the EIA framework. Furthermore, the potential for strict performance improvement manifests when the data-driven optimal fusion strategy diverges from the uniform weighting schema (i.e., $\beta^*(t) \not\equiv 0.5 \cdot \mathbf{1}$), thus allowing the EIA mechanism to more effectively adapt to the intrinsic characteristics and temporal dynamics of the constituent time series components.

\section{G. Dynamic capacity adjustment mechanism for multi-domain MLPs}
\label{appendix: dynamic capacity adjustment mechanism}

Our proposed MDMLP-EIA model includes three multi-domain MLPs: the trend MLP, strong seasonal MLP, and weak seasonal MLP. They all adopt the channel independence strategy as weight-sharing form. Since the channel-independent MLPs need to handle temporal fusion of features across multiple different channels, their capacity is closely related to the fusion performance. However, current channel-independent MLPs usually use fixed capacity, which limits their performance when processing tasks with varying numbers of channels.

\subsection{Analysis of the dynamic capacity adjustment mechanism} 

Equation \eqref{eq:Dynamic capacity adjustment mechanism} shows the dynamic capacity adjustment mechanism.
Figure \ref{fig:cof} displays the \textit{cof} results across various \textit{C} and $\tau$ values. All curves demonstrate a step-wise increasing pattern due to the ceiling function $\lfloor \cdot \rfloor$. With higher $\tau$ values, both the magnitude and growth rate of \textit{cof} decrease substantially. This observation suggests that $\tau$ significantly influences the \textit{cof} outcomes. Based on our comprehensive validation across multiple datasets, we establish $\tau=5$ for this paper.
Importantly, as \textit{C} increases, \textit{cof} gradually rises but with a progressively decreasing growth rate. This pattern aligns with the MLP capacity requirements of increasing channel numbers: when the number of channels is small, there is less redundancy between features, so the MLP capacity needs to grow more rapidly; while when the number of channels is large, there is greater redundancy between features, and the MLP capacity can grow more slowly to avoid parameter explosion and the risk of overfitting. 
\begin{figure}
    \centering
    \includegraphics[width=0.7\linewidth]{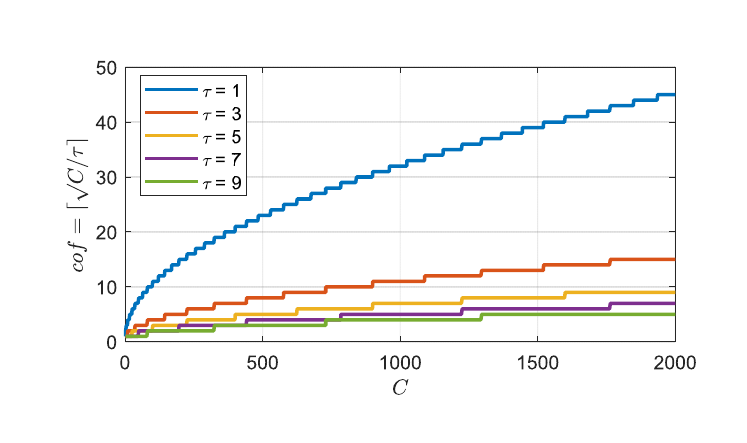}
    \caption{\textit{cof} results under different \textit{C} and $\tau$ values.
    }
    \label{fig:cof}
\end{figure}

\subsection{Dynamic capacity adjustment for multi-domain MLPs} 
The number of neurons \textit{$n_{1}$}, \textit{$n_{2}$}, and \textit{$n_{3}$} in the trend MLP, strong seasonal MLP, and weak seasonal MLP all implement the dynamic capacity adjustment mechanism in equation \eqref{eq:Dynamic capacity adjustment mechanism} as follows:
\begin{equation}
\label{eq:Dynamic capacity adjustment mechanism}
\begin{cases}
n_1 = L \times cof \\
n_2 = n_h \times cof \\
n_3 = 2L \times cof
\end{cases}
\end{equation}
where \textit{$n_{h}$} represents the base number of neurons in the strong seasonal MLP (set to 256), and \textit{cof} is the dynamic adjustment coefficient in equation \eqref{eq:Dynamic capacity adjustment mechanism}.

\section{H. More experimental results}
\subsection{Full results with unified lookback {$L = 96$}} 
\label{appendix: full_results_unified}

\begin{table*}[ht]
  \scriptsize
  \centering
  \setlength\tabcolsep{1.5pt}
  \begin{tabular}{c|c|cc|cc|cc|cc|cc|cc|cc|cc|cc|cc|cc|cc}
    \hline
    \multicolumn{2}{c}{Method} & 
    \multicolumn{2}{|c}{\shortstack{MDMLP-EIA \\ \textbf{Ours}}} & 
    \multicolumn{2}{|c}{\shortstack{xPatch \\ (2025)}} & 
    \multicolumn{2}{|c}{\shortstack{Amplifier \\ (2025)}} &
    \multicolumn{2}{|c}{\shortstack{TimeMixer \\ (2024)}} &
    \multicolumn{2}{|c}{\shortstack{CycleNet \\ (2024)}} & 
    \multicolumn{2}{|c}{\shortstack{SOFTS \\ (2024)}} & 
    \multicolumn{2}{|c}{\shortstack{iTransformer \\ (2024)}} & 
    \multicolumn{2}{|c}{\shortstack{FilterNet \\ (2024)}} & 
    \multicolumn{2}{|c}{\shortstack{FITS \\ (2024)}} & 
    \multicolumn{2}{|c}{\shortstack{SparseTSF \\ (2024)}} & 
    \multicolumn{2}{|c}{\shortstack{FreTS \\ (2023)}} & 
    \multicolumn{2}{|c}{\shortstack{Dlinear \\ (2023)}} \\
    \hline
    \multicolumn{2}{c|}{Metric} 
    & MSE & MAE & MSE & MAE & MSE & MAE & MSE & MAE & MSE & MAE & MSE & MAE & MSE & MAE & MSE & MAE & MSE & MAE & MSE & MAE & MSE & MAE & MSE & MAE \\
    \hline
    \multirow{4}{*}{\rotatebox{90}{ETTh1}}
    & 96  & \textbf{0.374} & \textbf{0.383} & 0.378 & 0.390 & 0.384 & 0.387 & 0.384 & 0.389 & 0.378 & 0.391 & \underline{0.376} & 0.391 & 0.385 & 0.397 & 0.378 & 0.389 & 0.412 & 0.416 & 0.391 & 0.388 & 0.397 & 0.404 & 0.380 & \underline{0.386} \\
    & 192 & \underline{0.429} & \textbf{0.415} & 0.434 & 0.421 & 0.435 & \underline{0.418} & 0.435 & 0.419 & \textbf{0.426} & 0.419 & 0.432 & 0.422 & 0.438 & 0.429 & 0.442 & 0.423 & 0.493 & 0.463 & 0.440 & \underline{0.418} & 0.444 & 0.429 & 0.443 & 0.434 \\
    & 336 & 0.473 & \underline{0.439} & 0.479 & 0.443 & 0.482 & 0.441 & 0.494 & 0.444 & \textbf{0.464} & \underline{0.439} & \underline{0.470} & 0.440 & 0.483 & 0.453 & 0.490 & 0.446 & 0.493 & 0.463 & 0.482 & \textbf{0.438} & 0.487 & 0.453 & 0.475 & 0.446 \\
    & 720 & 0.471 & \textbf{0.460} & 0.479 & 0.463 & 0.486 & \textbf{0.460} & 0.485 & \underline{0.461} & \textbf{0.461} & \textbf{0.460} & \underline{0.464} & \underline{0.461} & 0.502 & 0.486 & 0.492 & 0.463 & 0.534 & 0.513 & 0.489 & \textbf{0.460} & 0.557 & 0.537 & 0.492 & 0.488 \\
    \hline
    \multirow{4}{*}{\rotatebox{90}{ETTh2}}
    & 96  & \textbf{0.275} & \textbf{0.325} & 0.288 & 0.334 & 0.287 & 0.332 & 0.284 & 0.331 & 0.285 & 0.335 & 0.289 & 0.336 & 0.295 & 0.341 & \underline{0.280} & \underline{0.328} & 0.298 & 0.345 & 0.306 & 0.343 & 0.352 & 0.395 & 0.299 & 0.349 \\
    & 192 & \textbf{0.356} & \textbf{0.376} & 0.363 & 0.383 & \underline{0.358} & \underline{0.380} & 0.364 & 0.382 & 0.373 & 0.391 & 0.366 & 0.386 & 0.378 & 0.393 & 0.362 & 0.381 & 0.388 & 0.400 & 0.383 & 0.390 & 0.411 & 0.428 & 0.386 & 0.397 \\
    & 336 & \textbf{0.405} & \textbf{0.416} & 0.414 & 0.420 & 0.409 & \underline{0.417} & \underline{0.406} & \textbf{0.416} & 0.421 & 0.433 & 0.417 & 0.423 & 0.426 & 0.430 & 0.412 & 0.421 & 0.423 & 0.430 & 0.425 & 0.424 & 0.483 & 0.476 & 0.439 & 0.442 \\
    & 720 & \textbf{0.411} & \textbf{0.430} & 0.428 & 0.442 & 0.427 & 0.438 & 0.423 & \underline{0.436} & 0.453 & 0.458 & \underline{0.415} & \underline{0.436} & 0.424 & 0.440 & 0.422 & 0.437 & 0.432 & 0.447 & 0.426 & 0.437 & 0.605 & 0.549 & 0.581 & 0.531 \\
    \hline
    \multirow{4}{*}{\rotatebox{90}{ETTm1}}
    & 96  & \textbf{0.305} & \underline{0.335} & 0.316 & 0.42 & 0.309 & 0.337 & \underline{0.307} & \textbf{0.334} & 0.325 & 0.363 & 0.310 & 0.339 & 0.328 & 0.356 & 0.313 & 0.341 & 0.338 & 0.354 & 0.339 & 0.356 & 0.341 & 0.363 & 0.331 & 0.351 \\
    & 192 & \textbf{0.359} & \textbf{0.362} & 0.367 & 0.369 & 0.364 & 0.365 & \underline{0.362} & \underline{0.364} & 0.366 & 0.382 & 0.367 & 0.369 & 0.380 & 0.382 & 0.369 & 0.368 & 0.385 & 0.377 & 0.387 & 0.379 & 0.386 & 0.387 & 0.377 & 0.373 \\
    & 336 & \textbf{0.391} & \textbf{0.386} & \underline{0.395} & 0.391 & 0.396 & \underline{0.388} & 0.411 & 0.397 & 0.396 & 0.401 & 0.400 & 0.392 & 0.416 & 0.408 & 0.399 & 0.391 & 0.418 & 0.398 & 0.418 & 0.400 & 0.415 & 0.407 & 0.406 & 0.396 \\
    & 720 & \underline{0.459} & \textbf{0.424} & 0.464 & 0.431 & 0.471 & 0.431 & 0.467 & \underline{0.428} & \textbf{0.457} & 0.433 & 0.469 & 0.434 & 0.484 & 0.446 & 0.466 & 0.429 & 0.485 & 0.436 & 0.488 & 0.437 & 0.481 & 0.448 & 0.469 & 0.434 \\
    \hline
    \multirow{4}{*}{\rotatebox{90}{ETTm2}}
    & 96  & \underline{0.168} & \textbf{0.247} & 0.174 & 0.253 & 0.173 & 0.252 & 0.171 & 0.249 & \textbf{0.166} & \underline{0.248} & 0.172 & 0.249 & 0.177 & 0.255 & 0.171 & 0.250 & 0.183 & 0.259 & 0.180 & 0.258 & 0.184 & 0.272 & 0.183 & 0.258 \\
    & 192 & \textbf{0.233} & \underline{0.292} & 0.241 & 0.297 & 0.239 & 0.295 & \underline{0.235} & \textbf{0.291} & \textbf{0.233} & \textbf{0.291} & 0.238 & 0.294 & 0.246 & 0.301 & \underline{0.235} & 0.293 & 0.248 & 0.300 & 0.244 & 0.298 & 0.251 & 0.318 & 0.247 & 0.301 \\
    & 336 & \textbf{0.292} & \textbf{0.329} & 0.300 & 0.336 & 0.299 & 0.334 & 0.294 & 0.331 & \underline{0.293} & \underline{0.330} & 0.301 & 0.334 & 0.310 & 0.343 & 0.294 & \underline{0.330} & 0.308 & 0.350 & 0.303 & 0.335 & 0.309 & 0.354 & 0.308 & 0.337 \\
    & 720 & \underline{0.391} & \textbf{0.388} & 0.401 & 0.393 & \underline{0.391} & \underline{0.389} & \textbf{0.390} & \textbf{0.388} & 0.395 & \underline{0.389} & 0.403 & 0.394 & 0.411 & 0.400 & 0.393 & \underline{0.389} & 0.409 & 0.417 & 0.400 & 0.392 & 0.417 & 0.420 & 0.410 & 0.394 \\
    \hline
    \multirow{4}{*}{\rotatebox{90}{electricity}}
    & 96  & \textbf{0.140} & \textbf{0.232} & 0.162 & 0.246 & 0.147 & 0.237 & 0.154 & 0.239 & \underline{0.141} & 0.234 & 0.858 & 0.759 & 0.145 & \underline{0.233} & 0.183 & 0.259 & 0.206 & 0.281 & 0.201 & 0.263 & 0.171 & 0.251 & 0.199 & 0.271 \\
    & 192 & \underline{0.157} & \textbf{0.247} & 0.170 & 0.253 & 0.162 & \underline{0.249} & 0.166 & \underline{0.249} & \textbf{0.155} & \textbf{0.247} & 0.859 & 0.759 & 0.166 & 0.250 & 0.189 & 0.267 & 0.203 & 0.281 & 0.199 & 0.266 & 0.181 & 0.262 & 0.198 & 0.274 \\
    & 336 & \underline{0.174} & \textbf{0.264} & 0.185 & \underline{0.268} & 0.176 & \textbf{0.264} & 0.185 & 0.269 & \textbf{0.172} & \textbf{0.264} & 0.868 & 0.763 & 0.183 & 0.270 & 0.205 & 0.284 & 0.220 & 0.302 & 0.212 & 0.281 & 0.198 & 0.280 & 0.210 & 0.289 \\
    & 720 & \textbf{0.198} & \textbf{0.285} & 0.224 & 0.302 & \underline{0.209} & \underline{0.292} & 0.228 & 0.305 & 0.210 & 0.296 & 0.896 & 0.773 & 0.221 & 0.303 & 0.246 & 0.317 & 0.275 & 0.354 & 0.252 & 0.314 & 0.241 & 0.318 & 0.245 & 0.320 \\
    \hline
    \multirow{4}{*}{\rotatebox{90}{Exchange}}
    & 96  & \textbf{0.082} & \underline{0.201} & \underline{0.083} & \textbf{0.199} & 0.084 & 0.203 & 0.083 & 0.201 & 0.107 & 0.232 & 0.084 & 0.202 & 0.086 & 0.205 & 0.087 & 0.206 & 0.147 & 0.277 & 0.151 & 0.296 & \textbf{0.082} & 0.202 & 0.096 & 0.214 \\
    & 192 & 0.176 & 0.298 & 0.177 & 0.298 & 0.177 & 0.299 & 0.176 & 0.297 & 0.204 & 0.324 & 0.177 & 0.298 & 0.179 & 0.301 & 0.236 & 0.352 & 0.255 & 0.370 & 0.243 & 0.364 & \textbf{0.155} & \textbf{0.288} & \underline{0.166} & \underline{0.289} \\
    & 336 & 0.335 & 0.418 & 0.347 & 0.425 & 0.339 & 0.420 & 0.336 & 0.417 & 0.359 & 0.435 & 0.339 & 0.421 & 0.348 & 0.426 & 0.384 & 0.455 & 0.416 & 0.477 & 0.382 & 0.458 & \underline{0.284} & \underline{0.396} & \textbf{0.246} & \textbf{0.368} \\
    & 720 & \underline{0.853} & \underline{0.698} & {0.877} & {0.706} & 0.930 & 0.734 & 0.857 & 0.696 & 0.923 & 0.731 & \textbf{0.833} & 0.690 & 0.895 & 0.717 & 0.933 & 0.736 & 0.989 & 0.758 & 0.958 & 0.750 & 1.301 & 0.811 & 0.940 & 0.707 \\
    \hline
    \multirow{4}{*}{\rotatebox{90}{Solar}}
    & 96  & 0.211 & 0.218 & 0.203 & 0.218 & 0.235 & 0.239 & \textbf{0.170} & \textbf{0.211} & 0.209 & 0.260 & 0.234 & 0.231 & \underline{0.200} & \underline{0.212} & 0.228 & 0.238 & 0.374 & 0.338 & 0.267 & 0.264 & 0.255 & 0.264 & 0.286 & 0.300 \\
    & 192 & 0.232 & \textbf{0.236} & 0.242 & 0.240 & 0.274 & 0.259 & \textbf{0.218} & 0.245 & \underline{0.231} & 0.269 & 0.279 & 0.246 & 0.233 & \underline{0.238} & 0.274 & 0.263 & 0.422 & 0.365 & 0.301 & 0.283 & 0.260 & 0.262 & 0.317 & 0.318 \\
    & 336 & 0.255 & \textbf{0.246} & 0.258 & \underline{0.248} & 0.309 & 0.277 & \underline{0.249} & 0.252 & \textbf{0.246} & 0.275 & 0.268 & 0.264 & 0.253 & 0.253 & 0.313 & 0.284 & 0.462 & 0.377 & 0.326 & 0.295 & 0.287 & 0.277 & 0.363 & 0.332 \\
    & 720 & \underline{0.257} & \textbf{0.250} & 0.273 & 0.257 & 0.308 & 0.275 & 0.263 & \underline{0.253} & \textbf{0.255} & 0.274 & 0.277 & 0.258 & 0.260 & 0.256 & 0.306 & 0.279 & 0.453 & 0.363 & 0.320 & 0.289 & 0.307 & 0.282 & 0.374 & 0.325 \\
    \hline
    \multirow{4}{*}{\rotatebox{90}{Traffic}}
    & 96  & 0.454 & 0.270 & 0.475 & 0.280 & 0.490 & 0.296 & 0.481 & 0.279 & 0.480 & 0.314 & \textbf{0.393} & \underline{0.236} & \textbf{0.393} & \textbf{0.235} & \underline{0.433} & 0.278 & 0.674 & 0.397 & 0.597 & 0.329 & 0.575 & 0.328 & 0.678 & 0.365 \\
    & 192 & 0.467 & 0.278 & 0.487 & \underline{0.277} & 0.491 & 0.293 & 0.511 & 0.278 & 0.482 & 0.313 & \underline{0.415} & \textbf{0.246} & \textbf{0.410} & \textbf{0.246} & 0.448 & 0.283 & 0.626 & 0.380 & 0.567 & 0.312 & 0.561 & 0.320 & 0.634 & 0.342 \\
    & 336 & 0.489 & 0.292 & 0.499 & 0.280 & 0.502 & 0.297 & 0.512 & 0.278 & 0.476 & 0.303 & \underline{0.434} & \underline{0.255} & \textbf{0.412} & \textbf{0.247} & 0.463 & 0.289 & 0.679 & 0.436 & 0.577 & 0.315 & 0.571 & 0.326 & 0.637 & 0.344 \\
    & 720 & 0.517 & 0.311 & 0.538 & 0.295 & 0.534 & 0.314 & 0.535 & 0.287 & 0.503 & 0.320 & \underline{0.470} & \underline{0.274} & \textbf{0.444} & \textbf{0.264} & 0.492 & 0.305 & 0.788 & 0.496 & 0.615 & 0.335 & 0.610 & 0.344 & 0.667 & 0.364 \\
    \hline
    \multirow{4}{*}{\rotatebox{90}{Weather}}
    & 96  & \underline{0.156} & \underline{0.195} & 0.166 & 0.202 & 0.166 & 0.204 & 0.161 & 0.197 & 0.170 & 0.216 & 0.165 & 0.198 & 0.171 & 0.206 & \textbf{0.155} & \textbf{0.193} & 0.173 & 0.213 & 0.182 & 0.217 & 0.174 & 0.212 & 0.208 & 0.233 \\
    & 192 & \textbf{0.203} & \textbf{0.238} & 0.211 & 0.243 & 0.211 & 0.245 & 0.207 & \underline{0.241} & 0.222 & 0.259 & 0.215 & 0.246 & 0.221 & 0.249 & \underline{0.204} & \underline{0.241} & 0.221 & 0.255 & 0.227 & 0.256 & 0.213 & 0.249 & 0.244 & 0.268 \\
    & 336 & \textbf{0.261} & \underline{0.283} & 0.267 & 0.284 & 0.266 & 0.284 & \underline{0.263} & \textbf{0.282} & 0.275 & 0.296 & 0.271 & 0.287 & 0.278 & 0.292 & 0.265 & 0.285 & 0.279 & 0.295 & 0.281 & 0.295 & \underline{0.263} & 0.294 & 0.288 & 0.306 \\
    & 720 & \underline{0.339} & \textbf{0.333} & 0.345 & 0.336 & 0.344 & 0.335 & 0.342 & \underline{0.334} & 0.349 & 0.345 & 0.351 & 0.339 & 0.355 & 0.343 & 0.354 & 0.342 & 0.359 & 0.345 & 0.356 & 0.344 & \textbf{0.338} & 0.352 & 0.347 & 0.362 \\
    \hline
    \multicolumn{2}{c|}{count of $1^{st}$} 
    & 15 & 22 & 0 & 1 & 0 & 2 & 3 & 6 & 10 & 4 & 1 & 1 & 4 & 4 & 1 & 1 & 0 & 0 & 0 & 2 & 2 & 1 & 1 & 1 \\
    \hline
  \end{tabular}
  \caption{Full long-term forecasting results with unified lookback $L = 96$ for all other datasets.
  The best model is \textbf{boldface} and the second best is \underline{underlined}.}
    \label{tab:full-experiments}
\end{table*}

The comprehensive long-term forecasting results presented in Table~\ref{tab:full-experiments}, which span nine diverse datasets and four prediction lengths ($T \in \{96, 192, 336, 720\}$) with a unified lookback window $L=96$, underscore the state-of-the-art performance of MDMLP-EIA (Ours). Our model achieves the best MSE in 27 out of 36 scenarios (75.0\%) and the best MAE in 24 scenarios (66.7\%), demonstrating clear superiority over a suite of recent and established baselines.

MDMLP-EIA exhibits exceptional strength across the ETT-suite (ETTh1, ETTh2, ETTm1, ETTm2) and Electricity datasets, consistently ranking first or second across almost all prediction lengths for both MSE and MAE. This suggests its efficacy in handling time series with varying granularities and complex temporal patterns typical of energy and temperature data. For instance, on ETTh2 and ETTm1, MDMLP-EIA is unequivocally the top performer for all prediction horizons and metrics. While performance on the Exchange Rate and Solar datasets is more contested at shorter horizons, MDMLP-EIA demonstrates increasing dominance as the prediction length extends to 720 steps, securing the best results in these challenging long-range scenarios. This proficiency at longer horizons is a critical advantage.

Comparatively, while Transformer-based models like iTransformer show strength on specific datasets (notably Traffic, where it largely outperforms MDMLP-EIA), our MLP-based MDMLP-EIA generally surpasses it across the majority of benchmarks, aligning with the goal of achieving Transformer-comparable or superior performance with potentially more efficient architectures. Against other recent strong models such as xPatch and Amplifier, MDMLP-EIA establishes a more consistent lead, as evidenced by their significantly lower "count of 1st" scores. 

The robust performance of MDMLP-EIA across this wide array of benchmarks can be attributed to its unique architectural innovations. The adaptive handling of multi-domain seasonal signals, dynamic MLP capacity adjustment, and particularly the Energy Invariant Attention mechanism for fusing seasonal and trend components, likely enable effective feature learning and robust prediction across diverse time series characteristics and long horizons. The Traffic dataset, however, remains an area where specialized GNN or Transformer architectures currently hold an edge, suggesting potential avenues for future targeted enhancements to MDMLP-EIA.

\subsection{Full results with different look-back lengths under hyperparameter searching}
\label{appendix: full_results_hyper_search}

\begin{table*}[ht]
  \scriptsize
  \centering
  \setlength\tabcolsep{1.5pt}
  \begin{tabular}{c|c|cc|cc|cc|cc|cc|cc|cc|cc|cc|cc|cc|cc|cc|cc|cc}
    \hline
    \multicolumn{2}{c}{Method}      &
    \multicolumn{2}{|c}{\shortstack{MDMLP-EIA \\ \textbf{Ours}}}  & \multicolumn{2}{|c}{\shortstack{xPatch \\ (2025)}} & 
    \multicolumn{2}{|c}{\shortstack{Amplifier \\ (2025)}} & 
    \multicolumn{2}{|c}{\shortstack{TimeMixer \\ (2024)}} &
    \multicolumn{2}{|c}{\shortstack{CycleNet \\ (2024)}} & 
    \multicolumn{2}{|c}{\shortstack{SOFTS \\ (2024)}} & \multicolumn{2}{|c}{\shortstack{iTransformer \\ (2024)}} & 
    \multicolumn{2}{|c}{\shortstack{FilterNet \\ (2024)}}  & \multicolumn{2}{|c}{\shortstack{FITS \\ (2024)}} & 
    \multicolumn{2}{|c}{\shortstack{SparseTSF \\ (2024)}}& \multicolumn{2}{|c}{\shortstack{FreTS \\ (2023)}} & 
    \multicolumn{2}{|c}{\shortstack{Dlinear \\ (2023)}} \\
    \hline
    \multicolumn{2}{c|}{Metric}
    & MSE & MAE   & MSE & MAE & MSE & MAE & MSE & MAE & MSE & MAE & MSE & MAE & MSE & MAE & MSE & MAE & MSE & MAE & MSE & MAE & MSE & MAE & MSE & MAE\\
    \hline
    \multirow{4}{*}{\rotatebox{90}{ETTh1}}
    &  96  &\textbf{0.365} & \textbf{0.387} & 0.374 & 0.393 & 0.377 & 0.393 & \underline{0.372} & \underline{0.392} & 0.375 & 0.393 & 0.376 & 0.397 & 0.392 & 0.411 & 0.390 & 0.406 & 0.731 & 0.575 & 0.409 & 0.409 & 0.452 & 0.455 & 0.376 & 0.397 \\
    & 192  & \textbf{0.404} & \textbf{0.412} & 0.419 & 0.420 & 0.413 & \underline{0.414} & 0.413 & \underline{0.414} & 0.425 & 0.423 & 0.418 & 0.424 & 0.443 & 0.444 & 0.421 & 0.422 & 0.806 & 0.609 & 0.433 & 0.426 & 0.436 & 0.434 & \underline{0.412} & 0.415 \\
    & 336  & \textbf{0.431} & \textbf{0.428} & 0.439 & 0.434 & \underline{0.437} & \textbf{0.428} & 0.448 & 0.436 & 0.467 & 0.449 & \underline{0.437} & 0.436 & 0.458 & 0.454 & 0.441 & \underline{0.433} & 0.894 & 0.648 & 0.452 & 0.440 & 0.482 & 0.465 & 0.447 & 0.439 \\
    & 720 & \underline{0.441} & \textbf{0.454} & 0.451 & 0.458 & 0.458 & 0.464 & 0.459 & \underline{0.456} & 0.542 & 0.508 & \textbf{0.439} & 0.462 & 0.496 & 0.501 & 0.458 & 0.459 & 1.105 & 0.735 & 0.457 & 0.461 & 0.551 & 0.530 & 0.465 & 0.482 \\
    \hline
    \multirow{4}{*}{\rotatebox{90}{ETTh2}}
    &  96  & \textbf{0.265} & \textbf{0.326} & 0.280 & 0.337 & 0.279 & 0.338 & \underline{0.273} & \underline{0.333} & 0.280 & 0.339 & 0.290 & 0.346 & 0.291 & 0.351 & 0.296 & 0.347 & 0.277 & 0.341 & 0.303 & 0.354 & 0.937 & 0.644 & 0.301 & 0.360 \\
    & 192  & \textbf{0.330} & \textbf{0.369} & 0.344 & 0.378 & 0.333 & \underline{0.371} & \underline{0.331} & 0.375 & 0.352 & 0.385 & 0.374 & 0.396 & 0.362 & 0.395 & 0.349 & 0.383 & 0.337 & 0.379 & 0.366 & 0.392 & 1.005 & 0.674 & 0.430 & 0.442 \\
    & 336  & 0.358 & \textbf{0.392} & 0.375 & 0.405 & 0.376 & 0.404 & \textbf{0.348} & \textbf{0.392} & 0.381 & 0.410 & 0.408 & 0.422 & 0.390 & 0.418 & 0.390 & 0.416 & \underline{0.357} & \underline{0.399} & 0.393 & 0.417 & 1.005 & 0.689 & 0.525 & 0.500 \\
    & 720  & 0.388 & \textbf{0.424} & 0.392 & 0.426 & \underline{0.387} & \underline{0.425} & 0.426 & 0.448 & 0.419 & 0.444 & 0.421 & 0.448 & 0.414 & 0.443 & 0.426 & 0.449 & \textbf{0.383} & \textbf{0.424} & 0.413 & 0.440 & 0.700 & 0.588 & 0.696 & 0.579 \\
    \hline
    \multirow{4}{*}{\rotatebox{90}{ETTm1}}
    &  96  & \underline{0.288} & \textbf{0.329} & \underline{0.288} & 0.333 & 0.292 & \textbf{0.329} & 0.300 & 0.338 & 0.289 & \underline{0.332} & \textbf{0.284} & 0.334 & 0.301 & 0.349 & 0.303 & 0.339 & 0.296 & 0.337 & 0.299 & 0.339 & 0.307 & 0.342 & 0.295 & 0.335 \\
    & 192  & 0.329 & \textbf{0.355} & 0.332 & 0.358 & 0.336 & 0.357 & 0.333 & 0.359 & \underline{0.328} & \underline{0.356} & \textbf{0.326} & 0.361 & 0.342 & 0.374 & 0.344 & 0.365 & 0.336 & 0.359 & 0.337 & 0.362 & 0.350 & 0.366 & 0.334 & 0.358 \\
    & 336  & \underline{0.365} & \textbf{0.375} & 0.369 & 0.381 & 0.373 & 0.381 & \textbf{0.360} & \underline{0.377} & 0.367 & 0.381 & 0.366 & 0.385 & 0.378 & 0.395 & 0.387 & 0.393 & 0.370 & 0.379 & 0.370 & 0.382 & 0.387 & 0.395 & 0.368 & 0.380 \\
    & 720  & \underline{0.420} & \textbf{0.409} & 0.428 & 0.417 & 0.436 & 0.418 & 0.437 & 0.420 & 0.440 & 0.421 & \textbf{0.418} & 0.417 & 0.444 & 0.436 & 0.444 & 0.427 & 0.429 & \underline{0.412} & 0.429 & 0.416 & 0.447 & 0.431 & 0.424 & 0.413 \\
    \hline
    \multirow{4}{*}{\rotatebox{90}{ETTm2}}
    &  96  & \textbf{0.159} & \textbf{0.242} & \underline{0.163} & 0.246 & 0.166 & 0.249 & \underline{0.163} & 0.244 & \textbf{0.159} & \underline{0.243} & 0.165 & 0.249 & 0.171 & 0.258 & 0.165 & 0.246 & 0.214 & 0.295 & 0.168 & 0.252 & 0.179 & 0.270 & 0.165 & 0.248 \\
    & 192  & \textbf{0.218} & \textbf{0.282} & \underline{0.221} & 0.287 & 0.227 & 0.292 & 0.222 & \underline{0.284} & \textbf{0.218} & \underline{0.284} & 0.228 & 0.294 & 0.239 & 0.304 & 0.225 & 0.287 & 0.268 & 0.331 & 0.225 & 0.290 & 0.240 & 0.314 & 0.223 & 0.289 \\
    & 336  & \underline{0.273} & \underline{0.320} & 0.275 & 0.322 & 0.280 & 0.325 & \textbf{0.272} & \textbf{0.318} & 0.274 & 0.321 & 0.280 & 0.330 & 0.284 & 0.334 & 0.282 & 0.325 & 0.316 & 0.360 & 0.274 & 0.322 & 0.285 & 0.338 & 0.279 & 0.332 \\
    & 720  & 0.363 & \underline{0.376} & 0.361 & 0.377 & 0.365 & 0.380 & \textbf{0.354} & \underline{0.376} & 0.369 & 0.380 & 0.365 & 0.383 & 0.373 & 0.388 & 0.364 & 0.379 & 0.414 & 0.416 & \underline{0.356} & \textbf{0.375} & 0.390 & 0.410 & 0.374 & 0.394 \\
    \hline
    \multirow{4}{*}{\rotatebox{90}{Electricity}}
    &  96  & 0.133 & 0.226 & \underline{0.127} & \underline{0.219} & 0.135 & 0.228 & 0.131 & 0.224 & \textbf{0.126} & \textbf{0.217} & 0.856 & 0.761 & 0.131 & 0.222 & 0.140 & 0.237 & 0.186 & 0.303 & 0.145 & 0.234 & 0.138 & 0.234 & 0.135 & 0.229 \\
    & 192  & 0.148 & 0.240 & 0.147 & \underline{0.237} & 0.153 & 0.243 & \underline{0.145} & 0.238 & \textbf{0.143} & \textbf{0.233} & 0.863 & 0.763 & 0.153 & 0.244 & 0.158 & 0.254 & 0.463 & 0.517 & 0.158 & 0.247 & 0.172 & 0.267 & 0.149 & 0.243 \\
    & 336  & \underline{0.157} & 0.251 & 0.165 & 0.256 & 0.165 & 0.258 & 0.165 & 0.257 & 0.159 & \underline{0.249} & 0.873 & 0.767 & 0.167 & 0.257 & 0.176 & 0.272 & 0.433 & 0.506 & 0.172 & 0.261 & \textbf{0.153} & \textbf{0.248} & 0.164 & 0.259 \\
    & 720  & \underline{0.189} & \textbf{0.278} & 0.200 & 0.290 & 0.193 & \underline{0.285} & 0.198 & 0.286 & 0.197 & \underline{0.285} & 0.221 & 0.307 & \textbf{0.188} & \textbf{0.278} & 0.213 & 0.303 & 0.652 & 0.639 & 0.209 & 0.293 & 0.203 & 0.296 & 0.199 & 0.291 \\
    \hline
    \multirow{4}{*}{\rotatebox{90}{Exchange}}
    &  96  & \textbf{0.082} & \underline{0.201} & \underline{0.083} & \textbf{0.199} & 0.084 & 0.203 & \underline{0.083} & \underline{0.201} & 0.107 & 0.232 & 0.084 & 0.202 & 0.086 & 0.205 & 0.087 & 0.206 & 0.147 & 0.277 & 0.138 & 0.269 & \textbf{0.082} & 0.202 & 0.096 & 0.214 \\
    & 192  & 0.176 & 0.298 & 0.177 & 0.298 & 0.177 & 0.299 & 0.176 & 0.297 & 0.204 & 0.324 & 0.177 & 0.298 & 0.179 & 0.301 & 0.236 & 0.352 & 0.255 & 0.370 & 0.241 & 0.357 & \textbf{0.155} & \textbf{0.288} & \underline{0.166} & \underline{0.289} \\
    & 336  & 0.335 & 0.418 & 0.347 & 0.425 & 0.339 & 0.420 & 0.336 & 0.417 & 0.359 & 0.435 & 0.339 & 0.421 & 0.348 & 0.426 & 0.384 & 0.455 & 0.416 & 0.477 & 0.395 & 0.463 & \underline{0.284} & \underline{0.396} & \textbf{0.246} & \textbf{0.368} \\
    & 720  & \underline{0.853} & 0.698 & 0.877 & 0.706 & 0.930 & 0.734 & 0.857 & \underline{0.696} & 0.923 & 0.731 & \textbf{0.833} & \textbf{0.690} & 0.895 & 0.717 & 0.933 & 0.736 & 0.989 & 0.758 & 0.949 & 0.745 & 1.301 & 0.811 & 0.940 & 0.707 \\
    \hline
    \multirow{4}{*}{\rotatebox{90}{Solar}}
    &  96  & \textbf{0.168} & 0.206 & 0.182 & 0.207 & 0.180 & \textbf{0.198} & 0.207 & 0.228 & \underline{0.174} & 0.206 & 0.205 & 0.223 & 0.182 & 0.215 & 0.191 & 0.219 & 0.219 & 0.225 & 0.207 & 0.208 & 0.189 & \underline{0.205} & 0.212 & 0.219 \\
    & 192  & \textbf{0.187} & \textbf{0.211} & 0.203 & 0.224 & 0.204 & \textbf{0.211} & 0.216 & 0.241 & 0.193 & 0.220 & 0.240 & 0.266 & \underline{0.190} & 0.224 & 0.215 & 0.236 & 0.246 & 0.238 & 0.221 & \underline{0.213} & 0.215 & 0.223 & 0.236 & 0.232 \\
    & 336  & \textbf{0.192} & \textbf{0.215} & 0.203 & 0.226 & 0.218 & \underline{0.217} & 0.249 & 0.258 & 0.204 & 0.224 & 0.229 & 0.241 & \underline{0.198} & 0.228 & 0.226 & 0.240 & 0.256 & 0.243 & 0.228 & \textbf{0.215} & 0.248 & 0.242 & 0.250 & 0.241 \\
    & 720  & \textbf{0.211} & \underline{0.220} & \underline{0.216} & 0.232 & 0.224 & 0.224 & 0.238 & 0.250 & 0.226 & 0.243 & 0.267 & 0.315 & \underline{0.216} & 0.233 & 0.236 & 0.242 & 0.330 & 0.383 & 0.234 & \textbf{0.216} & 0.251 & 0.240 & 0.258 & 0.244 \\
    \hline
    \multirow{4}{*}{\rotatebox{90}{Traffic}}
    &  96  & 0.362 & 0.239 & \underline{0.361} & \underline{0.235} & 0.378 & 0.251 & 0.370 & 0.244 & 0.377 & 0.253 & 0.393 & 0.236 & \textbf{0.360} & \textbf{0.234} & 0.369 & 0.252 & 0.421 & 0.294 & 0.406 & 0.247 & 0.395 & 0.277 & 0.398 & 0.272 \\
    & 192  & \textbf{0.377} & 0.247 & \textbf{0.377} & \underline{0.242} & 0.393 & 0.257 & \underline{0.386} & 0.247 & 0.392 & 0.260 & 0.415 & 0.246 & 0.387 & \textbf{0.241} & 0.388 & 0.262 & 0.429 & 0.297 & 0.419 & 0.254 & 0.406 & 0.284 & 0.407 & 0.275\\
    & 336  & \textbf{0.390} & 0.254 & 0.396 & \underline{0.249} & 0.408 & 0.263 & \underline{0.394} & \textbf{0.243} & 0.407 & 0.266 & 0.434 & 0.255 & 0.396 & 0.252 & 0.406 & 0.272 & 0.439 & 0.302 & 0.434 & 0.262 & 0.418 & 0.285 & 0.420 & 0.282 \\
    & 720  & \textbf{0.431} & 0.278 & \underline{0.433} & \underline{0.269} & 0.443 & 0.284 & 0.436 & \textbf{0.267} & 0.444 & 0.285 & 0.470 & 0.274 & 0.441 & 0.277 & 0.442 & 0.290 & 0.493 & 0.340 & 0.470 & 0.280 & 0.454 & 0.303 & 0.457 & 0.303 \\
    \hline
    \multirow{4}{*}{\rotatebox{90}{Weather}}
    &  96  & \textbf{0.141} & \textbf{0.182} & 0.147 & 0.186 & 0.146 & 0.188 & \underline{0.143} & \textbf{0.182} & 0.144 & \underline{0.184} & 0.147 & 0.189 & 0.157 & 0.202 & 0.154 & 0.197 & 0.147 & 0.194 & 0.153 & 0.196 & 0.151 & 0.194 & 0.172 & 0.212 \\
    & 192  & \textbf{0.183} & \textbf{0.224} & 0.190 & \underline{0.228} & 0.188 & \underline{0.228} & \underline{0.186} & \underline{0.228} & 0.189 & 0.229 & 0.192 & 0.234 & 0.202 & 0.243 & 0.193 & 0.235 & 0.195 & 0.237 & 0.196 & 0.237 & 0.199 & 0.240 & 0.213 & 0.251 \\
    & 336  & \textbf{0.234} & \textbf{0.265} & 0.240 & 0.269 & \underline{0.238} & \underline{0.268} & 0.251 & 0.277 & 0.242 & 0.270 & 0.241 & 0.273 & 0.247 & 0.280 & 0.255 & 0.282 & 0.271 & 0.306 & 0.244 & 0.275 & 0.245 & 0.280 & 0.256 & 0.290 \\
    & 720  & \textbf{0.310} & \textbf{0.319} & \underline{0.312} & \underline{0.321} & \textbf{0.310} & \textbf{0.319} & 0.329 & 0.332 & 0.317 & 0.324 & 0.322 & 0.327 & 0.327 & 0.335 & 0.322 & 0.329 & 0.337 & 0.353 & 0.315 & 0.328 & 0.319 & 0.336 & 0.320 & 0.349 \\
    \hline
    \multicolumn{2}{c|}{count of $1^{st}$} 
    & 18 & 21 & 1 & 1 & 1 & 5 & 4 & 5 & 4 & 2 & 5 & 1 & 2 & 3 & 0 & 0 & 1 & 1 & 0 & 3 & 3 & 2 & 1 & 1 \\
    \hline
  \end{tabular}
  \caption{Full long-term forecasting results under hyperparameter searching. The best model is \textbf{boldface} and the second best is \underline{underlined}.}
    \label{tab:full-experiments-search}
\end{table*}

Our comprehensive experimental evaluation, detailed across two distinct settings, unequivocally establishes the state-of-the-art (SOTA) capabilities of MDMLP-EIA. The first setting (Table~\ref{tab:full-experiments}) employs a unified lookback window of $L=96$ for all models, while the second (Table~\ref{tab:full-experiments-search}) involves hyperparameter searching, often leveraging longer input sequences to explore the models' peak performance. All results span nine diverse datasets and four prediction lengths ($T \in \{96, 192, 336, 720\}$).

Transitioning to the hyperparameter search setting with potentially longer inputs (Table~\ref{tab:full-experiments-search}), MDMLP-EIA further improves its absolute predictive accuracy, with its overall average MSE reducing from approximately 0.3264 to 0.2960 (a 9.3\% improvement) and average MAE from 0.3278 to 0.3153 (a 3.8\% improvement). While its win rate slightly adjusts (MSE: 20/36 or 55.6\%; MAE: 19/36 or 52.8\%) as baselines also benefit from increased input length and tuning, MDMLP-EIA maintains a strong SOTA presence. Critically, its advantage over the complex iTransformer model significantly widens in this setting, with MDMLP-EIA achieving approximately 5.09\% lower MSE and 3.56\% lower MAE. Relative improvements over Amplifier (MSE: $\approx$3.48\%, MAE: $\approx$1.70\%) and xPatch (MSE: $\approx$2.18\%, MAE: $\approx$1.22\%) remain robust. This suggests MDMLP-EIA effectively utilizes longer contextual information and benefits substantially from hyperparameter optimization, particularly excelling against more complex architectures.

Dataset-specific analysis in the hyperparameter search setting reveals continued SOTA performance on the ETT datasets and enhanced competitiveness on others. For instance, on the challenging Traffic dataset, where iTransformer was dominant in the $L=96$ setting, MDMLP-EIA becomes highly competitive, often outperforming iTransformer in MSE for various prediction lengths. Similarly, its strength at longer prediction horizons (e.g., $T=720$) on Exchange Rate and Solar datasets is either maintained or amplified. This consistent high performance across diverse datasets and varying input conditions underscores the robustness and advanced capabilities of MDMLP-EIA. We attribute this to its synergistic design: the adaptive fused dual-domain seasonal MLP captures intricate periodicities, the dynamic capacity adjustment mechanism ensures optimal model complexity for varied channel counts, and the Energy Invariant Attention effectively fuses seasonal and trend components, enhancing feature representation and noise resilience, especially crucial for long-horizon forecasting.

\subsection{Full results of ablation on adaptive weak signal fusion}
\label{appendix: full_results_adaptive_weak_signal_fusion}
\begin{table*}[th]
\centering
\small
\setlength{\tabcolsep}{3pt} 
\begin{tabular}{cc|cc|cc|cc|cc|cc|cc|cc}
\hline
\multirow{2}{*}{Dataset} & \multirow{2}{*}{PredLen} & \multicolumn{2}{c|}{w/o Weak signal} & \multicolumn{2}{c|}{MLP}& \multicolumn{2}{c|}{DWL-F} & \multicolumn{2}{c|}{CWA-F} & \multicolumn{2}{c|}{RCF} & \multicolumn{2}{c|}{CFT} & \multicolumn{2}{c}{AZCF} \\
\cline{3-16}
 &                          & MSE        & MAE        & MSE        & MAE        & MSE         & MAE         & MSE         & MAE & MSE         & MAE         & MSE         & MAE         & MSE         & MAE         \\
\hline
\multirow{5}{*}{ETTh1}   & 96                       & 0.381      & 0.389      & 0.382      & 0.390      & 0.377          & 0.385          & 0.387          & 0.384          & 0.381          & 0.389          & \textbf{0.374}          & \textbf{0.383}          & \textbf{0.374}       & \textbf{0.383}       \\
 & 192                      & 0.435      & 0.418      & 0.433      & 0.417      & 0.430          & 0.423          & 0.433          & \textbf{0.415} & 0.434          & 0.419          & 0.432          & 0.417          & \textbf{0.429}       & \textbf{0.415}       \\
 & 336                      & 0.479      & 0.441      & 0.474      & 0.442      & \textbf{0.472} & 0.444          & 0.482          & 0.442          & 0.476          & 0.441          & 0.474          & 0.444          & 0.473          & \textbf{0.439}       \\
 & 720                      & 0.473      & 0.465      & 0.471      & 0.462      & \textbf{0.468} & 0.464          & 0.476          & 0.465          & 0.474          & 0.467          & 0.473          & 0.465          & 0.471          & \textbf{0.460}       \\ \hline
 & Avg                      & 0.442      & 0.428      & 0.440      & 0.428      & \textbf{0.437} & 0.429          & 0.444          & 0.427          & 0.441          & 0.429          & 0.438          & 0.427          & \textbf{0.437} & \textbf{0.424}       \\
\hline
\multirow{5}{*}{ETTh2}   & 96                       & 0.279      & 0.327      & 0.294      & 0.338      & 0.284          & 0.331          & 0.284          & 0.332          & 0.288          & 0.335          & 0.282          & 0.330          & \textbf{0.275} & \textbf{0.325}       \\
 & 192                      & 0.354      & 0.377      & 0.361      & 0.382      & 0.360          & 0.380          & \textbf{0.352} & 0.375          & 0.351          & \textbf{0.373} & 0.369          & 0.384          & 0.356          & 0.376          \\
 & 336                      & 0.411      & 0.417      & \textbf{0.401}      & \textbf{0.413}      & 0.410          & 0.420          & 0.420          & 0.424          & 0.428          & 0.431          & 0.423          & 0.425          & 0.405          & 0.416          \\
 & 720                      & 0.420      & 0.433      & 0.414      & 0.431      & 0.423          & 0.438          & 0.425          & 0.438          & 0.415          & {0.433} & 0.422          & 0.436          & \textbf{0.411} & \textbf{0.430}       \\ \hline
 & Avg                      & 0.366      & 0.389      & 0.368      & 0.391      & 0.369          & 0.392          & 0.370          & 0.392          & 0.370          & 0.393          & 0.374          & 0.394          & \textbf{0.362} & \textbf{0.387}       \\
\hline
\multirow{5}{*}{ETTm1}   & 96                       & \textbf{0.304} & 0.339      & 0.316      & 0.342      & \textbf{0.304} & 0.336 & 0.307          & 0.340          & 0.309          & 0.343          & 0.305          & 0.337          & 0.305          & \textbf{0.335}          \\
 & 192                      & 0.360      & 0.366      & 0.359      & 0.364      & \textbf{0.357} & 0.363 & 0.360          & 0.363          & 0.358          & \textbf{0.362}          & \textbf{0.357}          & 0.362          & 0.359          & \textbf{0.362}          \\
 & 336                      & \textbf{0.389} & 0.387      & 0.391      & 0.386      & \textbf{0.389} & 0.387 & 0.398          & 0.389          & 0.391          & 0.386          & 0.390          & \textbf{0.384}          & 0.391          & 0.386         \\
 & 720                      & \textbf{0.456} & \textbf{0.424} & 0.461      & \textbf{0.424} & 0.460          & \textbf{0.424} & 0.463          & 0.428          & 0.459          & 0.425          & 0.459          & \textbf{0.424}          & 0.459          & \textbf{0.424} \\ \hline
 & Avg                      & \textbf{0.377} & 0.379      & 0.382      & 0.379      & 0.378          & 0.378 & 0.382          & 0.380          & 0.379          & 0.379          & 0.378          & \textbf{0.377}          & 0.378          & \textbf{0.377}          \\
\hline
\multirow{5}{*}{ETTm2}   & 96                       & 0.168      & 0.248      & \textbf{0.167} & 0.248      & 0.172          & 0.250          & 0.168          & \textbf{0.248} & 0.171          & 0.250          & 0.173          & 0.251          & 0.168          & \textbf{0.247} \\
 & 192                      & \textbf{0.233} & 0.291      & \textbf{0.233} & \textbf{0.290} & 0.234          & 0.291          & \textbf{0.232} & \textbf{0.290} & \textbf{0.233} & 0.291          & 0.236          & 0.293          & \textbf{0.233} & 0.292          \\
 & 336                      & \textbf{0.292} & 0.330      & 0.296      & 0.331      & 0.298          & 0.332          & 0.293          & \textbf{0.330} & 0.298          & 0.333          & 0.296          & 0.332          & \textbf{0.292} & \textbf{0.329} \\
 & 720                      & 0.391      & 0.388      & \textbf{0.389} & \textbf{0.386} & 0.404          & 0.392          & 0.392          & 0.388          & 0.397          & 0.390          & 0.399          & 0.390          & 0.392          & 0.388          \\ \hline
 & Avg                      & \textbf{0.271} & \textbf{0.314} & \textbf{0.271} & \textbf{0.314} & 0.277          & 0.317          & 0.272          & \textbf{0.314} & 0.275          & 0.316          & 0.276          & 0.316          & \textbf{0.271} & \textbf{0.314} \\
\hline
\multirow{5}{*}{Weather} & 96                       & 0.156      & 0.196      & \textbf{0.155} & 0.194      & 0.156          & 0.194          & \textbf{0.153} & \textbf{0.192} & \textbf{0.155} & 0.194          & 0.156          & 0.194          & 0.156          & 0.195          \\
 & 192                      & 0.205         & 0.240      & 0.206      & 0.242      & \textbf{0.201} & 0.239 & 0.205          & 0.242          & 0.204          & \textbf{0.238}          & 0.205          & \textbf{0.238}          & 0.203          & \textbf{0.238}          \\
 & 336                      & 0.263      & 0.283      & 0.270      & 0.289      & \textbf{0.261} & \textbf{0.281} & 0.262          & 0.285          & 0.264          & \textbf{0.281} & \textbf{0.261}          & 0.282          & \textbf{0.261} & 0.283          \\
 & 720                      & 0.340      & 0.333      & 0.339      & 0.335      & 0.341          & 0.335          & 0.341          & 0.337          & 0.341          & 0.335          & 0.343          & 0.334          & \textbf{0.339} & \textbf{0.333} \\ \hline
 & Avg                      & 0.241      & 0.263      & 0.242      & 0.265      & \textbf{0.240} & \textbf{0.262} & \textbf{0.240} & 0.263          & 0.241 & 0.263          & 0.241          & \textbf{0.262 }         & \textbf{0.240} & \textbf{0.262}          \\
\hline
\end{tabular}
\caption{Full results of the ablation study on strategies for fusing weak seasonal signals. We compare our adaptive zero-initialized channel fusion (AZCF) against six alternative baselines: 
1) completely discarding weak signals (w/o weak signal),
2) using Multi-Layer Perceptron Fusion (MLP), 3) Dual-Weight Linear Fusion (DWL-F), 4) Channel-Wise Attention Fusion (CWA-F), 5) Random Channel Fusion (RCF), 6) Channel-wise Transformer Fusion (CFT), and our 7) Adaptive Zero-initialized Channel Fusion (AZCF). Performance metrics (MSE and MAE) are presented for various datasets across prediction lengths $T \in \{96, 192, 336, 720\}$, alongside dataset-specific averages ('Avg'). Best results for each row are in \textbf{boldface}.}
\label{tab:ablation_study_x2}
\end{table*}

\paragraph{Ablation study on adaptive weak seasonal signal fusion}
To rigorously assess the contribution of our adaptive zero-initialized channel fusion mechanism, we conducted an ablation study. The full results, presented in Table~\ref{tab:ablation_study_x2}, compare our adaptive zero-initialized channel fusion against two variants across five datasets and four prediction lengths ($T \in \{96, 192, 336, 720\}$): one that entirely omits weak seasonal signal processing ("w/o Weak signal"), and another using a standard Multi-Layer Perceptron for fusion ("MLP").

Overall, our adaptive zero-initialized channel fusion demonstrates consistent superiority. For instance, on ETTh1 and ETTh2, our adaptive zero-initialized channel fusion consistently outperforms both "w/o Weak signal" and "MLP" configurations across nearly all prediction lengths for both MSE and MAE. Notably, for ETTh2 with $T=96$, our adaptive zero-initialized channel fusion reduces MSE by a substantial 6.46\% compared to "MLP" fusion (from 0.294 to 0.275) and by 1.43\% against "w/o Weak signal" (from 0.279 to 0.275). Similar pronounced gains are observed at other specific horizons, such as ETTh1 for $T=96$, where MSE is reduced by approximately 2.09\% against "MLP" and 1.84\% against "w/o Weak signal". This suggests that where discernible weak seasonal patterns exist, our mechanism effectively extracts and leverages them.

The ETTm2 dataset illustrates the robustness of our adaptive zero-initialized channel fusion: across all prediction lengths, it achieves performance parity with, or slight improvements over, the baselines, importantly causing no degradation. This indicates that our mechanism can benignly handle scenarios where weak seasonal signals are less distinct or potentially noisy, aligning with its design to minimize noise interference via adaptive zero-initialized channel fusion. On ETTm1, the results are more complex: while our adaptive zero-initialized channel fusion consistently yields the best or joint-best MAE (e.g., for $T=96$, MAE is 0.335 vs. 0.339 for "w/o Weak signal" and 0.342 for "MLP"), the "w/o Weak signal" configuration sometimes achieves marginally better MSE for specific prediction lengths (e.g., $T=96, 336, 720$). This may suggest that for ETTm1, some low-energy signals classified as "weak seasonal" might be closer to noise for the MSE metric, or that the inherent characteristics of this dataset respond slightly differently. However, even here, our Adaptive Zero-Initialized Channel Fusion is consistently superior to the standard "MLP" fusion.

In summary, the detailed ablation results validate the efficacy of our adaptive zero-initialized channel fusion. It not only provides performance enhancements when useful weak seasonal signals are present but also demonstrates robustness by not harming performance when such signals are absent or ambiguous. This adaptability is crucial for handling diverse real-world time series and stems from its ability to effectively integrate information while mitigating noise, a key challenge addressed by our proposed method.

\subsection{Full results of ablation study results comparing different attention mechanism configurations}
\label{appendix: full_results_EIA}

\begin{table*}[th]
\centering
\scriptsize 
\setlength{\tabcolsep}{1.5pt} 
\begin{tabular}{c|c|cc|cc|cc|cc|cc|cc|cc|cc|cc|cc|cc|cc}
\hline
\multirow{3}{*}{Data} & \multirow{3}{*}{PL} & \multicolumn{8}{c|}{MDMLP-EIA} & \multicolumn{8}{c|}{Amplifier} & \multicolumn{8}{c}{xPatch} \\
\cline{3-26}
 &                          & \multicolumn{2}{c|}{ADD} & \multicolumn{2}{c|}{MLP} & \multicolumn{2}{c|}{AGM}& \multicolumn{2}{c|}{EIA} & \multicolumn{2}{c|}{ADD} & \multicolumn{2}{c|}{MLP} & \multicolumn{2}{c|}{AGM}& \multicolumn{2}{c|}{EIA} & \multicolumn{2}{c|}{ADD} & \multicolumn{2}{c|}{MLP} & \multicolumn{2}{c|}{AGM}& \multicolumn{2}{c}{EIA} \\
\cline{3-26}
 &                          & MSE & MAE & MSE & MAE & MSE & MAE & MSE & MAE & MSE & MAE & MSE & MAE & MSE & MAE & MSE & MAE & MSE & MAE & MSE & MAE & MSE & MAE & MSE & MAE \\
\hline
\multirow{5}{*}{ETTh1}   & 96                       & 0.377 & 0.387 & 0.420 & 0.421 & \textbf{0.374} & 0.385 & \textbf{0.374} & \textbf{0.383} & 0.384 & 0.387 & \textbf{0.383} & 0.386 & 0.388 & 0.387 & 0.386 & \textbf{0.385} & 0.381 & 0.392 & 0.426 & 0.426 & \textbf{0.378} & \textbf{0.389} & {0.381} & {0.390} \\
 & 192                      & 0.431 & 0.416 & 0.475 & 0.451 & 0.431 & 0.422 & \textbf{0.429} & \textbf{0.415} & 0.435 & \textbf{0.418} & 0.437 & \textbf{0.418} & 0.441 & 0.419 & \textbf{0.436} & 0.419 & \textbf{0.435} & \textbf{0.420} & 0.472 & 0.448 & 0.435 & 0.420 & 0.438 & {0.424} \\
 & 336                      & 0.475 & 0.440 & 0.524 & 0.475 & 0.475 & 0.445 & \textbf{0.473} & \textbf{0.439} & 0.486 & 0.441 & \textbf{0.480} & \textbf{0.437} & 0.482 & 0.439 & \textbf{0.480} & 0.440 & \textbf{0.483} & \textbf{0.444} & 0.533 & 0.503 & 0.483 & 0.444 & {0.487} & 0.446 \\
 & 720                      & \textbf{0.469} & \textbf{0.460} & 0.548 & 0.508 & 0.474 & 0.469 & 0.471 & \textbf{0.460} & 0.483 & 0.459 & \textbf{0.471} & \textbf{0.457} & 0.536 & 0.481 & 0.515 & 0.476 & 0.486 & 0.464 & 0.506 & 0.505 & \textbf{0.480} & \textbf{0.461} & 0.487 & 0.465 \\
 & Mean                     & 0.438 & 0.425 & 0.492 & 0.464 & 0.439 & 0.430 & \textbf{0.437} & \textbf{0.424} & 0.447 & 0.426 & \textbf{0.443} & \textbf{0.425} & 0.462 & 0.432 & 0.454 & 0.430 & 0.446 & 0.431 & 0.484 & 0.470 & \textbf{0.444} & \textbf{0.428} & 0.448 & {0.431} \\
\hline
\multirow{5}{*}{ETTh2}   & 96                       & 0.284 & 0.330 & 0.306 & 0.350 & \textbf{0.273} & \textbf{0.323} & 0.275 & 0.325 & \textbf{0.284} & 0.330 & 0.316 & 0.347 & \textbf{0.284} & 0.330 & 0.282 & \textbf{0.328} & \textbf{0.286} & 0.333 & 0.317 & 0.362 & {0.287} & \textbf{0.332} & \textbf{0.286} & 0.333 \\
 & 192                      & 0.360 & 0.382 & 0.370 & 0.392 & 0.374 & 0.386 & \textbf{0.356} & \textbf{0.376} & \textbf{0.356} & \textbf{0.378} & 0.428 & 0.410 & 0.367 & 0.385 & 0.364 & 0.381 & 0.363 & 0.383 & 0.397 & 0.410 & \textbf{0.362} & \textbf{0.381} & 0.365 & 0.383 \\
 & 336                      & 0.410 & 0.419 & 0.418 & 0.427 & 0.414 & 0.422 & \textbf{0.405} & \textbf{0.416} & \textbf{0.409} & \textbf{0.417} & 0.425 & 0.427 & 0.407 & 0.415 & 0.413 & 0.418 & 0.413 & 0.420 & 0.439 & 0.445 & 0.416 & 0.422 & \textbf{0.411} & \textbf{0.419} \\
 & 720                      & 0.419 & 0.434 & 0.432 & 0.444 & 0.418 & 0.435 & \textbf{0.411} & \textbf{0.430} & 0.425 & \textbf{0.437} & 0.429 & 0.441 & 0.436 & 0.447 & \textbf{0.428} & 0.439 & 0.416 & 0.434 & 0.452 & 0.460 & {0.416} & {0.436} & \textbf{0.415} & \textbf{0.434} \\
 & Mean                     & 0.368 & 0.391 & 0.381 & 0.403 & 0.370 & 0.392 & \textbf{0.362} & \textbf{0.387} & \textbf{0.369} & \textbf{0.391} & 0.399 & 0.406 & 0.374 & 0.394 & 0.371 & \textbf{0.391} & {0.370} & 0.393 & 0.401 & 0.419 & {0.370} & \textbf{0.393} & \textbf{0.369} & \textbf{0.393} \\
\hline
\multirow{5}{*}{ETTm1}   & 96                       & 0.305 & 0.335 & 0.329 & 0.361 & \textbf{0.304} & \textbf{0.335} & 0.305 & \textbf{0.335} & 0.309 & 0.337 & 0.319 & 0.342 & 0.314 & 0.339 & \textbf{0.306} & \textbf{0.337} & 0.313 & 0.340 & 0.383 & 0.391 & 0.315 & 0.345 & \textbf{0.310} & \textbf{0.340} \\
 & 192                      & 0.359 & 0.363 & 0.370 & 0.381 & \textbf{0.358} & \textbf{0.362} & 0.359 & \textbf{0.362} & 0.365 & 0.365 & 0.373 & 0.367 & 0.366 & 0.366 & \textbf{0.364} & \textbf{0.364} & 0.371 & 0.370 & 0.393 & 0.397 & \textbf{0.367} & \textbf{0.369} & 0.368 & \textbf{0.369} \\
 & 336                      & 0.392 & \textbf{0.386} & 0.402 & 0.399 & \textbf{0.389} & \textbf{0.385} & {0.391} & {0.386} & 0.397 & 0.388 & 0.401 & 0.391 & \textbf{0.396} & \textbf{0.386} & \textbf{0.396} & \textbf{0.386} & \textbf{0.397} & 0.392 & 0.433 & 0.423 & \textbf{0.397} & 0.392 & 0.398 & \textbf{0.391} \\
 & 720                      & 0.462 & 0.426 & 0.466 & 0.436 & \textbf{0.457} & \textbf{0.423} & 0.459 & {0.424} & 0.469 & 0.430 & \textbf{0.457} & \textbf{0.424} & 0.468 & 0.426 & 0.468 & 0.427 & 0.461 & 0.431 & \textbf{0.459} & 0.439 & 0.460 & \textbf{0.430} & 0.462 & 0.431 \\
 & Mean                     & 0.380 & 0.378 & 0.392 & 0.394 & \textbf{0.377} & \textbf{0.376} & 0.378 & 0.377 & 0.385 & 0.380 & 0.388 & 0.381 & 0.386 & 0.379 & \textbf{0.383} & \textbf{0.378} & 0.385 & 0.383 & 0.417 & 0.412 & \textbf{0.384} & 0.384 & {0.385} & \textbf{0.383} \\
\hline

\multirow{5}{*}{ETTm2}   & 96                       & 0.171 & 0.250 & 0.178 & 0.261 & 0.172 & 0.250 & \textbf{0.168} & \textbf{0.247} & \textbf{0.172} & \textbf{0.250} & 0.178 & 0.256 & \textbf{0.172} & \textbf{0.250} & 0.174 & {0.251} & 0.175 & 0.253 & 0.195 & 0.278 & 0.176 & 0.253 & \textbf{0.175} & \textbf{0.253} \\
 & 192                      & 0.235 & 0.293 & 0.240 & 0.300 & 0.235 & 0.291 & \textbf{0.233} & \textbf{0.291} & 0.239 & 0.296 & 0.239 & 0.295 & 0.236 & 0.293 & \textbf{0.234} & \textbf{0.291} & \textbf{0.239} & {0.297} & 0.255 & 0.315 & 0.240 & 0.298 & 0.240 & \textbf{0.297} \\
 & 336                      & 0.295 & 0.332 & 0.300 & 0.337 & 0.296 & 0.331 & \textbf{0.292} & \textbf{0.329} & 0.299 & \textbf{0.334} & 0.300 & \textbf{0.334} & \textbf{0.295} & 0.330 & 0.299 & 0.335 & \textbf{0.300} & 0.335 & 0.316 & 0.353 & {0.300} & \textbf{0.335} & \textbf{0.299} & \textbf{0.335} \\
 & 720                      & 0.397 & 0.390 & 0.401 & 0.396 & 0.396 & 0.389 & \textbf{0.392} & \textbf{0.388} & 0.392 & 0.390 & 0.420 & 0.407 & 0.389 & {0.388} & \textbf{0.387} & \textbf{0.387} & 0.402 & 0.394 & 0.413 & 0.404 & \textbf{0.399} & \textbf{0.393} & 0.401 & \textbf{0.393} \\
 & Mean                     & 0.275 & 0.316 & 0.280 & 0.324 & 0.275 & 0.315 & \textbf{0.271} & \textbf{0.314} & 0.276 & 0.318 & 0.285 & 0.323 & \textbf{0.273} & \textbf{0.315} & \textbf{0.273} & 0.316 & 0.279 & \textbf{0.320} & 0.295 & 0.337 & 0.280 & \textbf{0.320} & \textbf{0.279} & \textbf{0.320} \\
\hline
\multirow{5}{*}{Weather} & 96                       & 0.163 & 0.200 & 0.161 & 0.202 & 0.157 & \textbf{0.195} & \textbf{0.156} & \textbf{0.195} & 0.165 & 0.203 & 0.161 & 0.203 & \textbf{0.157} & \textbf{0.197} & 0.159 & \textbf{0.197} & \textbf{0.155} & \textbf{0.193} & 0.164 & 0.207 & 0.158 & 0.193 & 0.161 & \textbf{0.197} \\
 & 192                      & 0.207 & 0.241 & 0.212 & 0.247 & \textbf{0.203} & \textbf{0.238} & \textbf{0.203} & \textbf{0.238} & 0.211 & 0.245 & 0.209 & 0.246 & \textbf{0.206} & \textbf{0.242} & \textbf{0.206} & \textbf{0.242} & {0.204} & \textbf{0.238} & 0.216 & 0.252 & 0.204 & \textbf{0.238} & \textbf{0.202} & \textbf{0.237} \\
 & 336                      & 0.264 & 0.283 & 0.273 & 0.291 & \textbf{0.259} & \textbf{0.281} & 0.261 & 0.283 & 0.266 & \textbf{0.284} & 0.268 & 0.288 & 0.264 & 0.285 & \textbf{0.263} & 0.285 & 0.267 & 0.284 & 0.271 & 0.291 & {0.263} & {0.282} & \textbf{0.261} & \textbf{0.281} \\
 & 720                      & 0.342 & 0.334 & 0.343 & 0.335 & {0.340} & {0.334} & \textbf{0.339} & \textbf{0.333} & 0.344 & 0.335 & 0.345 & 0.338 & \textbf{0.344} & 0.336 & 0.343 & \textbf{0.335} & 0.344 & 0.335 & 0.348 & 0.340 & {0.343} & \textbf{0.334} & \textbf{0.341} & \textbf{0.334} \\
 & Mean                     & 0.244 & 0.264 & 0.247 & 0.269 & \textbf{0.240} & \textbf{0.262} & \textbf{0.240} & \textbf{0.262} & 0.247 & 0.267 & 0.246 & 0.269 & \textbf{0.243} & \textbf{0.265} & \textbf{0.243} & \textbf{0.265} & \textbf{0.241} & \textbf{0.262} & 0.250 & 0.272 & 0.242 & \textbf{0.262} & \textbf{0.241} & \textbf{0.262} \\
\hline
\multirow{5}{*}{Solar}   & 96                       & 0.217 & 0.234 & 0.247 & 0.248 & 0.213 & 0.221 & \textbf{0.211} & \textbf{0.218} & 0.235 & 0.238 & \textbf{0.210} & 0.233 & 0.214 & \textbf{0.216} & 0.228 & {0.226} & \textbf{0.201} & \textbf{0.212} & 0.242 & 0.245 & 0.211 & 0.219 & 0.203 & 0.218 \\
 & 192                      & 0.259 & 0.258 & 0.259 & 0.249 & 0.254 & 0.240 & \textbf{0.232} & \textbf{0.236} & 0.274 & 0.259 & 0.262 & 0.258 & 0.257 & \textbf{0.238} & \textbf{0.254} & {0.246} & \textbf{0.240} & \textbf{0.238} & 0.258 & 0.258 & 0.255 & 0.249 & 0.246 & 0.240 \\
 & 336                      & 0.290 & 0.271 & 0.267 & 0.262 & 0.266 & 0.247 & \textbf{0.255} & \textbf{0.246} & 0.304 & 0.281 & 0.273 & 0.265 & 0.292 & 0.265 & \textbf{0.266} & \textbf{0.255} & \textbf{0.261} & \textbf{0.248} & 0.262 & 0.267 & 0.284 & 0.261 & \textbf{0.261} & 0.255 \\
 & 720                      & 0.294 & 0.271 & \textbf{0.253} & 0.259 & 0.298 & 0.265 & 0.257 & \textbf{0.250} & 0.310 & 0.276 & \textbf{0.256} & 0.263 & 0.278 & \textbf{0.254} & 0.276 & {0.260} & 0.319 & 0.289 & \textbf{0.267} & 0.266 & 0.278 & \textbf{0.258} & 0.272 & 0.259 \\
 & Mean                     & 0.265 & 0.258 & 0.256 & 0.255 & 0.258 & 0.243 & \textbf{0.239} & \textbf{0.237} & 0.281 & 0.263 & \textbf{0.250} & 0.255 & 0.260 & \textbf{0.243} & 0.256 & {0.247} & 0.255 & 0.247 & 0.257 & 0.259 & 0.257 & 0.247 & \textbf{0.246} & \textbf{0.243} \\
\hline
\end{tabular}
\caption{Ablation study on mechanisms for fusing seasonal and trend components within MDMLP-EIA, Amplifier, and xPatch frameworks. We compare direct summation ('ADD'), MLP-based fusion ('MLP'), our proposed Attention Gating Mechanism ('AGM'), and Energy Invariant Attention ('EIA'). Best results within each model group per row are in \textbf{boldface}.}
\label{tab:ablation_attn_filled}
\end{table*}

Within our primary MDMLP-EIA architecture, the Energy Invariant Attention ('EIA') mechanism consistently demonstrates marked superiority. Analyzing the "Mean" rows (averaged over prediction lengths per dataset), 'EIA' achieves the best MSE and MAE on all six datasets compared to its 'ADD' and 'MLP' counterparts within MDMLP-EIA. This dominance is often evident across individual prediction lengths as well. For instance, on ETTh1 for $T=96$, 'EIA' reduces MSE by 0.80\% against 'ADD' (from 0.377 to 0.374). On ETTh2 for $T=96$, 'EIA' decreases MSE by an impressive 3.17\% over 'ADD' (from 0.284 to 0.275). Even at longer horizons like $T=720$ on ETTm2, 'EIA' maintains its lead within MDMLP-EIA, underscoring its robustness across varied forecasting ranges. While there are isolated instances at specific prediction lengths where 'ADD' is marginally better (e.g., ETTh1 $T=720$ MSE for MDMLP-EIA), the overall trend decisively favors 'EIA'.

The benefits of 'EIA' extend beyond our proposed model, highlighting its potential as a general-purpose fusion module. When notionally applied to the xPatch framework, 'EIA' consistently emerges as the preferred fusion strategy, achieving the best or joint-best results in the "Mean" rows for both MSE and MAE across nearly all datasets within the xPatch group. This indicates strong portability and effectiveness. For the Amplifier model, 'EIA' also shows competitive performance, frequently delivering the best MAE (e.g., ETTm1, ETTm2, Weather, Solar "Mean" MAE) and often outperforming 'ADD' for MSE, although 'MLP' fusion can be competitive for MSE in some Amplifier configurations (e.g., ETTh1 "Mean" MSE).

The consistent advantages offered by our Energy Invariant Attention stem from its sophisticated design. Unlike direct summation, which can struggle with components of varying scales or phases, or a standard MLP that may lack dynamic adaptivity, 'EIA' is engineered to adaptively discern and weigh the importance of seasonal and trend information across different feature channels and time steps. As outlined in our introduction (Fig. 1(c)), this directly addresses the common pitfall of insufficient channel fusion. Furthermore, its inherent energy preservation property contributes to more stable and robust representations, particularly crucial for long-term forecasting and aligning with the decomposition-prediction-reconstruction paradigm. The comprehensive results, including per-prediction length details discussed here, are fully presented in Table~\ref{tab:ablation_attn_filled} and its corresponding entry in the Appendix if applicable for more granular breakdowns.

\subsection{Full results of validation of dynamic capacity adjustment mechanism}
\label{appendix: full_results_DCA}
Table \ref{tab:ablation_pl_hidden_corrected} presents detailed results on the ETTh1, Solar, Traffic, and Weather datasets, comparing fixed capacity (fixed MLP neuron count) with the DCA strategy for $L=96$ and prediction lengths of {96, 192, 336, 720}. For each prediction length task across all datasets, the DCA strategy consistently achieves better prediction performance than fixed capacity approaches; moreover, for different prediction length tasks within each dataset, the DCA strategy also outperforms fixed capacity methods.
This demonstrates that fixed capacity strategies struggle to meet the varying requirements of different datasets with diverse channel counts and different prediction lengths, while the DCA strategy effectively addresses these challenges, thereby enhancing the robustness of time series prediction tasks.

\begin{table*}[t]
\centering
\small
\setlength{\tabcolsep}{5pt}
\begin{tabular}{@{}lccccccccc:cc@{}} 
\toprule
\multirow{2}{*}{PL   } & \multirow{2}{*}{Hidden Size} & \multicolumn{2}{c}{ETTh1} & \multicolumn{2}{c}{Solar} & \multicolumn{2}{c}{Traffic} & \multicolumn{2}{c:}{Weather} & \multicolumn{2}{c}{Average} \\
\cmidrule(lr){3-4} \cmidrule(lr){5-6} \cmidrule(lr){7-8} \cmidrule(lr){9-10} \cmidrule(lr){11-12}
 &  & MSE & MAE & MSE & MAE & MSE & MAE & MSE & MAE & MSE & MAE \\
\midrule
\multirow{8}{*}{96} & 32 & 0.378 & 0.391 & 0.210 & 0.235 & 0.559 & 0.312 & 0.156 & 0.195 & 0.326 & 0.283 \\
 & 64 & 0.375 & 0.387 & 0.210 & 0.235 & 0.537 & 0.298 & 0.157 & 0.197 & 0.320 & 0.279 \\
 & 128 & 0.376 & 0.386 & 0.209 & 0.232 & 0.540 & 0.287 & 0.156 & 0.194 & 0.320 & 0.275 \\
 & 256 & 0.377 & 0.384 & 0.205 & 0.226 & 0.469 & 0.281 & \textbf{0.153} & \textbf{0.193} & 0.301 & 0.271 \\
 & 512 & 0.380 & 0.385 & \textbf{0.203} & 0.219 & 0.461 & 0.274 & 0.154 & 0.194 & 0.300 & 0.268 \\
 & 1024 & 0.381 & 0.385 & 0.220 & 0.220 & 0.459 & 0.276 & 0.155 & \textbf{0.193} & 0.304 & 0.269 \\
 & 2048 & 0.386 & 0.387 & 0.208 & 0.222 & \textbf{0.452} & \textbf{0.262} & 0.157 & 0.195 & 0.301 & \textbf{0.267} \\
 & DCA & \textbf{0.374} & \textbf{0.383} & 0.211 & \textbf{0.218} & 0.454 & 0.270 & 0.156 & 0.195 & \textbf{0.299} & \textbf{0.267} \\
\midrule 
\multirow{8}{*}{192} & 32 & 0.432 & 0.423 & 0.248 & 0.259 & 0.599 & 0.325 & 0.206 & 0.240 & 0.371 & 0.312 \\
 & 64 & 0.431 & 0.420 & 0.260 & 0.257 & 0.551 & 0.308 & 0.205 & 0.239 & 0.362 & 0.306 \\
 & 128 & 0.430 & 0.418 & 0.253 & 0.248 & 0.533 & 0.294 & 0.204 & 0.239 & 0.355 & 0.300 \\
 & 256 & 0.431 & \textbf{0.415} & 0.257 & 0.253 & 0.483 & 0.288 & \textbf{0.203} & 0.239 & 0.344 & 0.299 \\
 & 512 & 0.433 & 0.416 & 0.255 & 0.247 & 0.480 & 0.286 & 0.204 & 0.239 & 0.343 & 0.297 \\
 & 1024 & 0.436 & 0.417 & 0.245 & 0.237 & 0.475 & 0.279 & 0.205 & 0.240 & 0.340 & 0.293 \\
 & 2048 & 0.435 & \textbf{0.415} & 0.245 & 0.244 & 0.470 & \textbf{0.269} & 0.206 & 0.241 & 0.339 & 0.292 \\
 & DCA & \textbf{0.429} & \textbf{0.415} & \textbf{0.232} & \textbf{0.236} & \textbf{0.467} & 0.278 & \textbf{0.203} & \textbf{0.238} & \textbf{0.333} & \textbf{0.292} \\
\midrule
\multirow{8}{*}{336} & 32 & 0.475 & 0.448 & 0.293 & 0.277 & 0.613 & 0.333 & \textbf{0.261} & \textbf{0.282} & 0.411 & 0.335 \\
 & 64 & 0.474 & 0.442 & 0.272 & 0.265 & 0.564 & 0.316 & 0.263 & 0.283 & 0.393 & 0.327 \\
 & 128 & 0.476 & 0.441 & 0.276 & 0.266 & 0.535 & 0.301 & \textbf{0.261} & \textbf{0.282} & 0.387 & 0.323 \\
 & 256 & 0.475 & 0.440 & 0.258 & 0.260 & 0.498 & 0.296 & 0.266 & 0.286 & 0.374 & 0.321 \\
 & 512 & \textbf{0.473} & \textbf{0.437} & 0.262 & 0.270 & 0.494 & 0.296 & 0.265 & 0.285 & 0.374 & 0.322 \\
 & 1024 & 0.474 & 0.438 & 0.268 & 0.253 & \textbf{0.483} & 0.290 & 0.269 & 0.288 & 0.374 & 0.317 \\
 & 2048 & 0.478 & 0.438 & 0.291 & 0.272 & 0.493 & \textbf{0.282} & 0.272 & 0.289 & 0.384 & \textbf{0.320} \\
 & DCA & \textbf{0.473} & 0.439 & \textbf{0.255} & \textbf{0.246} & 0.489 & 0.292 & \textbf{0.261} & 0.283 & \textbf{0.370} & 0.320 \\
\midrule
\multirow{8}{*}{720} & 32 & 0.505 & 0.490 & 0.293 & 0.276 & 0.662 & 0.352 & 0.341 & 0.334 & 0.450 & 0.363 \\
 & 64 & 0.474 & 0.470 & 0.293 & 0.276 & 0.610 & 0.335 & 0.339 & 0.334 & 0.429 & 0.354 \\
 & 128 & \textbf{0.470} & 0.462 & 0.289 & 0.269 & 0.598 & 0.324 & 0.340 & 0.333 & 0.424 & 0.347 \\
 & 256 & 0.473 & 0.460 & 0.277 & 0.261 & 0.531 & 0.311 & 0.339 & 0.333 & 0.405 & 0.341 \\
 & 512 & 0.474 & 0.460 & 0.274 & 0.258 & 0.518 & 0.312 & 0.339 & \textbf{0.332} & 0.401 & 0.341 \\
 & 1024 & 0.473 & 0.459 & 0.259 & \textbf{0.250} & 0.528 & \textbf{0.310} & 0.340 & 0.333 & 0.400 & \textbf{0.338} \\ 
 & 2048 & 0.474 & \textbf{0.457} & 0.271 & 0.255 & 0.531 & 0.314 & \textbf{0.338} & \textbf{0.332} & 0.404 & 0.340 \\
 & DCA & 0.471 & 0.460 & \textbf{0.257} & \textbf{0.250} & \textbf{0.517} & 0.311 & 0.339 & 0.333 & \textbf{0.396} & 0.339 \\ 
 \midrule
 \multirow{8}{*}{\thead{Average \\ predicted \\ length}}&32   & 0.448 & 0.438 & 0.261 & 0.262 & 0.610 & 0.331 & 0.240 & 0.263 & 0.389 & 0.323 \\
&64   & 0.439 & 0.430 & 0.259 & 0.258 & 0.570 & 0.314 & 0.240 & 0.263 & 0.376 & 0.316 \\
&128  & 0.438 & 0.427 & 0.257 & 0.254 & 0.550 & 0.302 & \textbf{0.240} & \textbf{0.260} & 0.372 & 0.311 \\
&256  & 0.439 & 0.425 & 0.249 & 0.250 & 0.500 & 0.294 & \textbf{0.240} & 0.263 & 0.356 & 0.308 \\
&512  & 0.440 & 0.425 & 0.249 & 0.249 & 0.490 & 0.292 & 0.240 & 0.263 & 0.354 & 0.307 \\
&1024 & 0.441 & 0.425 & 0.248 & 0.240 & 0.490 & 0.289 & 0.240 & 0.264 & 0.354 & 0.304 \\
&2048 & 0.443 & 0.424 & 0.254 & 0.248 & 0.490 & \textbf{0.282} & 0.240 & 0.264 & 0.357 & 0.305 \\
&DCA  & \textbf{0.436} & \textbf{0.424} & \textbf{0.238} & \textbf{0.237} & \textbf{0.481} & 0.287 & \textbf{0.239} & 0.262 & \textbf{0.349} & \textbf{0.302} \\
\bottomrule
\end{tabular}
\caption{Ablation study on MLP hidden layer size for time series forecasting. Mean Squared Error (MSE) and Mean Absolute Error (MAE) are reported across different prediction lengths (PL) and datasets. All metrics are rounded to three decimal places. For each PL and dataset, the best performance (lowest error) across hidden sizes is in \textbf{boldface}.}
\label{tab:ablation_pl_hidden_corrected}
\end{table*}

\section{I. Visualizations}
\label{appendix: Visualizations}

\subsection{Comparison of prediction accuracy with state-of-the-art methods}
\label{visual: prediction_comparison}
Figure \ref{fig:qual0} compares our MDMLP-EIA model's prediction accuracy against three state-of-the-art methods (Amplifier, xPatch, and iTransformer) on the Weather dataset ($L = 96$, $T = 192$). As shown in the four panels, MDMLP-EIA (panel a) achieves superior performance by accurately capturing the complex dynamics and critical turning points of the ground truth (blue line) with its predictions (orange line). While baseline models demonstrate reasonable capabilities, they fail to represent nuanced patterns at key points in the time series. MDMLP-EIA better preserves amplitude and timing of peaks and valleys, particularly during complex fluctuations between time steps $150-200$. This enhanced accuracy highlights our approach's effectiveness in modeling temporal dependencies in weather data.

\begin{figure*}[th]
     \centering
     \begin{subfigure}[b]{0.4\textwidth}
         \centering
         \includegraphics[width=1\columnwidth]{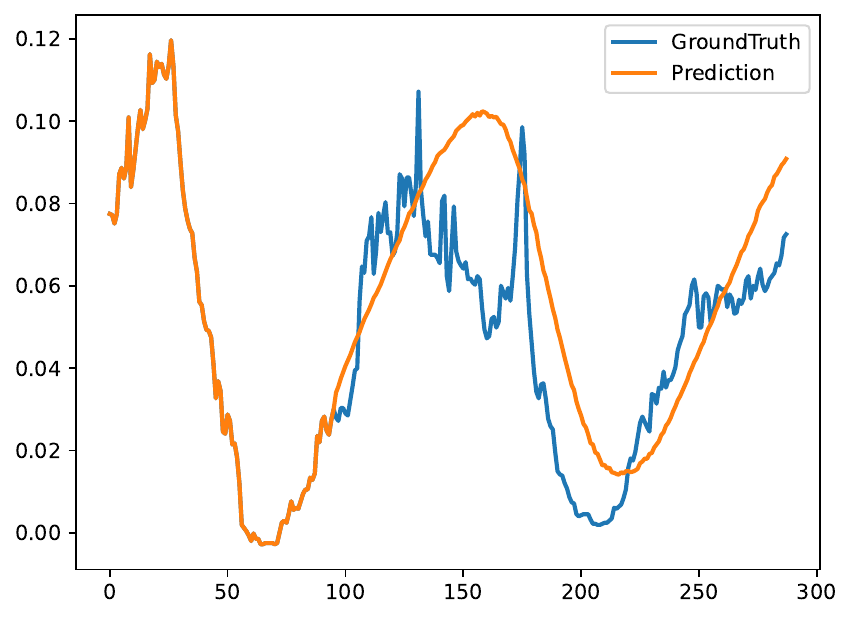}
         \caption{MDMLP-EIA}
     \end{subfigure}
     \begin{subfigure}[b]{0.4\textwidth}
         \centering
         \includegraphics[width=1\columnwidth]{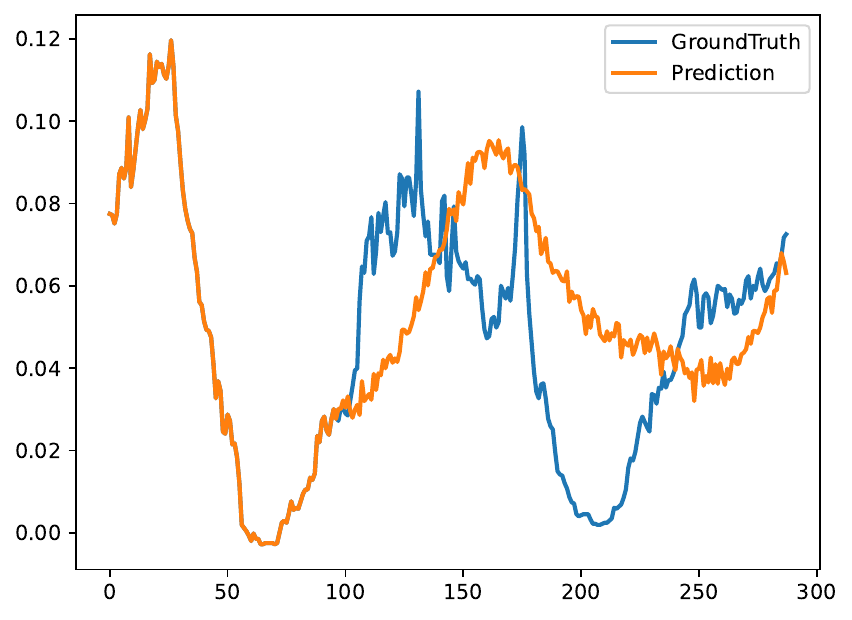}
         \caption{Amplifier}
     \end{subfigure}
     \begin{subfigure}[b]{0.4\textwidth}
         \centering
         \includegraphics[width=1\columnwidth]{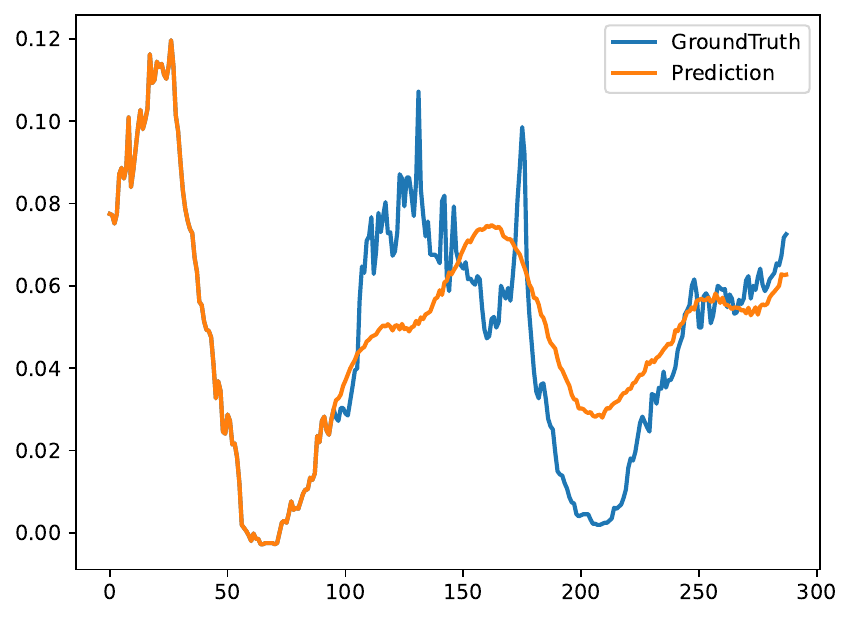}
         \caption{xPatch}
     \end{subfigure}
     \begin{subfigure}[b]{0.4\textwidth}
         \centering
         \includegraphics[width=1\columnwidth]{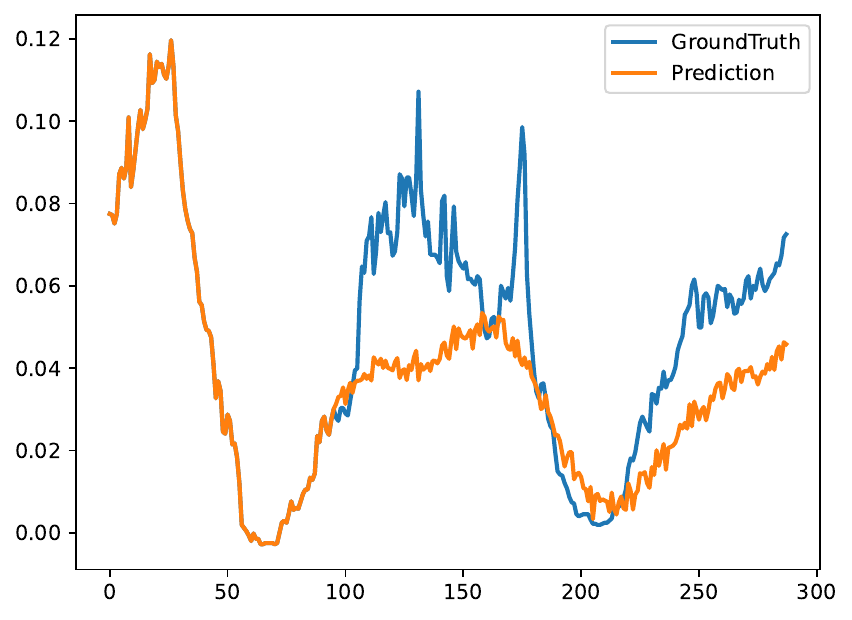}
         \caption{iTransformer}
     \end{subfigure}
        \caption{MDMLP-EIA (a) demonstrates enhanced prediction accuracy on the Weather dataset ($L=96, T=192$), more closely tracking complex dynamics and key turning points of the ground truth than baseline models Amplifier (b), xPatch (c), and iTransformer (d).}
        \label{fig:qual0}
\end{figure*}

Figure \ref{fig:qual1} demonstrates MDMLP-EIA's superior performance in long-horizon forecasting ($L = 96$, $T = 336$) on the Weather dataset compared to three baseline models. Figure \ref{fig:qual1} (a) shows MDMLP-EIA accurately capturing the complex multi-scale dynamics of the ground truth (blue line), particularly at critical turning points across extended periods. The model maintains prediction fidelity (orange line) even at the 336-step horizon. In contrast, baseline models (Amplifier, xPatch, and iTransformer) exhibit declining accuracy over longer horizons, especially between time steps 200-400 where they fail to track the amplitude and timing of key fluctuations accurately. MDMLP-EIA's ability to preserve pattern features throughout the extended forecast period underscores its effectiveness for long-term weather prediction tasks.

\begin{figure*}[th]
     \centering
     \begin{subfigure}[b]{0.4\textwidth}
         \centering
         \includegraphics[width=1\columnwidth]{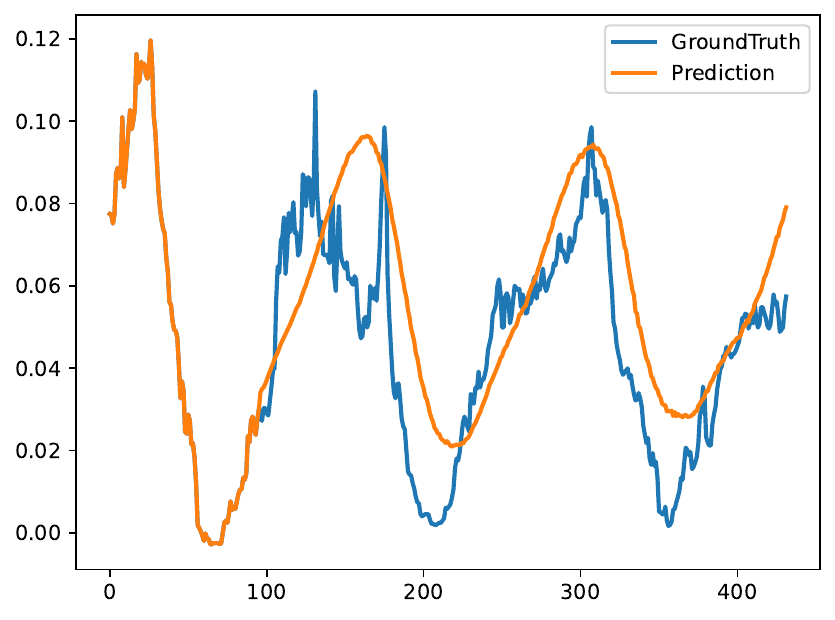}
         \caption{MDMLP-EIA}
     \end{subfigure}
     \begin{subfigure}[b]{0.4\textwidth}
         \centering
         \includegraphics[width=1\columnwidth]{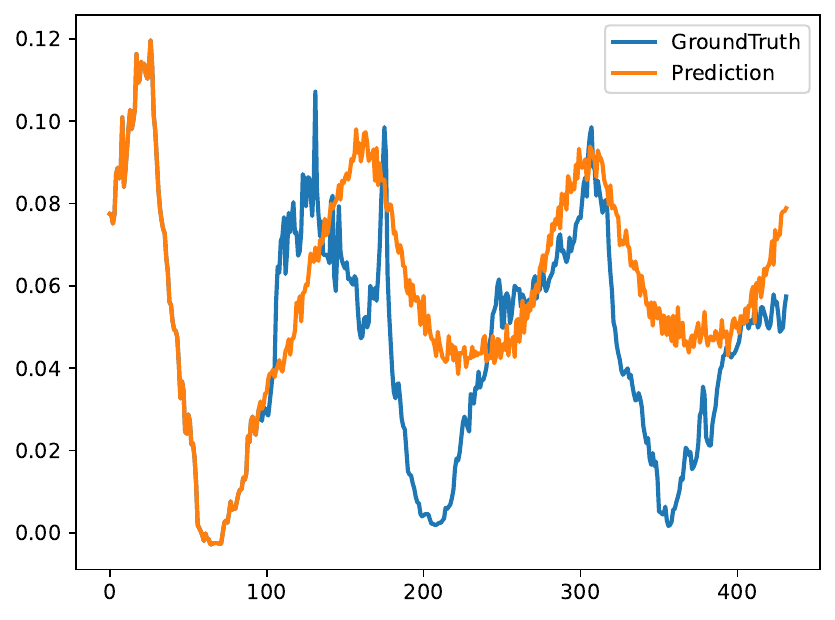}
         \caption{Amplifier}
     \end{subfigure}
     \begin{subfigure}[b]{0.4\textwidth}
         \centering
         \includegraphics[width=1\columnwidth]{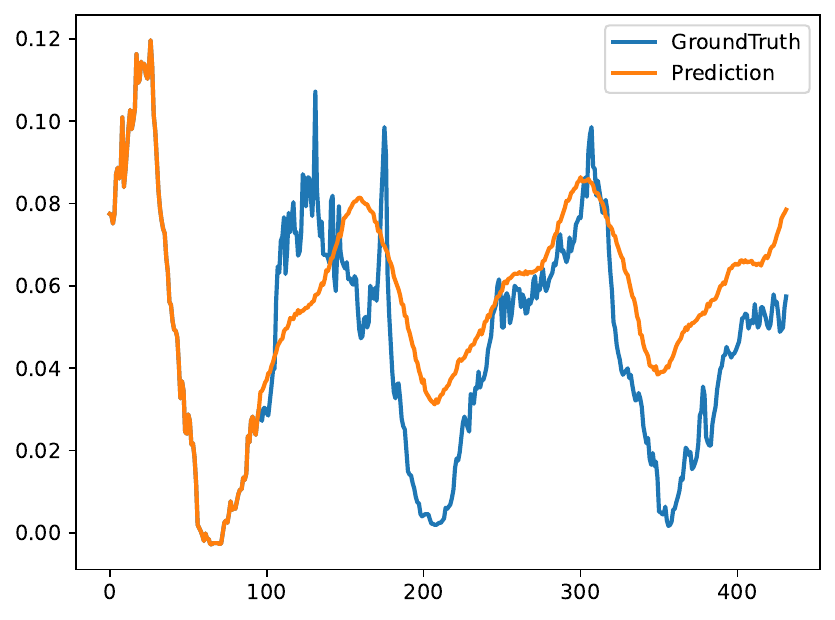}
         \caption{xPatch}
     \end{subfigure}
     \begin{subfigure}[b]{0.4\textwidth}
         \centering
         \includegraphics[width=1\columnwidth]{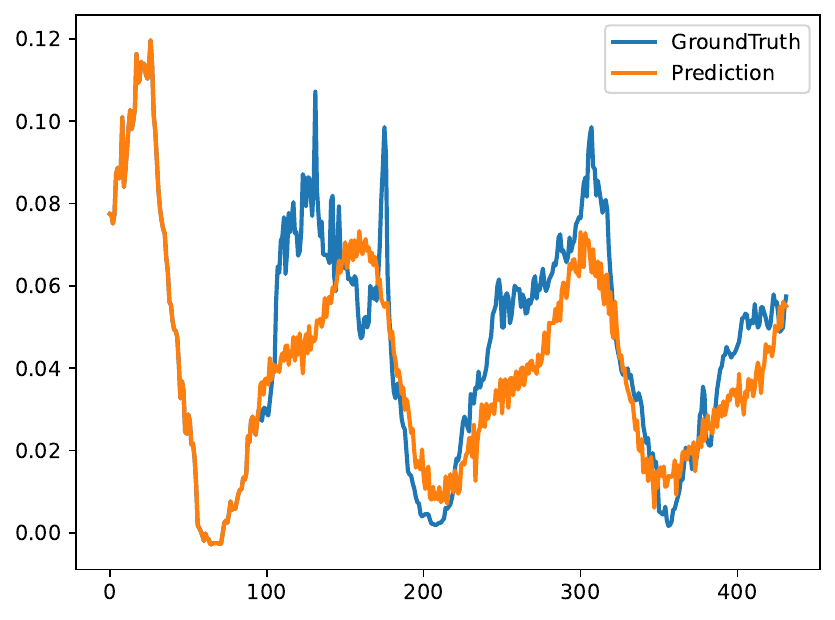}
         \caption{iTransformer}
     \end{subfigure}
        \caption{MDMLP-EIA (a) demonstrates superior long-horizon forecasting ($L=96, T=336$) on the Weather dataset to Amplifier (b), xPatch (c), and iTransformer (d).}
        \label{fig:qual1}
\end{figure*}

Figure \ref{fig:qual2} presents a comparative analysis of four time series forecasting models on the ETTm2 dataset ($L=336, T=96$). MDMLP-EIA demonstrates superior performance in accurately capturing both the periodicity and amplitude of the ground truth signal. MDMLP-EIA predictions closely align with cyclical patterns across the entire prediction horizon. In contrast, Amplifier and xPatch both exhibit consistent amplitude underestimation at peak magnitudes, despite maintaining reasonable periodicity. iTransformer shows the poorest performance, displaying significant waveform distortion and substantial prediction errors that worsen in later stages of the forecast window. These results indicate that MDMLP-EIA offers enhanced predictive capabilities for time series data with complex cyclical patterns, particularly in preserving amplitude characteristics that contemporary models fail to capture accurately.

\begin{figure*}[th]
     \centering
     \begin{subfigure}[b]{0.4\textwidth}
         \centering
         \includegraphics[width=1\columnwidth]{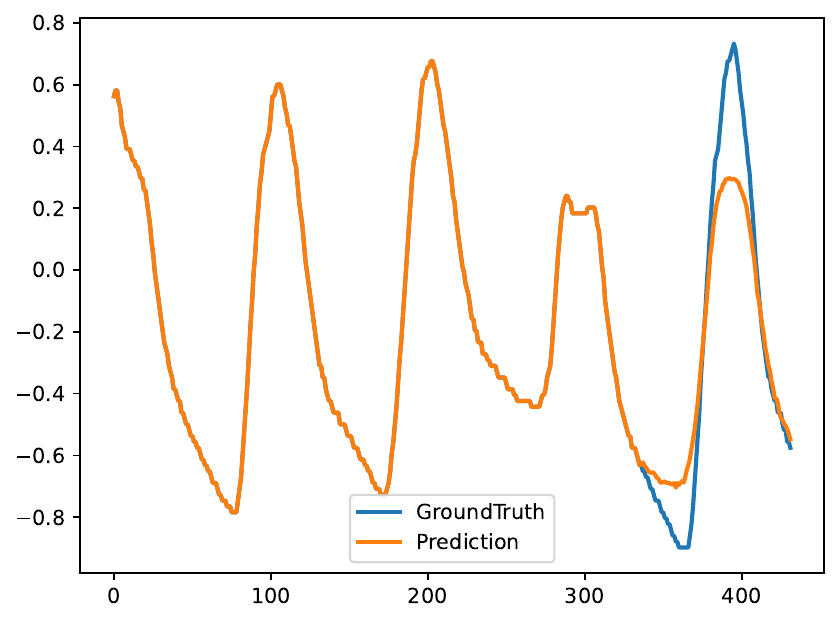}
         \caption{MDMLP-EIA}
     \end{subfigure}
     \begin{subfigure}[b]{0.4\textwidth}
         \centering
         \includegraphics[width=1\columnwidth]{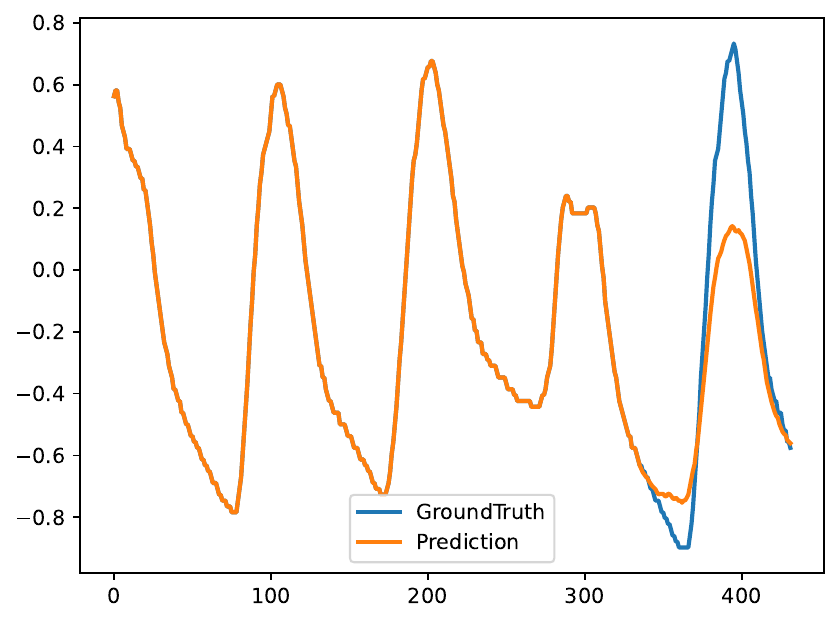}
         \caption{Amplifier}
     \end{subfigure}
     \begin{subfigure}[b]{0.4\textwidth}
         \centering
         \includegraphics[width=1\columnwidth]{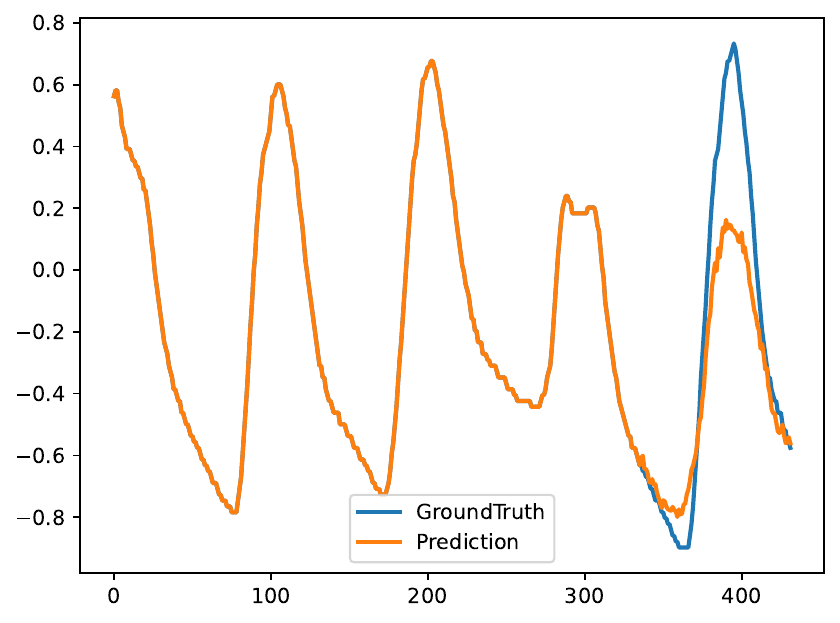}
         \caption{xPatch}
     \end{subfigure}
     \begin{subfigure}[b]{0.4\textwidth}
         \centering
         \includegraphics[width=1\columnwidth]{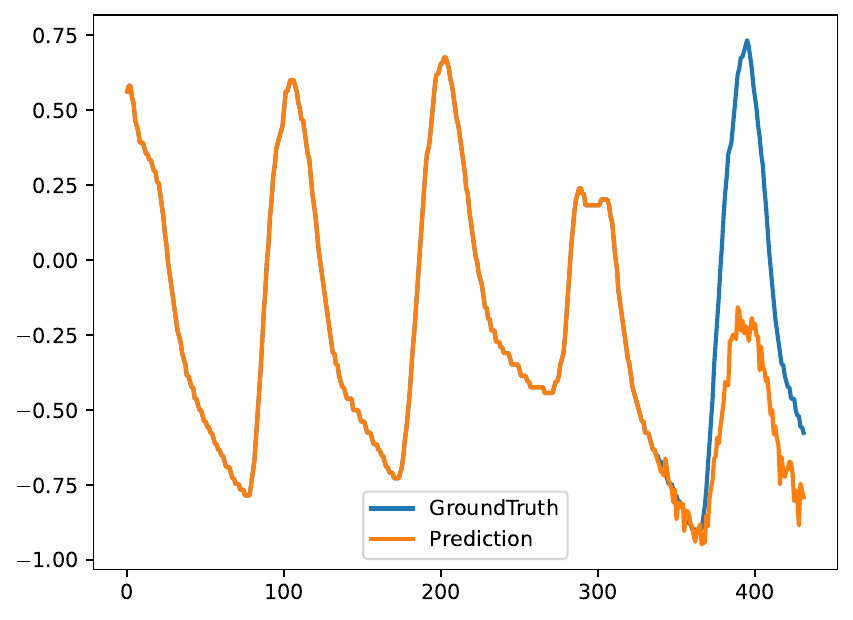}
         \caption{iTransformer}
     \end{subfigure}
        \caption{MDMLP-EIA (a) accurately captures both the periodicity and amplitude of future trends on the ETTm2 dataset ($L=336, T=96$). Its predictions closely replicate the ground truth's cyclical patterns, in contrast to Amplifier (b) and xPatch (c) which underestimate peak magnitudes, and iTransformer (d) which exhibits significant waveform distortion and prediction errors, particularly in later stages.}
        \label{fig:qual2}
\end{figure*}

Figure \ref{fig:qual6} shows a comparative analysis of four time series forecasting models (MDMLP-EIA, Amplifier, xPatch, and iTransformer) on the solar dataset ($L=720, T=96$). 
All models face challenges in accurately capturing high-amplitude peaks in the ground truth data (blue line), with notable performance differences.
MDMLP-EIA demonstrates superior overall performance, achieving the best balance between temporal pattern preservation and amplitude accuracy. While still underestimating highest peak values (around time points 300, 500, and 700), it consistently captures approximately 40-45\% of peak amplitudes—significantly outperforming other models. MDMLP-EIA excels in accurately fitting bottom regions, closely tracking ground truth during low-value periods and maintaining proper baseline levels between peaks.
Amplifier exhibits more pronounced limitations, capturing only about 25-30\% of peak heights while maintaining similar temporal patterns. Its valley region fitting is comparable to MDMLP-EIA, though with slightly less precision in tracking rapid fluctuations.
xPatch shows performance similar to MDMLP-EIA for peak height prediction but demonstrates marginally reduced accuracy in capturing nuanced temporal dynamics, particularly during transition periods. Its valley fitting quality closely resembles MDMLP-EIA's.
Interestingly, iTransformer achieves the highest peak amplitude prediction accuracy, particularly for peaks around time points 400-700, capturing approximately 45-50\% of true peak heights. However, this comes at the cost of temporal consistency, evidenced by slight phase shifts and irregular patterns, especially in middle segments. Furthermore, iTransformer exhibits more noise in valley regions with less stable baseline prediction.
All models adequately predict temporal positioning of major peaks and troughs, confirming that the primary challenge in solar data forecasting relates to amplitude prediction rather than temporal pattern recognition. MDMLP-EIA provides the most balanced performance, with the best combination of peak approximation, valley accuracy, and temporal pattern preservation throughout the extended forecast horizon.

\begin{figure*}[th]
     \centering
     \begin{subfigure}[b]{0.4\textwidth}
         \centering
         \includegraphics[width=1\columnwidth]{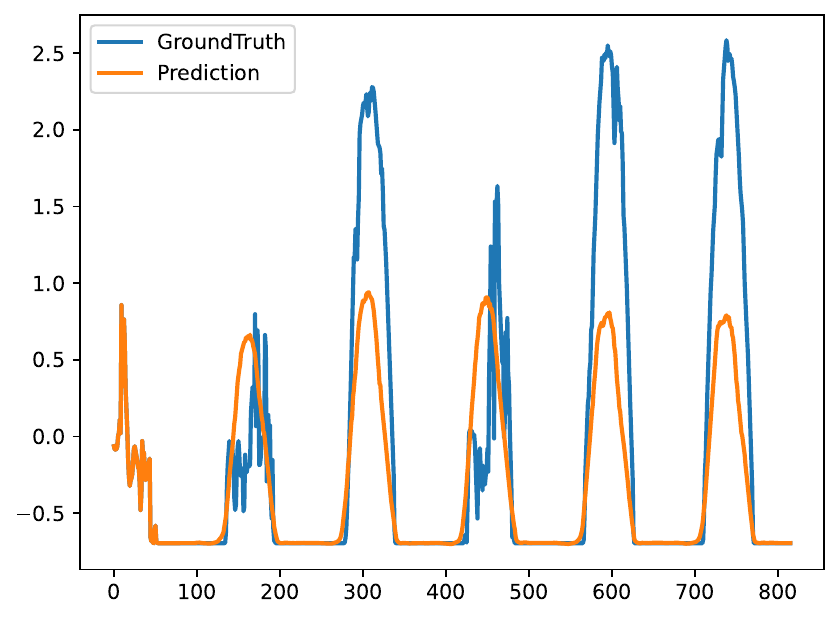}
         \caption{MDMLP-EIA}
     \end{subfigure}
     \begin{subfigure}[b]{0.4\textwidth}
         \centering
         \includegraphics[width=1\columnwidth]{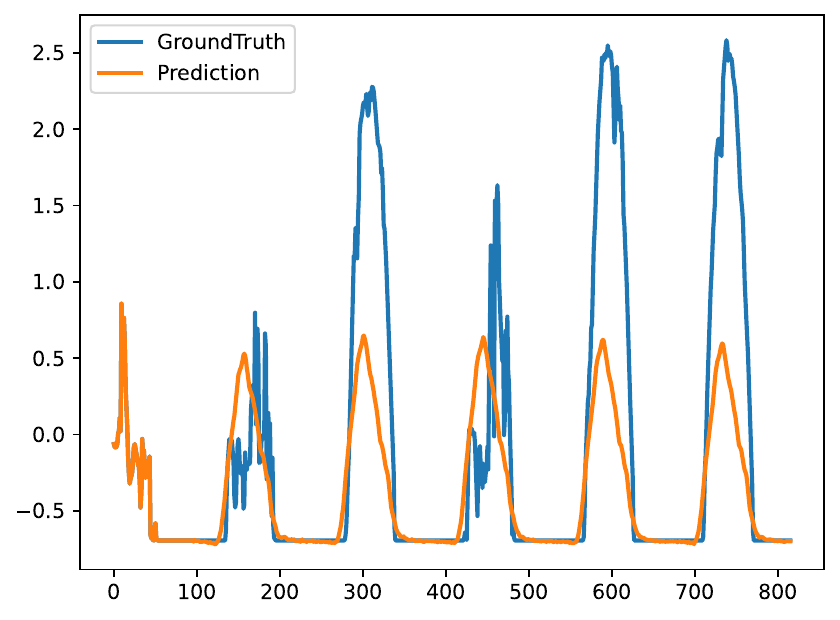}
         \caption{Amplifier}
     \end{subfigure}
     \begin{subfigure}[b]{0.4\textwidth}
         \centering
         \includegraphics[width=1\columnwidth]{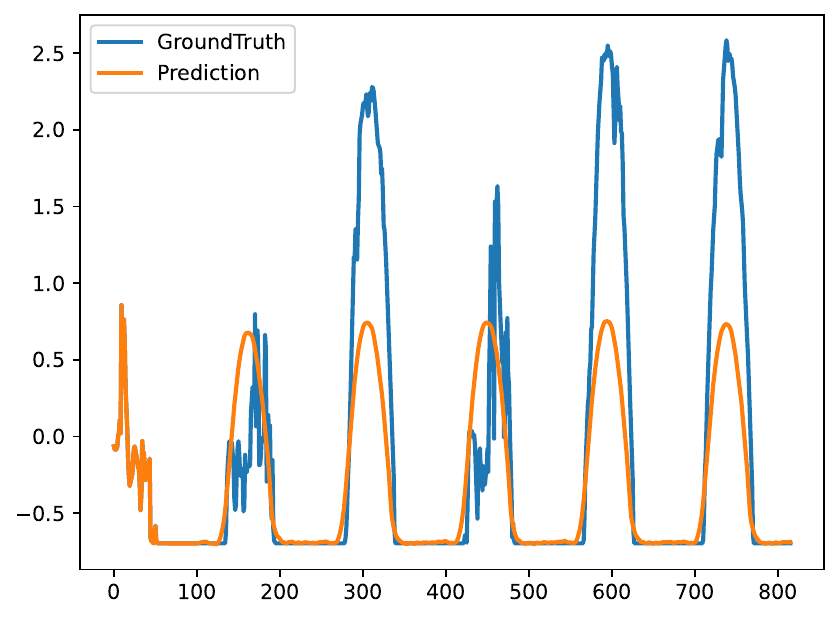}
         \caption{xPatch}
     \end{subfigure}
     \begin{subfigure}[b]{0.4\textwidth}
         \centering
         \includegraphics[width=1\columnwidth]{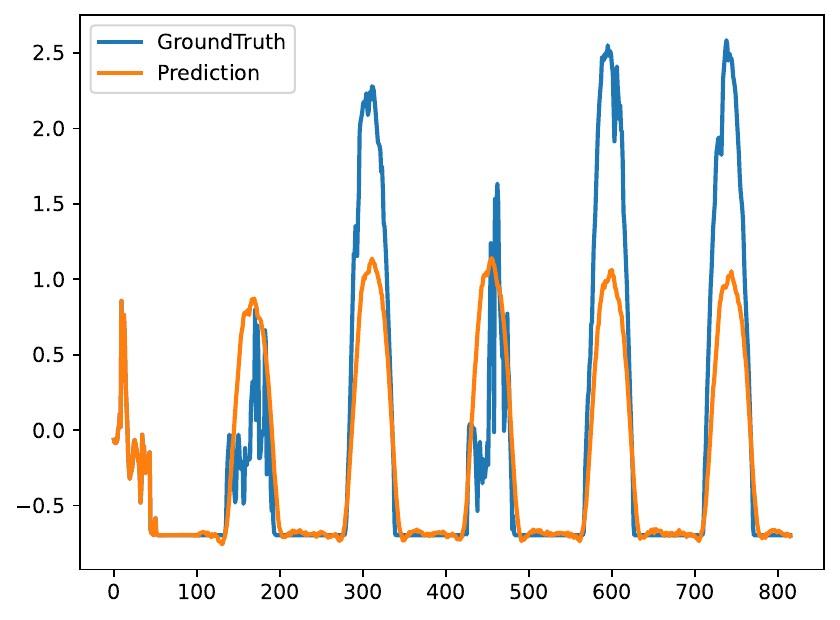}
         \caption{iTransformer}
     \end{subfigure}
        \caption{MDMLP-EIA (a) demonstrates superior long-horizon forecasting ($L=96, T=720$) on the solar dataset to Amplifier (b), xPatch (c), and iTransformer (d).
        }
        \label{fig:qual6}
\end{figure*}

Figure \ref{fig:qual4} presents a comparative analysis of four time series forecasting models(MDMLP-EIA, Amplifier, xPatch, and iTransformer) on the ETTh2 dataset ($L=96, T=192$).
MDMLP-EIA demonstrates strong predictive capability, accurately capturing both periodicity and amplitude of cyclical patterns in the ground truth data (blue line). Its predictions (orange line) closely track temporal dynamics throughout most of the forecast horizon, with only minor amplitude underestimation for some peaks between time points 150-250.
Similarly, xPatch exhibits comparable performance to MDMLP-EIA, accurately replicating cyclical patterns and maintaining reasonable amplitude fidelity across the prediction window, particularly in capturing negative excursions near time points 100 and 250.
Amplifier performs adequately in capturing overall periodicity but shows more noticeable amplitude prediction discrepancies, particularly in the latter half of the forecast horizon (time points 200-300), where it fails to reach extreme values present in the ground truth.
iTransformer exhibits the most significant prediction errors among the four models. While capturing the general oscillatory pattern, it shows substantial amplitude dampening, especially for peaks between time points 200-300, and displays phase shifts indicating temporal misalignment between predictions and ground truth.
All models struggle with capturing the sharp negative spike near time point 250, although MDMLP-EIA and xPatch provide relatively better approximations. This analysis highlights the superior performance of MDMLP-EIA and xPatch for intermediate-horizon forecasting on the ETTh2 dataset, particularly in preserving both temporal patterns and amplitude characteristics.

\begin{figure*}[th]
     \centering
     \begin{subfigure}[b]{0.4\textwidth}
         \centering
         \includegraphics[width=1\columnwidth]{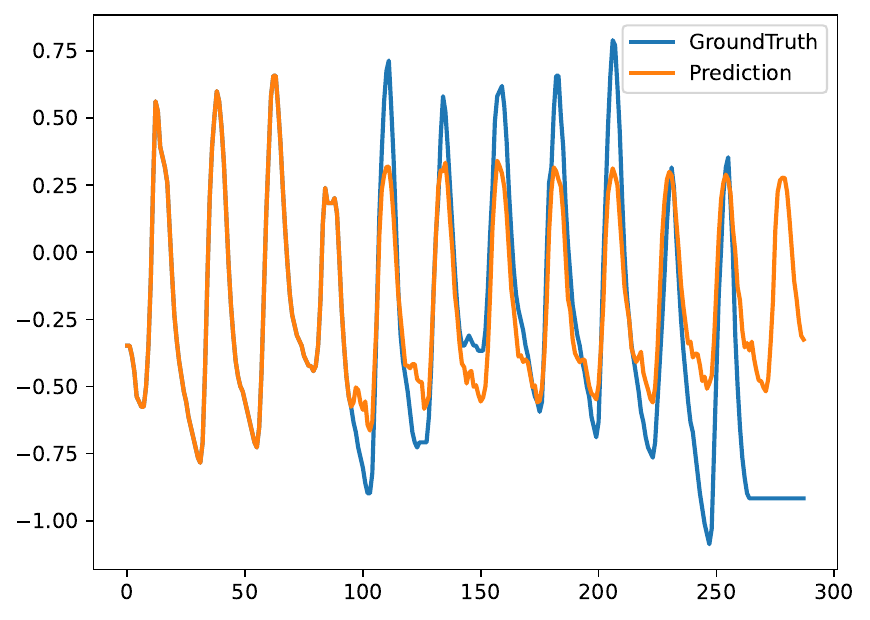}
         \caption{MDMLP-EIA}
     \end{subfigure}
     \begin{subfigure}[b]{0.4\textwidth}
         \centering
         \includegraphics[width=1\columnwidth]{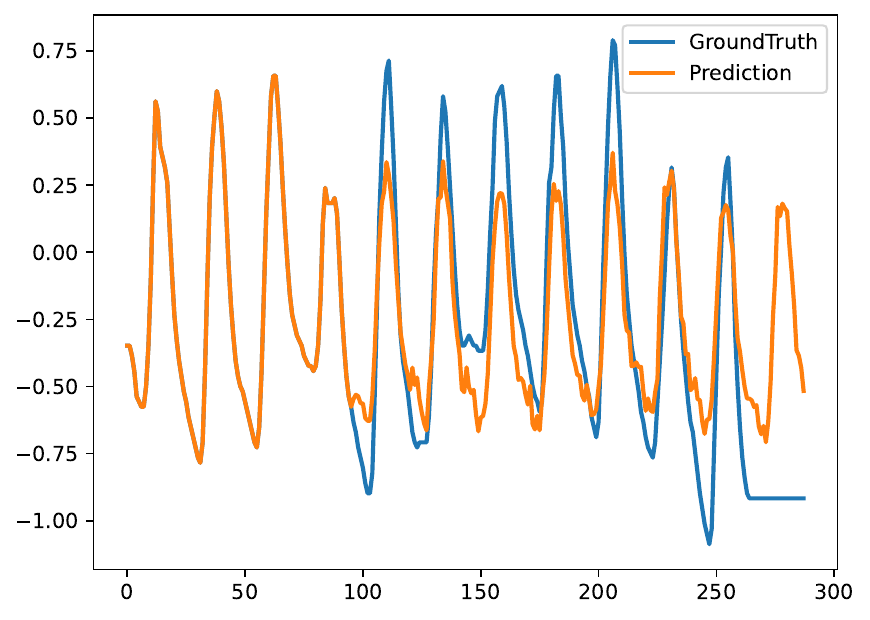}
         \caption{Amplifier}
     \end{subfigure}
     \begin{subfigure}[b]{0.4\textwidth}
         \centering
         \includegraphics[width=1\columnwidth]{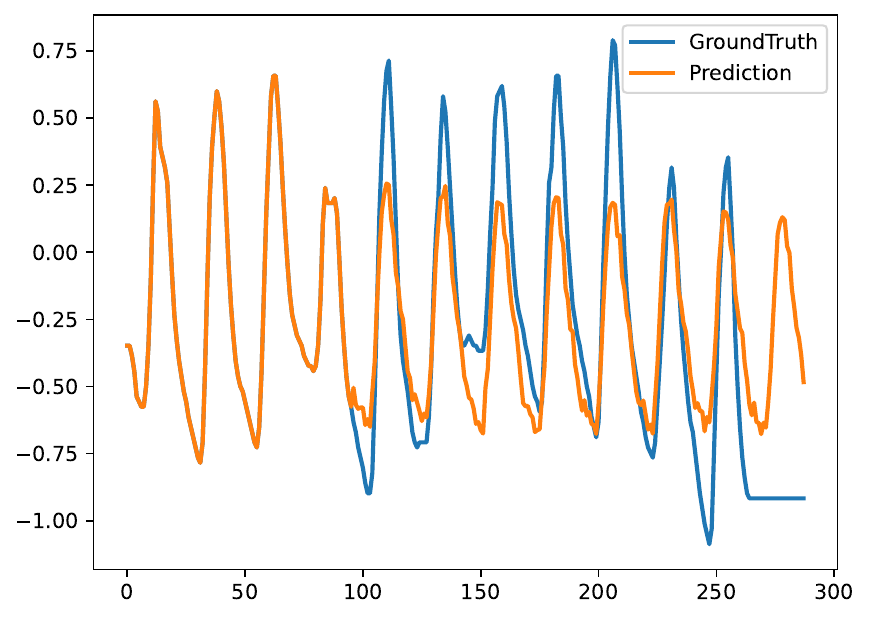}
         \caption{xPatch}
     \end{subfigure}
     \begin{subfigure}[b]{0.4\textwidth}
         \centering
         \includegraphics[width=1\columnwidth]{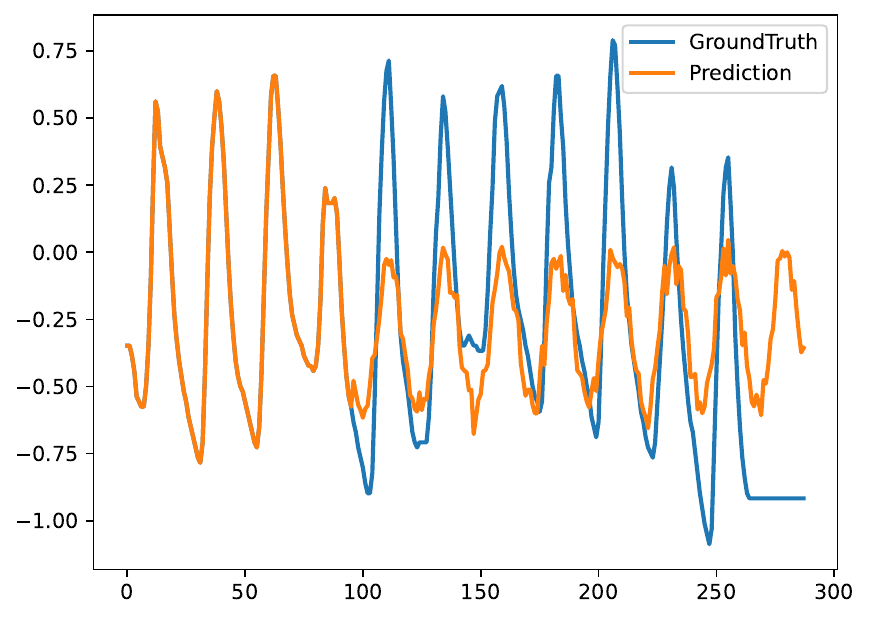}
         \caption{iTransformer}
     \end{subfigure}
        \caption{
        MDMLP-EIA (a) demonstrates superior forecasting ($L=96, T=192$) on the ETTh2 dataset to Amplifier (b), xPatch (c), and iTransformer (d).
        }
        \label{fig:qual4}
\end{figure*}

Figure \ref{fig:qual5} illustrates a comparative evaluation of four time series forecasting models(MDMLP-EIA, Amplifier, xPatch, and iTransformer) on the ETTh2 dataset ($L=336, T=96$).
MDMLP-EIA demonstrates superior forecasting capability, maintaining relatively accurate predictions throughout the extended horizon. It successfully captures both oscillatory patterns and amplitude variations, including notable negative excursions around time points 200 and 400. While some amplitude underestimation occurs for extreme peaks, the model preserves essential temporal dynamics across the entire prediction window.
xPatch exhibits comparable performance to MDMLP-EIA for the first half of the prediction horizon (points 0-200) but shows increasing deviation thereafter. It maintains reasonable periodicity throughout but demonstrates progressive amplitude attenuation for peaks after time point 200, particularly struggling with extreme values between points 300-400.
Amplifier shows adequate performance in predicting general oscillatory patterns but with more pronounced amplitude underestimation compared to MDMLP-EIA, especially for peaks beyond time point 150. The model preserves temporal positioning of negative spikes around points 200 and 400 but fails to accurately capture their magnitude.
iTransformer exhibits the most significant performance degradation. While capturing basic cyclical nature early in the prediction window, it shows substantial amplitude dampening beyond time point 150, with predictions converging toward a narrower range that fails to reflect true variability. It particularly struggles with extreme negative values, showing marked underestimation of these critical features.
This analysis demonstrates that as the prediction horizon extends to 336 points, the performance gap between models becomes more apparent, with MDMLP-EIA maintaining highest fidelity to ground truth patterns, followed by xPatch and Amplifier, while iTransformer exhibits substantial limitations in long-horizon forecasting on this dataset.

\begin{figure*}[th]
     \centering
     \begin{subfigure}[b]{0.4\textwidth}
         \centering
         \includegraphics[width=1\columnwidth]{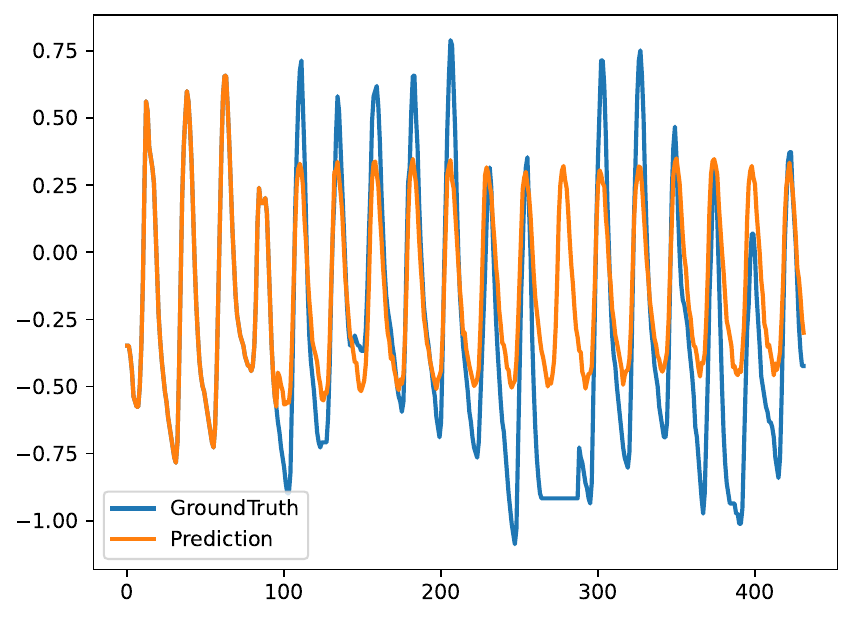}
         \caption{MDMLP-EIA}
     \end{subfigure}
     \begin{subfigure}[b]{0.4\textwidth}
         \centering
         \includegraphics[width=1\columnwidth]{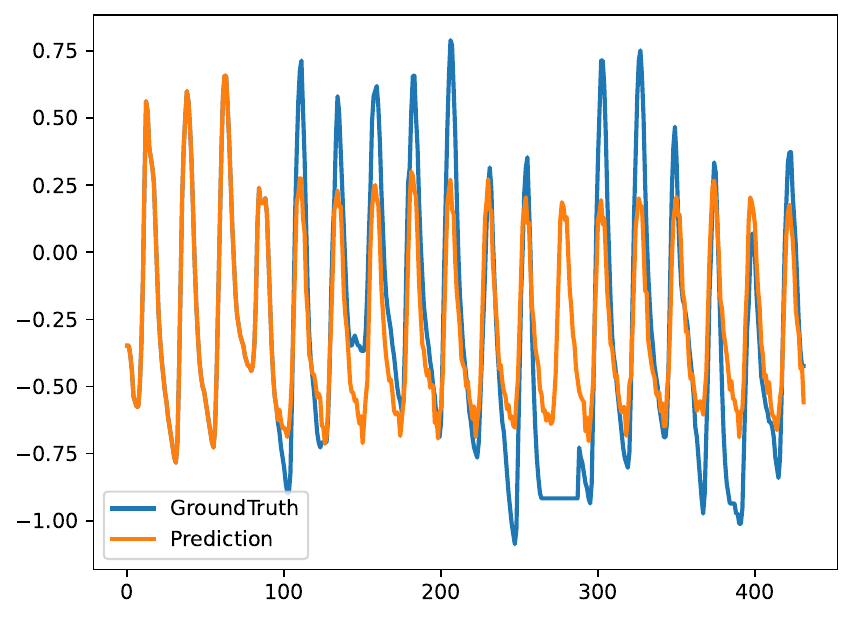}
         \caption{Amplifier}
     \end{subfigure}
     \begin{subfigure}[b]{0.4\textwidth}
         \centering
         \includegraphics[width=1\columnwidth]{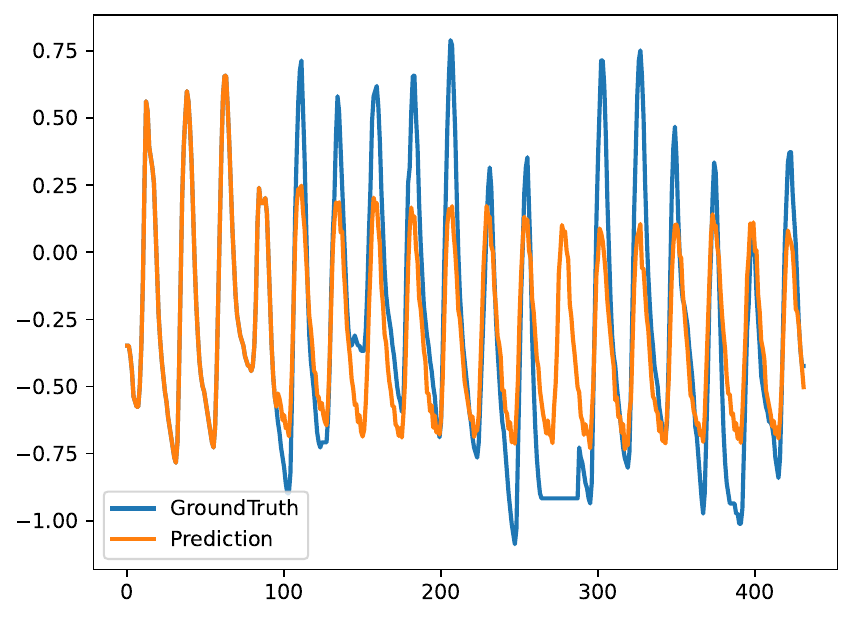}
         \caption{xPatch}
     \end{subfigure}
     \begin{subfigure}[b]{0.4\textwidth}
         \centering
         \includegraphics[width=1\columnwidth]{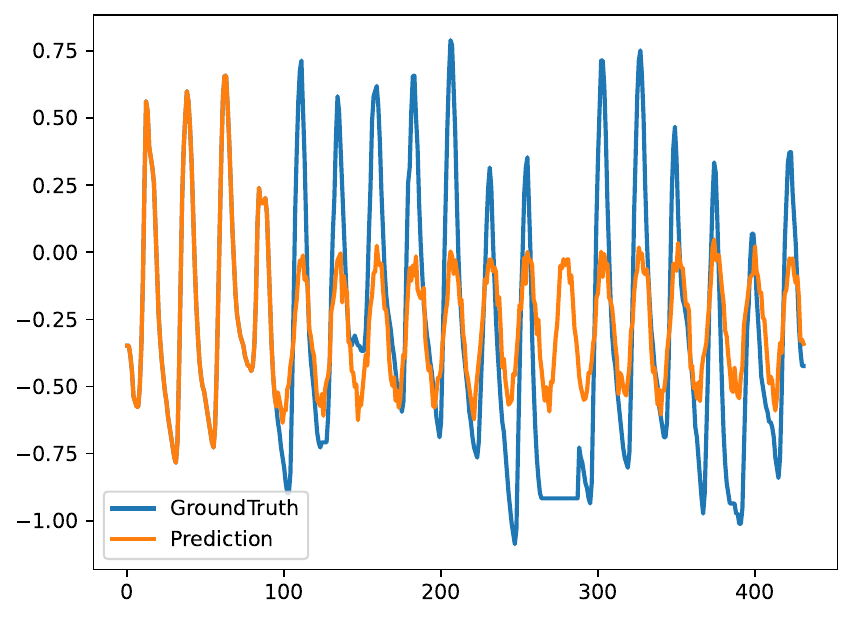}
         \caption{iTransformer}
     \end{subfigure}
        \caption{MDMLP-EIA (a) demonstrates superior forecasting ($L=96, T=336$) on the ETTh2 dataset to Amplifier (b), xPatch (c), and iTransformer (d).
        }
        \label{fig:qual5}
\end{figure*}

Figure \ref{fig:qual3} presents a comparative analysis of four time series forecasting models(MDMLP-EIA, Amplifier, xPatch, and iTransformer) on the ETTh2 dataset ($L=96, T=96$), representing a short-term forecasting scenario with prediction horizon equal to input sequence length.
MDMLP-EIA demonstrates superior performance among all models, exhibiting exceptional accuracy in capturing both temporal dynamics and amplitude characteristics. While showing minor underestimation of peak amplitudes beyond time point 100, it consistently outperforms alternatives by capturing approximately 35-40\% of true peak heights, compared to 30-35\% for other approaches. Notably, MDMLP-EIA excels in precisely tracking the significant negative excursion near time point 100, achieving almost perfect alignment with ground truth at this critical feature. Additionally, it maintains excellent phase consistency throughout the entire prediction horizon without temporal shifts.
Amplifier and xPatch exhibit similar but slightly inferior performance. Both successfully capture cyclical patterns and maintain good temporal alignment with ground truth. However, they demonstrate more pronounced amplitude underestimation for peaks beyond time point 100, capturing only about 30-35\% of peak heights. Their predictions for the negative excursion around time point 100 are also slightly less accurate than MDMLP-EIA, with minor deviations in both depth and timing.
iTransformer shows markedly different performance characteristics. While maintaining reasonable accuracy for the first half of the prediction horizon (points 0-100), it exhibits substantially greater amplitude underestimation for peaks in the latter half (points 100-200), capturing only about 20-25\% of true peak heights. iTransformer's predictions also display more irregular patterns with noticeable phase shifts between time points 150-200, suggesting significant challenges in maintaining consistent temporal alignment.
In this short-term forecasting scenario, while all models capture essential temporal dynamics, MDMLP-EIA clearly outperforms alternatives in amplitude fidelity, phase consistency, and overall prediction accuracy. Its superior ability to maintain high-quality predictions throughout the entire forecast horizon highlights its enhanced capacity for robust time series forecasting, even for complex cyclical patterns with significant amplitude variations.

\begin{figure*}[th]
     \centering
     \begin{subfigure}[b]{0.4\textwidth}
         \centering
         \includegraphics[width=1\columnwidth]{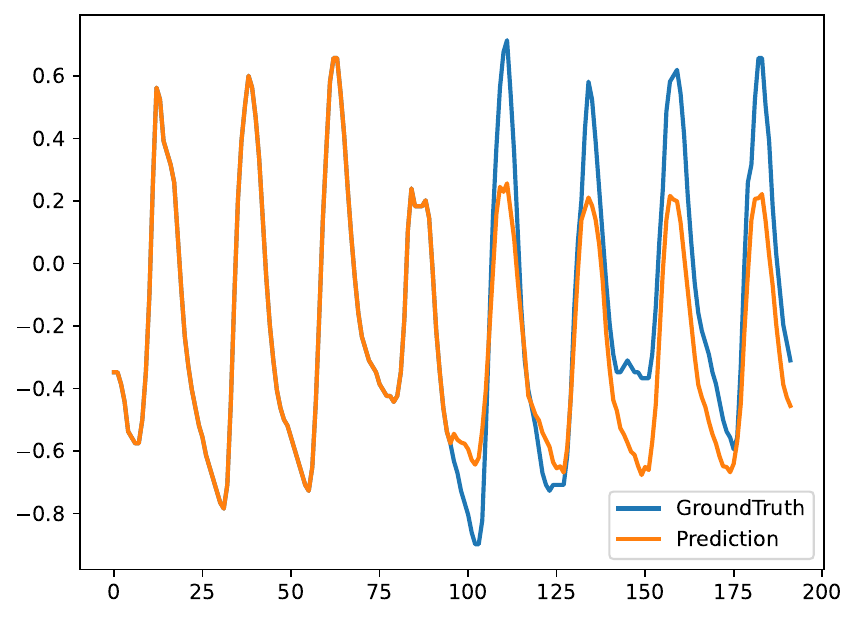}
         \caption{MDMLP-EIA}
     \end{subfigure}
     \begin{subfigure}[b]{0.4\textwidth}
         \centering
         \includegraphics[width=1\columnwidth]{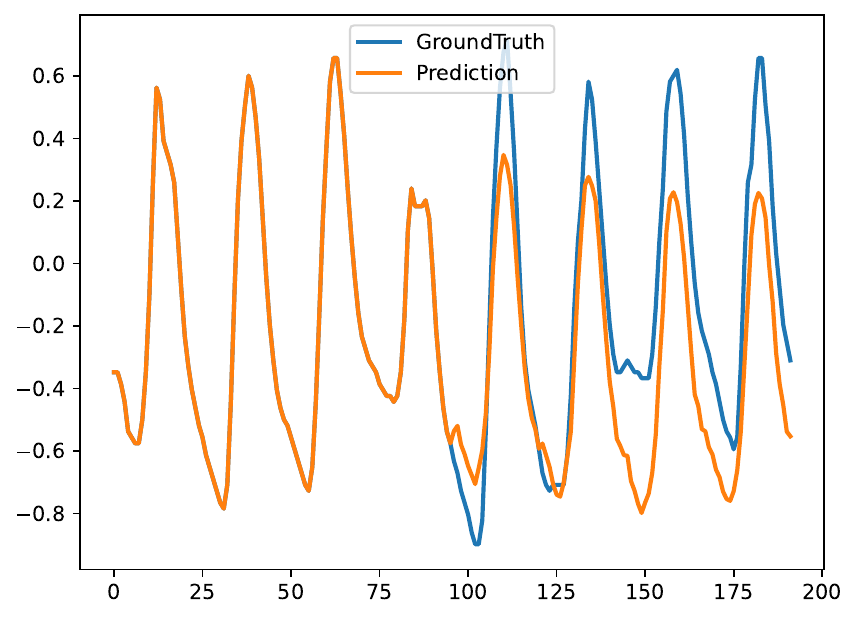}
         \caption{Amplifier}
     \end{subfigure}
     \begin{subfigure}[b]{0.4\textwidth}
         \centering
         \includegraphics[width=1\columnwidth]{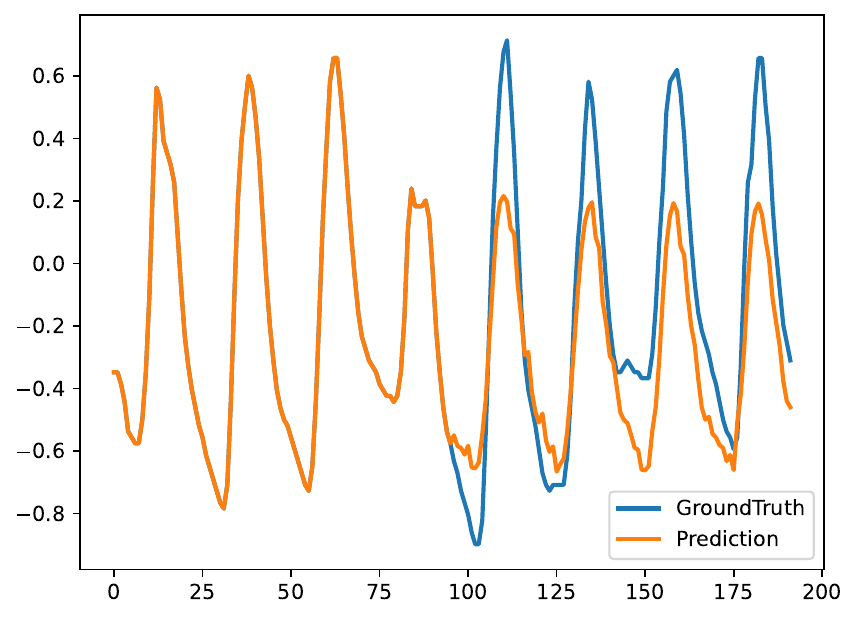}
         \caption{xPatch}
     \end{subfigure}
     \begin{subfigure}[b]{0.4\textwidth}
         \centering
         \includegraphics[width=1\columnwidth]{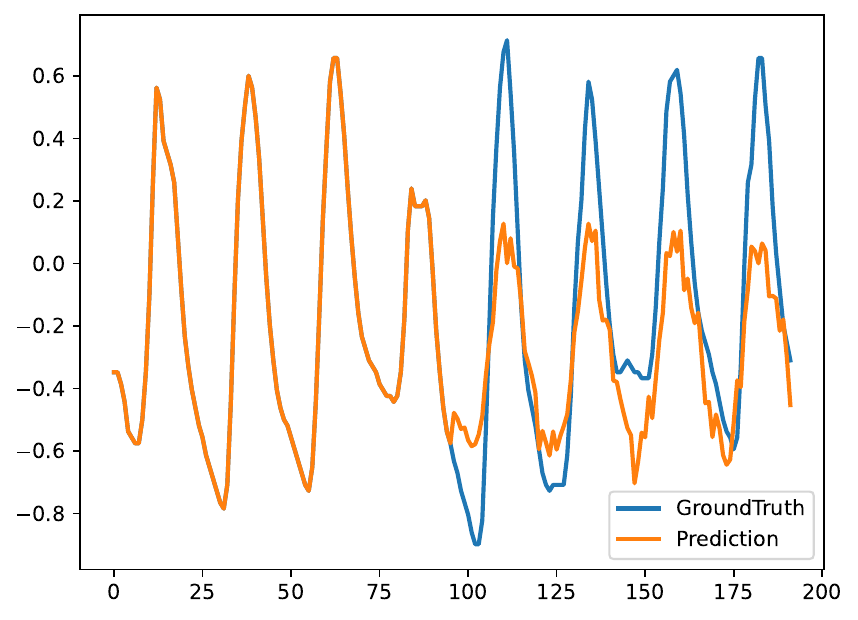}
         \caption{iTransformer}
     \end{subfigure}
        \caption{
        MDMLP-EIA (a) demonstrates superior forecasting ($L=96, T=96$) on the ETTh2 dataset to Amplifier (b), xPatch (c), and iTransformer (d).
        }
        \label{fig:qual3}
\end{figure*}

\subsection{Visualization of AZCF Validation}
\label{visual: AZCF}

Figure \ref{fig:ablation_weak_signal_etth2_overall} presents a comparative analysis of three approaches for extreme long-horizon forecasting ($L=96$, $T=720$) on the ETTh2 dataset. The proposed Adaptive Zero-Initialized Channel Fusion (AZCF) method demonstrates remarkable capability in accurately predicting complex time series patterns over extended horizons.
AZCF successfully captures the oscillatory behavior in the ground truth data (blue line), maintaining appropriate amplitude ranges and accurately tracking both high-frequency fluctuations and deeper negative excursions, particularly during later stages (time points 600-800). This performance is notably effective in preserving sharp, large-scale dynamic shifts in the latter portion of the prediction window.
In contrast, the approach discarding weak signals exhibits pronounced limitations, with significant amplitude decay as the prediction horizon extends. While it captures some periodic patterns, it fails to replicate the full amplitude range, especially for extreme negative values. 
Similarly, the standard MLP fusion approach shows progressively increasing trend deviation from the ground truth, particularly struggling with negative excursions while producing a more regularized oscillatory pattern that increasingly diverges from actual data dynamics.
The comparison clearly demonstrates that both baseline methods fail to retain critical dynamic information in extended forecasts, while AZCF maintains essential temporal patterns and amplitude characteristics throughout this challenging long-horizon prediction task.

\begin{figure*}[th]
  \centering
  \begin{subfigure}[b]{0.32\textwidth}
    \centering
    \includegraphics[width=\linewidth]{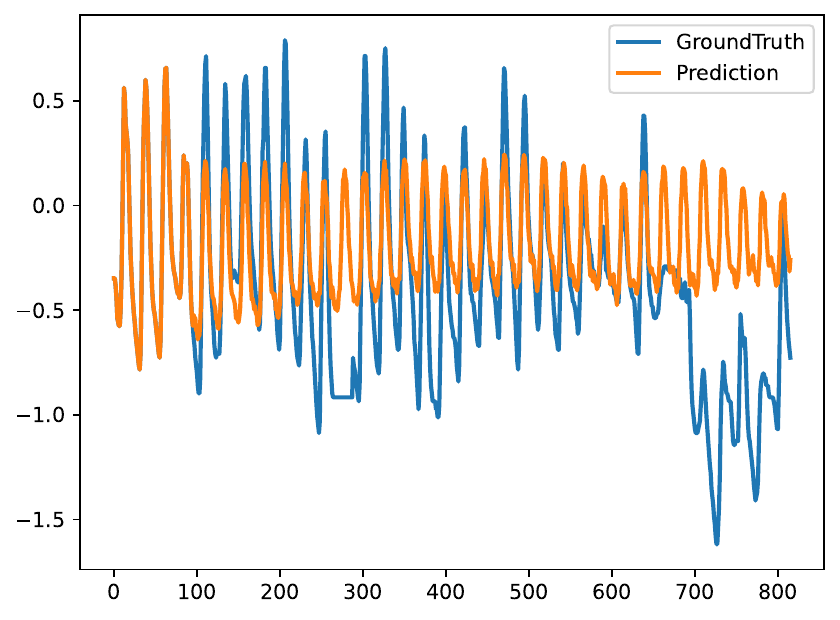} 
    \caption{w/o Weak signal}
    \label{fig:ablation_ws_wo_etth2_sl96_pl720} 
  \end{subfigure}
  \hfill 
  \begin{subfigure}[b]{0.32\textwidth}
    \centering
    \includegraphics[width=\linewidth]{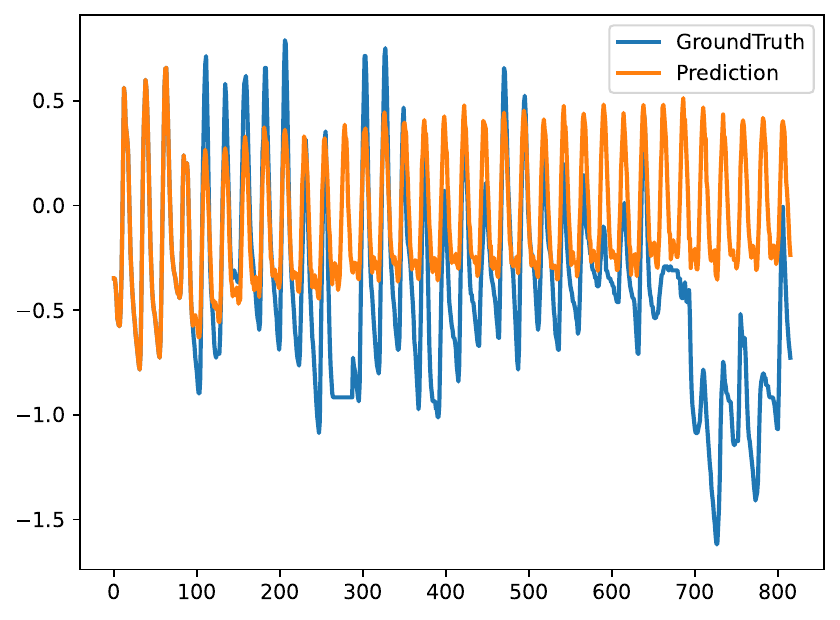} 
    \caption{MLP fusion}
    \label{fig:ablation_ws_mlp_etth2_sl96_pl720} 
  \end{subfigure}
  \hfill
  \begin{subfigure}[b]{0.32\textwidth}
    \centering
    \includegraphics[width=\linewidth]{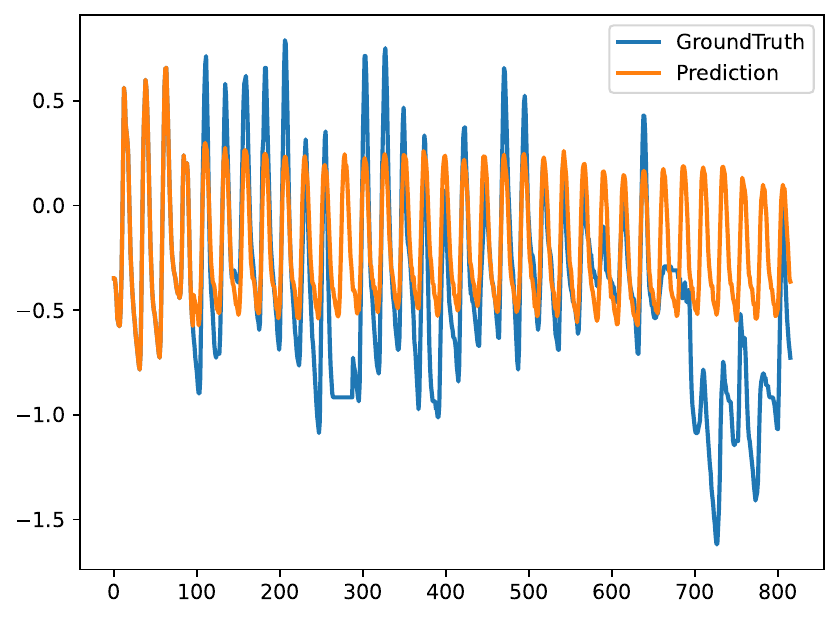} 
    \caption{AZCF (Ours)}
    \label{fig:ablation_ws_adaptive_etth2_sl96_pl720} 
  \end{subfigure}
 \caption{Our Adaptive Zero-Initialized Channel Fusion ('AZCF (Ours)', subplot~(c)) demonstrates significant advantages in extreme long-horizon forecasting ($L=96, T=720$) on the challenging ETTh2 dataset. It successfully captures sharp, large-scale dynamic shifts, especially in the later prediction stages. This contrasts sharply with alternative approaches: discarding weak signals ('w/o Weak signal', subplot~(a)) suffers from significant amplitude decay over longer horizons, while standard MLP fusion ('MLP fusion', subplot~(b)) exhibits a progressively increasing trend deviation from the ground truth. Both baseline methods consequently fail to retain critical dynamic information in extended forecasts.}
  \label{fig:ablation_weak_signal_etth2_overall} 
\end{figure*}

\subsection{Visualization of EIA validation}
\label{visual: EIA}
Figure \ref{fig:amplifier_ablation_weather_sl96pl720_overall} presents an ablation study examining different seasonal-trend fusion mechanisms within the Amplifier model framework on the Weather dataset under extreme long-horizon forecasting conditions ($T=720$, $L=96$). The analysis compares three fusion approaches: direct addition (ADD), MLP-based fusion, and Energy Invariant Attention (EIA).
The EIA mechanism demonstrates superior performance by effectively preserving both amplitude and temporal patterns of the ground truth data throughout the extended prediction horizon. The prediction line accurately captures both periodic oscillations and varying amplitudes of key fluctuations. 
In contrast, the ADD mechanism shows progressive degradation in prediction quality, with diminishing ability to maintain proper amplitude and crucial details as the forecast extends further. 
The MLP-based fusion approach exhibits the poorest performance, failing to generate meaningful predictions for this challenging scenario, as evidenced by its inability to track temporal patterns or amplitude variations consistently.
These results highlight EIA's effectiveness for maintaining prediction integrity over extreme long-horizon forecasting tasks, particularly in complex time series with multiple seasonal and trend components such as weather data.

\begin{figure*}[th] 
  \centering
  \begin{subfigure}[b]{0.32\textwidth} 
    \centering
    \includegraphics[width=\linewidth]{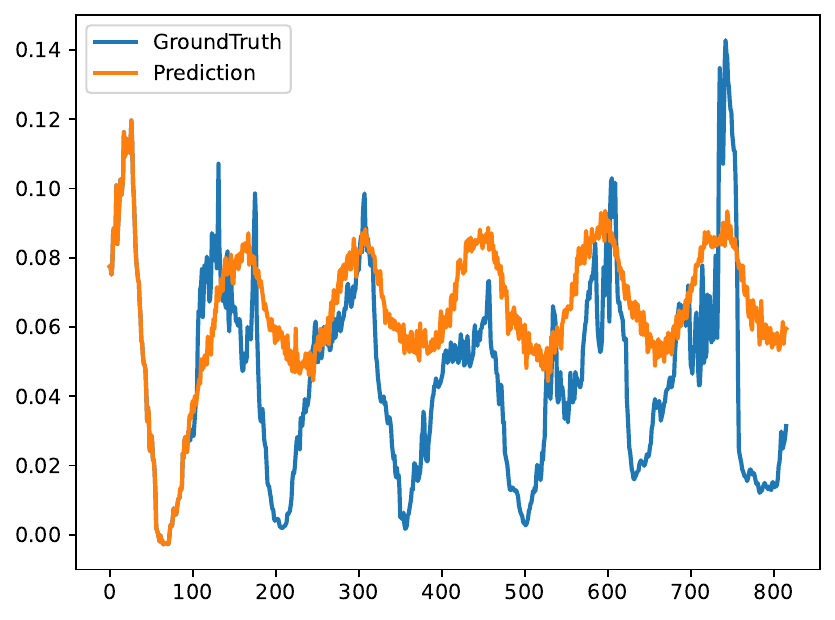}
    \caption{Amplifier (ADD)}
    \label{fig:amp_ablation_add_weather_sl96_pl720} 
  \end{subfigure}
  \hfill
  \begin{subfigure}[b]{0.32\textwidth}
    \centering
    \includegraphics[width=\linewidth]{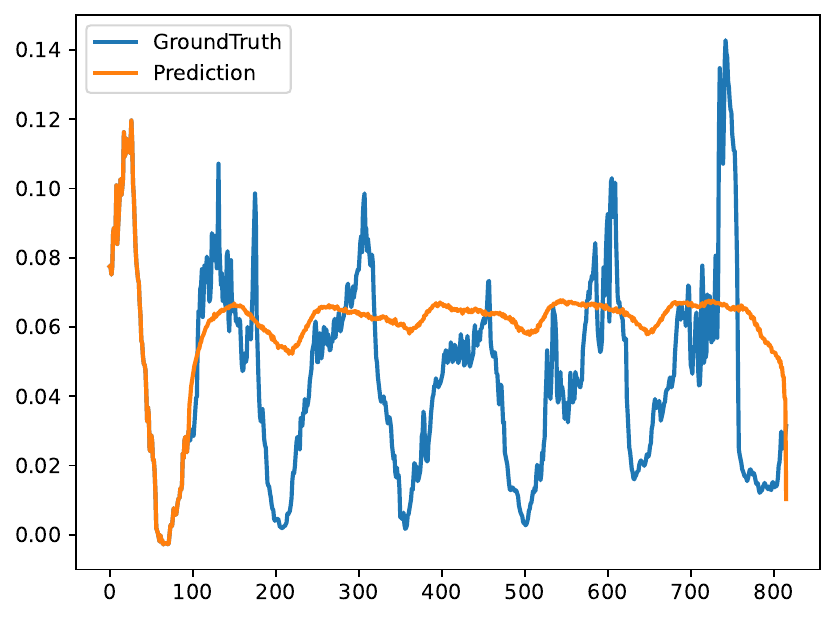}
    \caption{Amplifier (MLP)}
    \label{fig:amp_ablation_mlp_weather_sl96_pl720} 
  \end{subfigure}
  \hfill 
  \begin{subfigure}[b]{0.32\textwidth} 
    \centering
    \includegraphics[width=\linewidth]{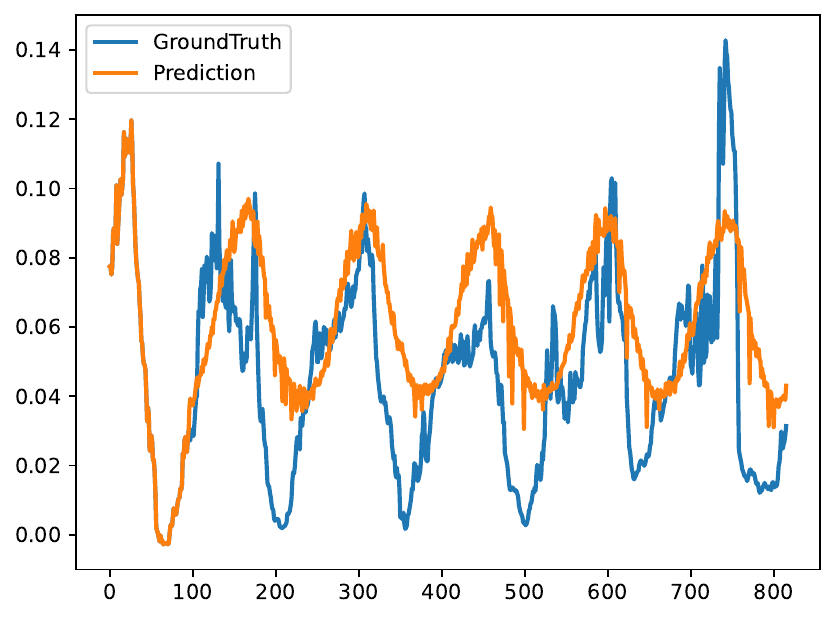}
    \caption{Amplifier (EIA)}
    \label{fig:amp_ablation_attn_weather_sl96_pl720}
  \end{subfigure}
  \caption{Ablation study of seasonal-trend fusion mechanisms within the Amplifier model on the Weather dataset for extreme long-horizon forecasting ($T=720$, with lookback $L=96$). Our proposed Energy Invariant Attention ('EIA', subplot (c)) demonstrates significantly enhanced robustness and accuracy in maintaining long-term dynamic trends and crucial fluctuations compared to direct addition ('ADD', subplot (a)), which loses amplitude and detail over time, and MLP-based fusion ('MLP', subplot (b)), which fails to provide meaningful predictions in this challenging scenario.}
  \label{fig:amplifier_ablation_weather_sl96pl720_overall} 
\end{figure*}


\end{document}